\theoremstyle{plain}
\newtheorem{theorem}{Theorem}
\newtheorem{lemma}[theorem]{Lemma}
\newtheorem{proposition}[theorem]{Proposition}
\newtheorem{condition}[theorem]{Condition}
\theoremstyle{definition}
\newtheorem{definition}[theorem]{Definition}
\newtheorem{assumption}[theorem]{Assumption}
\theoremstyle{remark}
\newtheorem{remark}[theorem]{Remark}
\definecolor{darkblack}{rgb}{0, 0, 0.5}
\crefname{equation}{Eq.}{Eqs.}
\Crefname{equation}{Eq.}{Eqs.}
\crefname{figure}{Figure}{Figures}
\crefname{table}{Table}{Tables}
\crefname{section}{Section}{Sections}
\crefname{theorem}{Thm.}{Thms.}
\Crefname{theorem}{Theorem}{Theorems}
\crefname{lemma}{Lem.}{Lems.}
\Crefname{lemma}{Lemma}{Lemmas}
\crefname{proposition}{Prop.}{Props.}
\Crefname{proposition}{Proposition}{Propositions}
\crefname{corollary}{Cor.}{Cors.}
\Crefname{corollary}{Corollary}{Corollaries}
\crefname{assumption}{Assumption}{Assumptions}
\Crefname{assumption}{Assumption}{Assumptions}
\crefname{definition}{Definition}{Definitions}
\crefname{remark}{Remark}{Remarks}
\crefname{condition}{Condition}{Conditions}
\crefname{example}{Example}{Examples}
\newcommand{\E}{\mathbb{E}}
\newcommand{\R}{\mathbb{R}}
\newcommand{\N}{\mathbb{N}}
\newcommand{\Prob}{\mathbb{P}}
\providecommand{\abs}[1]{\lvert#1\rvert}
\newcommand{\coloneqq}{\mathrel{\mathop:}=}
\providecommand{\deleted}[1]{}
\newcommand{\tianyi}[1]{}
\title{\textbf{Throughput-Optimal Scheduling Algorithms for LLM Inference and AI Agents}}
\author[1]{J.G. Dai}
\author[2]{Tianze Deng}
\author[1]{Yueying Li}
\author[3]{Tianyi Peng}
\affil[1]{School of Operations Research and Information Engineering, Cornell University\\
  \texttt{\{jd694,yl3469\}@cornell.edu}}
\affil[2]{Operations Management, Booth School of Business, University of Chicago\\
  \texttt{tianze1@uchicago.edu}}
\affil[3]{Decision, Risk and Operations, Columbia Business School, Columbia University\\
  \texttt{tianyi.peng@columbia.edu}}
\date{}
\begin{document}

\maketitle

\begin{abstract}
As demand for Large Language Models (LLMs) and AI agents grows rapidly, optimizing systems for efficient LLM inference becomes critical. While significant efforts have targeted system-level engineering, little has been explored from a mathematical modeling and queueing perspective.
In this paper, we develop the queueing fundamentals for LLM inference. In particular, we study the throughput aspect of LLM inference systems. We prove that a large class of `work-conserving' scheduling algorithms achieve maximum throughput for both individual requests and AI-agent workloads with directed acyclic graph (DAG) and fork-join routing topologies, establishing `work-conserving' as a key design principle for practitioners. Technically, we develop a fluid-limit framework for multi-class batched processing networks under $K$-FCFS scheduling, which may be of independent interest. Evaluations of real-world systems confirm that Orca and Sarathi-Serve are throughput-optimal, reassuring practitioners, while FasterTransformer and vanilla vLLM are not maximally stable and should be used with caution. Our analysis also reveals how constraints such as batch size limits and cyclic routing topologies complicate the throughput picture, pointing to rich open questions at the intersection of queueing theory and LLM system design.
\end{abstract}

\medskip
\noindent\textbf{Keywords:} LLM inference, queueing theory, scheduling algorithms, throughput optimality, AI agents

\bigskip

\section{Introduction}

Large Language Models (LLMs) have become the backbone of many AI-driven applications, requiring efficient LLM inference systems to meet latency and throughput requirements. LLM inference systems process each user request in two phases: \textit{prefill} (prompt processing) phase followed by \textit{decode} (token generation) phase, each with distinct computational and memory bandwidth characteristics. Since LLM servers generate tokens autoregressively\footnote{A token in an autoregressive transformer is the basic input or output unit—typically a word, subword, or character—that the model sequentially predicts based on previously generated tokens.}, processing a single request requires running the LLM for multiple iterations, with each iteration generating one output token.  To optimize GPU utilization, it is important to batch processing multiple requests simultaneously. This paper studies performance guarantees for  various scheduling algorithms in forming batches.

To maximize throughput while meeting latency requirements, several LLM inference systems with different scheduling algorithms have been developed; \cref{tab:scheduling-algorithms} summarizes the four most widely studied ones. FasterTransformer~\citep{fastertransformer} performs batching at the request level and prioritizes the decode phase; however, when there are not enough decode requests to fill a batch, GPU utilization suffers. Orca~\citep{yu2022orca} improves upon this by enabling token-level batching with mixed batching---combining prefill and decode tokens in the same batch---and prioritizing prefill tokens so that new requests enter the decoding stage sooner. Vanilla vLLM\footnote{We use ``vanilla vLLM'' to refer to the original vLLM design~\citep{kwon2023efficient} without chunked prefill. The latest vLLM with chunked prefill enabled is work-conserving~\citep{vllm2025chunked}.}~\citep{kwon2023efficient} also prioritizes prefill but does not support mixed batching: each batch contains either only prefill or only decode tokens. More recently, Sarathi-Serve~\citep{agrawal2023sarathi} introduces chunked prefill with decode prioritization, limiting both the number of tokens per batch and the length of prefill chunks, which prevents requests with long prefills from blocking decoding. In practice, Sarathi-Serve is gaining popularity and presents competitive empirical performance compared to other incumbents. \footnote{See \cref{sec:incumbent} for more details on these algorithms.} 

\begin{table}[ht]
\centering
\caption{Summary of scheduling algorithms for LLM inference. See \cref{sec:incumbent} for formal definitions.}
\label{tab:scheduling-algorithms}
\small
\begin{tabular}{llll}
\toprule
Algorithm & Priority & Batching & Work-conserving \\
\midrule
FasterTransformer~\citep{fastertransformer} & Decode-first & No mixed batching & No \\
Vanilla vLLM~\citep{kwon2023efficient} & Prefill-first & No mixed batching & No \\
Orca~\citep{yu2022orca} & Prefill-first & Mixed batching & Yes \\
Sarathi-Serve~\citep{agrawal2023sarathi} & Decode-first & Chunked prefill + mixed & Yes \\
\bottomrule
\end{tabular}
\end{table}

Despite significant advances from the systems community, the field lacks a cohesive theoretical framework for evaluating and comparing these scheduling algorithms. Practitioners are often left to decide which system to adopt based on workload-specific benchmarking~\citep{databricks_llm_2023} or profiling tools~\citep{pytorch_profiler,nvidia_nsight}. Such evaluation is costly, as it must be repeated across models, hardware configurations, and workload distributions. Queueing theory offers a principled alternative: the operations research community has developed robust theoretical foundations for scheduling and resource allocation across diverse domains, including cloud computing~\citep{mitzenmacher2001power,grosof_new_2023,tirmazi_borg_2020,patke_queue_2024,ghodsi_dominant_2011,zhang2023shepherd}, operating systems and networking~\citep{yu2021nsdi,stoica1998core,iyer2023achieving,li2024libpreemptible}, telecommunications~\citep{whitt1993tail,maglaris1988performance,canetti1995bounding}, and manufacturing and healthcare operations~\citep{hu_optimal_2022,chen_optimal_2025,zychlinski_managing_2023}.

Motivated by this gap, we analyze LLM inference systems through queueing theory to address the fundamental question \begin{quote} \it{What is the fundamental limit of the maximal throughput rate for an LLM inference system, and what classes of scheduling algorithms can achieve this limit?} \end{quote}

We develop a novel queueing  model of  the LLM server, identify a broad class of \textbf{work-conserving} scheduling algorithms
that are proved to be throughput optimal in our queueing model, which is based on realistic system profiling and performance modeling. Our analysis confirms that scheduling algorithms like those in Sarathi-Serve and Orca are throughput optimal, thereby offering practical assurance to system designers. In contrast, non-mixed batching algorithms---such as those in FasterTransformer and vanilla vLLM---are inherently suboptimal in throughput and can become unstable under moderate load. These findings contribute to a deeper theoretical foundation for LLM system design: to optimize latency performance, one should optimize among throughput-optimal scheduling algorithms.

Our work supports a vision stated recently in  \cite{mitzenmacher2025queueing}, 
\begin{quote}
    \textit{``LLM systems present a wealth of new queueing and scheduling challenges that expand on problems from traditional queueing systems. While modeling LLM systems remains largely unexplored
in the queueing community, we believe queueing theory insights may lead to better scheduling systems."}
\end{quote}

Specifically, our work extends batch queueing theory by addressing LLM-specific challenges: dual-phase processing with distinct resource profiles, dynamic batch formation, and request interdependencies. We further analyze \textbf{AI-agent workloads}, in which a network of LLM servers collaboratively processes agent-level tasks. We show that work-conserving scheduling remains throughput-optimal for single servers with multiple request classes, DAG routing networks, and fork-join topologies. However, inspired by the Rybko--Stolyar network~\citep{rybko1992ergodicity}, we construct an example showing that work-conserving scheduling can fail under cyclic routing---an outcome that may be unintuitive to practitioners. We also analyze the \textbf{maximal batch size constraint}, which limits the number of concurrent requests in a batch. This constraint fundamentally alters the stability region from a scalar threshold to a two-dimensional convex hull, and we demonstrate that even work-conserving algorithms can fail under certain configurations.

Technically, we develop a fluid-limit framework for multi-class batched processing networks under $K$-FCFS scheduling. This framework is the key analytical tool that enables our throughput-optimality results across single-server, DAG, and fork-join settings---results that, to the best of our knowledge, are new to the queueing literature for batch-service networks. The framework may be of independent interest to the queueing community.

Our contributions include:

\begin{itemize}
  \item A formal queueing-theoretic framework for LLM inference scheduling that captures the unique characteristics of the prefill and decode phases, explicitly modeling the \textit{batch processing time}. This modeling effort lowers the barrier for the queueing theory community to contribute to improving LLM inference systems.

  \item A rigorous theoretical analysis establishing that \textit{work-conserving} scheduling algorithms achieve maximum throughput in single-instance LLM inference, highlighting this as a core design principle for practitioners. 

  \item An examination of widely used scheduling algorithms revealing that Orca and Sarathi-Serve are throughput-optimal, while FasterTransformer and vanilla vLLM can become unstable under moderate load (\cref{sec:incumbent}). These findings support the adoption of Sarathi-Serve--type algorithms and offer practical guidance for system selection.

  \item An extension to AI-agent workloads modeled as multi-class batch-service processing networks (\cref{sec:AI-agent}). We prove throughput optimality of work-conserving scheduling under DAG and fork-join routing topologies, while showing that cyclic routing can lead to instability even under work-conserving policies.

  \item An analysis of the batch size constraint (\cref{sec:extension-to-active-request-constraint}), which reveals a richer two-dimensional stability region. We show that work-conserving algorithms can fail at certain operating points, highlighting the need for adaptive scheduling strategies.
\end{itemize}
Our work emphasizes the practical implications of scheduling algorithm design through both theoretical analysis and empirical validation on real-world workloads. Figure~\ref{fig:instable} illustrates how different scheduling algorithms handle request dynamics differently, with non-work-conserving algorithms leading to queue blowup in real systems. We hope this research contributes toward building a foundation for principled scheduling algorithm design in LLM inference systems and encourages cross-disciplinary collaboration between the queueing theory and systems communities.
    

\begin{figure}[h]
    \centering
    \begin{subfigure}[t]{0.49\linewidth}
        \centering
        \includegraphics[width=\linewidth]{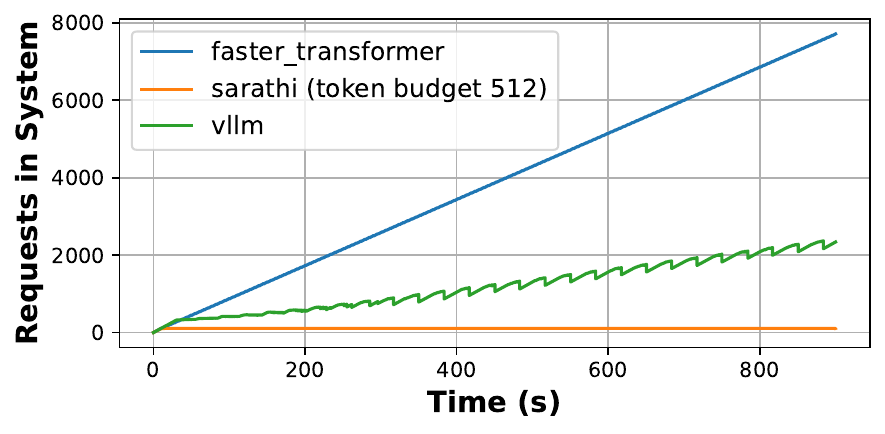}
        \label{fig:throughput}
    \end{subfigure}
    \hfill
    \begin{subfigure}[t]{0.49\linewidth}
        \centering
        \includegraphics[width=\linewidth]{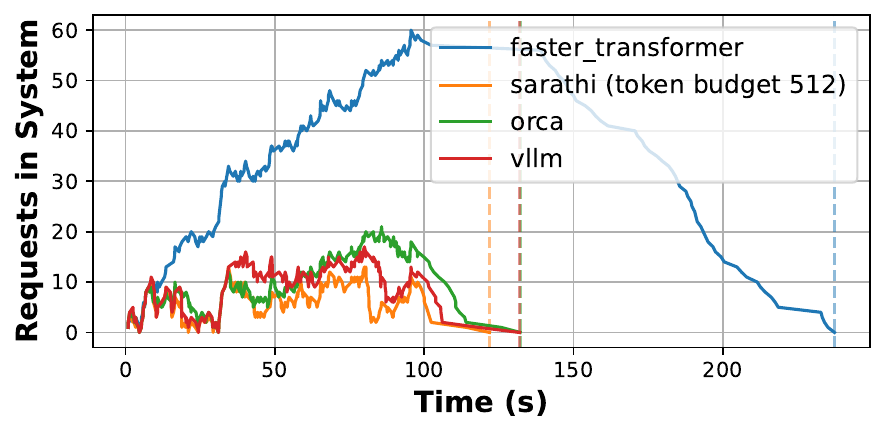}
    \end{subfigure}
    \caption{[Left] Experimental results demonstrating the instability of FasterTransformer and vLLM. The setup is using a CodeLlama-34B with A100, requests generator is Poisson with an average of 129 prefill and 112 decode tokens, arriving with 14 queries per second. [Right] Under a finite, fixed workload with the same arrival rate driven by production code traces~\citep{patel2023splitwise}, the throughput-optimal Sarathi-Serve algorithm is also better in latency (in terms of the total requests in the system across time (orange)). }
    \label{fig:instable}
\end{figure}

\subsection{Background}
\label{sec:tutorial}

\textbf{LLM generative inference.} When processing an input request, LLM inference consists of two phases: \textit{prefill} and \textit{decoding}. The prefill phase processes the entire request to compute the KV cache (Key-Value cache, which stores the attention keys and values for each transformer layer to avoid redundant computation) and generates the initial response token in a single step. The decoding phase then utilizes the prior contexts or KV cache to generate subsequent tokens one at a time. These phases have distinct resource utilization patterns: prefill is compute-bound, while decoding is memory I/O-bound. Conventional LLM inference systems place both phases on the same GPU group despite their different computational characteristics to maximize resource utilization. While we acknowledge that some recent works propose to disaggregate prefill and decode stages on separate GPUs~\citep{patel2023splitwise,zhong2024distserve}, the modeling of the p/d-disaggregation is considered analogous to our work.

\textbf{Inference goal.} Two critical metrics evaluate LLM inference performance: throughput and inference latency. Throughput measures the token generation rate within a given timeframe. Inference latency represents the time required to complete a request from start to finish; common latency metrics include TTFT (time to first token, measuring prefill latency) and TBT (time between tokens, measuring decoding latency).

For online inference---requests that need immediate responses (e.g., chatbot queries)---the common practice is to maximize throughput subject to latency service level objective (SLO) constraints, where SLO attainment measures the percentage (e.g., 99\%) of requests fulfilled within a predefined latency target. For offline inference---requests that can accommodate longer response times (e.g., large-scale document processing, overnight analytics)---the main focus is throughput.

In production cloud systems from a major cloud service provider, offline inference accounts for a non-trivial portion of the overall workload (\cref{fig:offline}), making throughput optimization a first-order concern. This paper focuses on throughput and leaves the study of SLO attainment to future work.

\begin{figure}
  \centering
  \includegraphics[width=0.75\linewidth]{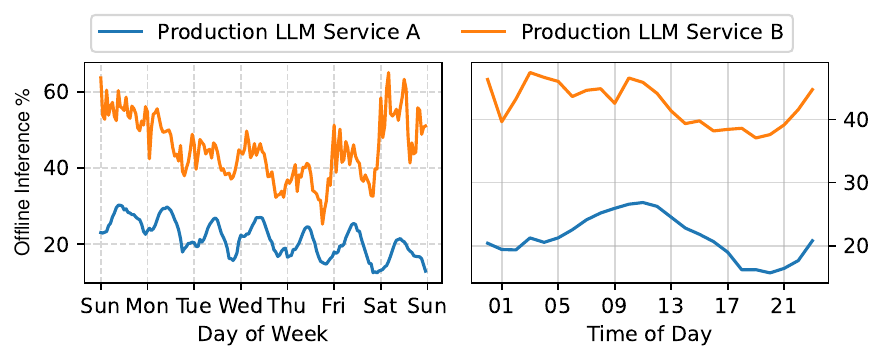}
  \caption{Percentage of offline inference demand for two LLM services (A and B) running in a production cloud data center over a week (left) and during a day (right), demonstrating the importance of optimizing the throughput of LLM scheduling~\citep{li2025ecoserve}. }
  \label{fig:offline}
\end{figure}



\textbf{Batch scheduling in LLM inference.} Modern LLM inference systems leverage continuous batching~\citep{yu2022orca}, which combines prefill and decode tokens in the same batch to improve GPU utilization. However, mixing the two phases creates interference: even a single long prefill can substantially degrade decode performance. Different scheduling algorithms manage this tension differently (\cref{tab:scheduling-algorithms}): FasterTransformer~\citep{fastertransformer} and vanilla vLLM~\citep{kwon2023efficient} avoid mixing entirely, Orca~\citep{yu2022orca} allows mixed batching with prefill priority, and Sarathi-Serve~\citep{agrawal2023sarathi} uses chunked prefill with decode priority. Understanding which of these strategies achieves optimal throughput is fundamentally a queueing-theoretic question.

Classical batch queueing theory has studied stochastic service capacity~\citep{chang2005performance,janssen2005analytic} and multi-class batch service~\citep{baetens2018delay,reddy1993scheduling}, but LLM inference introduces new challenges: dual-phase processing with distinct resource profiles, dynamic batch formation, and request interdependencies. This direction is attracting growing interest: \cite{mitzenmacher2025queueing} investigates scheduling with ML-guided prediction of token lengths. \cite{ao2025optimizing} study fluid-guided online scheduling under KV cache memory constraints, proposing threshold-based batching policies with near-optimal throughput guarantees; their work assumes knowledge of future decoding token lengths. Other concurrent works study offline scheduling with heterogeneous request lengths~\citep{wang2025llmserving}, robust scheduling under output-length prediction uncertainty~\citep{chen2025adaptively}, and SLO-aware scheduling with a fully analytical batch processing time model~\citep{bari2025optimal}. Our work differs in focus: we characterize the stability region, identify work-conserving scheduling algorithms as a fundamental design principle for throughput optimality, and extend the analysis to networked LLM systems serving AI-agent workloads.

\section{LLM Inference Setup}\label{sec:model}
In this section, we provide a brief overview of the LLM inference process and introduce the key terminology used throughout the paper. For a more comprehensive treatment, we refer readers to \cite{hashimoto2025cs336}. At a high level, LLM inference can be modeled as a queueing system in which incoming jobs (requests) exhibit the following distinguishing characteristics:
\begin{enumerate}
\item Each request is processed in two sequential phases—\emph{prefill} followed by \emph{decode}—and computation occurs at the token level.
\item At each iteration, multiple requests may be batched and processed jointly at the token level, subject to feasibility constraints imposed by system resources and practical considerations.
\end{enumerate}

\subsection{Two-Phase LLM Inference}
\label{sec:model-setup}

We consider an LLM server that hosts an LLM on one or more GPUs (e.g., via tensor parallelism). Throughout this paper, we use \emph{LLM server} in the classical queueing-theoretic sense: the hardware and software unit that processes requests (the ``server'' in the queue). The broader \emph{LLM inference system} encompasses the scheduling algorithms, KV-cache management, and other software components that govern how the LLM server is utilized; our focus is on the scheduling algorithms within such a system. Incoming requests arrive continuously, each associated with an input prompt (e.g., ``Is Paris a city?''). The LLM processes each request at the token level, where each token corresponds to a word, subword, or character in the text.

As shown in \cref{fig:llm_inference}, the inference process executes the request in strict token order through a series of iterations, each corresponding to one forward pass of the LLM model. The overall procedure consists of two phases:
\begin{itemize}
\item Prefill phase: The model processes the input tokens---also referred to as prefill tokens---to initialize its internal key-value (KV) cache. Since all prefill tokens are known in advance, they can be processed jointly within a single forward pass, or split across multiple forward passes as chunks, for computational efficiency.
\item Decode phase: After the prefill phase, the model begins autoregressive decoding. At each iteration, it generates one new token while reusing the KV cache from all previous tokens and producing new KV entries for the newly generated token. This process continues until the model emits an end-of-sequence token or a predetermined stopping condition is met (e.g., a maximum token budget).
\end{itemize}

In essence, a request can be represented as a sequence of tokens
\[
(x_1, x_2, \dots, x_p, x_{p+1}, \dots, x_{p+d}),
\]
where the first \(p\) tokens correspond to the \textit{prefill tokens} and the remaining \(d\) tokens correspond to the \textit{decode tokens}. 
The model must process these tokens in order.
At each iteration, the model can process a consecutive chunk of tokens,
subject to the constraint that the chunk size is \(1\) for the decode tokens—
that is, only one new token can be processed (or generated) at a time.

Formally, if an iteration processes a chunk of tokens 
\((x_{l}, x_{l+1}, \dotsc, x_{r})\) for some \(1 \leq l \leq r \leq p + d\) 
from a given request, we impose the following \textit{feasibility constraints}:
\begin{enumerate}[label=(\alph*)]
\item \((x_{1}, x_{2}, \dotsc, x_{l-1})\) have already been processed in previous iterations.
\item If any token in \((x_{l}, x_{l+1}, \dotsc, x_{r})\) belongs to the decode phase, then the chunk must contain exactly one token, i.e., \(l = r\).
\end{enumerate}

\begin{figure}
\centering
\includegraphics[width=0.9\linewidth]{./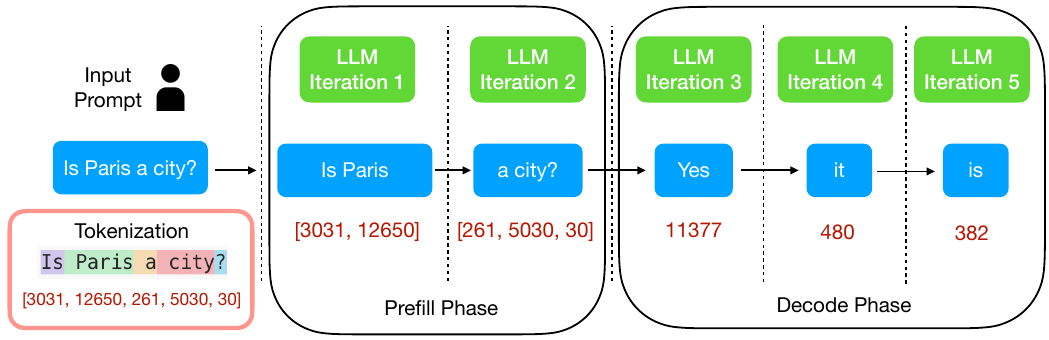}
\caption{Illustration of the LLM inference process, showing tokenization, prefill phase, and decode phase. The tokenization is performed using the GPT-4o tokenizer \citep{openai_tokenizer}.}
\label{fig:llm_inference}
\end{figure}

\subsection{Iteration-Level Batch Serving}
\label{sec:iteration-batch}

While the previous discussion focuses on how a single request is processed by an LLM, 
in practice, an LLM inference system must handle many concurrent requests efficiently. 
To achieve this, during each iteration, the scheduler collects a \textit{batch} of tokens—
each drawn from potentially different requests—and processes them together in a single forward pass 
to maximize GPU utilization.

Formally, let \(\mathcal{B}\) denote the set of requests selected for processing in the current batch. Note that not all unfinished requests need to be included; the scheduler chooses \(\mathcal{B}\) at each iteration. A batch can then be represented as
\(\{(x^{i}_{l_i}, x^{i}_{l_i+1}, \dotsc, x^{i}_{r_i}) : i \in \mathcal{B}\}\),
where, for each request \(i \in \mathcal{B}\), $l_i$ denotes its first unfinished token index and the interval \((l_i, r_i)\) must satisfy the feasibility constraints
defined in \cref{sec:model-setup}.

We define the total number of tokens included in a batch as the \textit{token load} \(b\):
\begin{align}
b := \sum_{i \in \mathcal{B}} (r_i - l_i + 1).
\end{align}
As we will see momentarily in \cref{sec:batch-processing-time}, 
the token load \(b\) serves as a key quantity in characterizing the processing time of each iteration.

\textbf{Token Budget Constraint.}  In practice, the token load \(b\) cannot grow without bound due to hardware constraints. Following Sarathi-Serve \citep{agrawal2023sarathi}, we impose a \textit{token budget} \(b_{\max}\) that caps the number of tokens processed per iteration:
\begin{align}
  b \le b_{\max}, \label{eq:token-budget-bound}
\end{align}
thereby bounding iteration latency and ensuring that the decode phase is not blocked by an excessively large token load. Typical configurations select \(b_{\max} \in \{2^{s}, s\in \mathbb{N}\}\); for example, $b_{\max}=256,512,1024,2048$ have been evaluated in Sarathi-Serve.
\Cref{fig:scheduling-model-overview} illustrates an example of iteration-level batch serving.

\begin{figure}
  \centering
  \includegraphics[width=\linewidth]{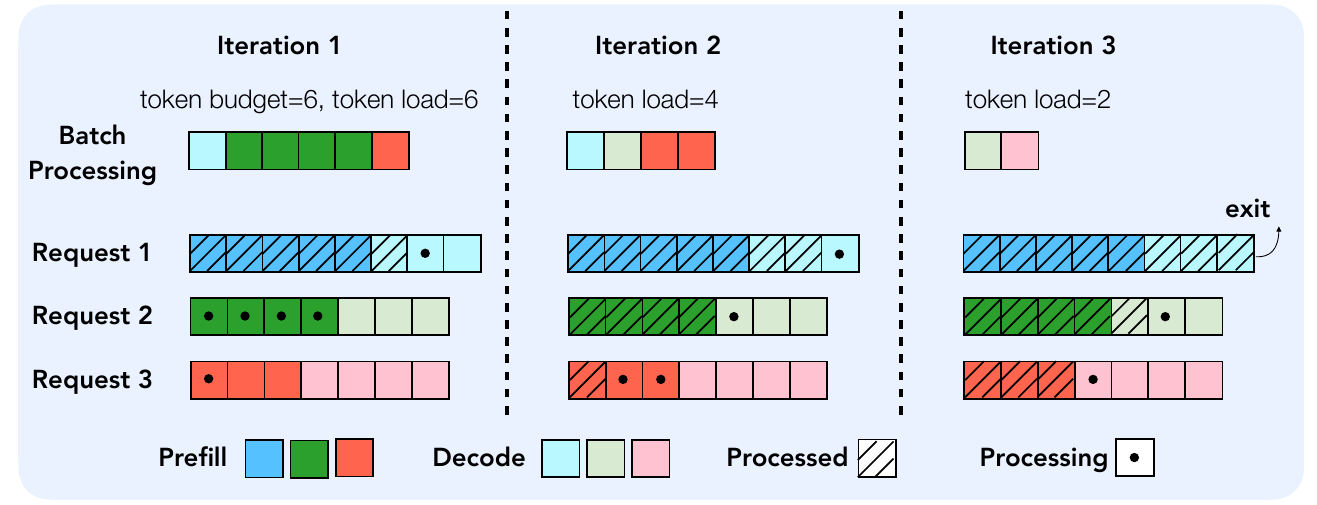}
  \caption{
      Illustration of the iteration-level batch serving.
      In this example, prior to Iteration~1, Request~1 has entered the decoding stage,
      while Requests~2 and~3 have just arrived at the LLM server.
  }
  \label{fig:scheduling-model-overview}
\end{figure}

\textbf{Batch Size Constraint.}
We define the \textit{batch size} \(k := |\mathcal{B}|\) as the number of requests in the current batch.
In practice, LLM inference systems typically impose an upper bound
\begin{align}
  k \le k_{\max} \label{eq:batch-size-constraint}
\end{align}
to control GPU memory usage and to prevent excessive overhead from too many concurrent requests.
We refer to \(k_{\max}\) as the \textit{maximum batch size}.
We omit this constraint in our main analysis to highlight the core insights,
but revisit its impact in \cref{sec:extension-to-active-request-constraint}
and leave a detailed treatment for future work.

Both constraints are exposed in practical LLM inference systems: in vLLM-V1 \citep{vllmv1}, \(b_{\max}\) and \(k_{\max}\) can be configured via \texttt{--max-num-batched-tokens} and \texttt{--max-num-seqs}, respectively; in SGLang \citep{sglang2025}, the corresponding flags are \texttt{--chunked-prefill-size} and \texttt{--max-running-requests}.

We summarize key LLM inference terminologies in \cref{tab:llm-inference-terminologies}.

\begin{table}[ht]
  \centering
  \begin{tabular}{@{}ll@{}}
      \toprule
      \textbf{Term} & \textbf{Definition} \\
      \midrule
      LLM Server & A server that hosts an LLM \\ 
      & and processes inference requests. \\
      \midrule
      Request & A prompt submitted to the LLM \\ 
      & server for processing. \\
      \midrule
      Token & A minimal textual unit (word, \\ 
      & subword, or character) processed \\ 
      & by the LLM. \\
      \midrule
      Prefill Phase & The phase where prefill tokens \\ 
      & are processed to initialize the KV cache. \\
      \midrule
      Decode Phase & The phase where decode tokens \\ 
      & are generated autoregressively. \\
      \midrule
      Iteration & One forward pass of the model. \\
      \midrule
      Batch (\(\mathcal{B}\)) & The set of requests selected for processing \\
      & together in one iteration. \\
      \midrule
      Token Load (\(b\)) & Total number of tokens included in a batch. \\
      \midrule
      Token Budget (\(b_{\max}\)) & Maximum number of tokens allowed in a batch. \\
      \midrule
      Batch Size (\(k\)) & Number of requests processing in a batch. \\
      \midrule
      Maximum Batch Size (\(k_{\max}\)) & Maximum number of requests allowed in a batch. \\
      \bottomrule
  \end{tabular}
  \caption{Summary of key terminologies used to describe the LLM inference process.}
  \label{tab:llm-inference-terminologies}
\end{table}

\subsection{Batch processing time} 
\label{sec:batch-processing-time}
Batch processing time---the time required to execute a selected batch---is a key quantity that governs both latency and throughput in LLM inference.
In principle, it can depend on many factors, including the hardware device, the system-level implementation, and the concrete constituents of the batch.
However, as \cref{fig:batch-time-independent-constituents} demonstrates, in realistic traces the batch processing time varies little across different batch constituents when the token load $b$ is held fixed. This observation motivates a simple yet accurate analytical model.

\begin{figure}[ht]
  \centering
  \includegraphics[width=0.55\linewidth]{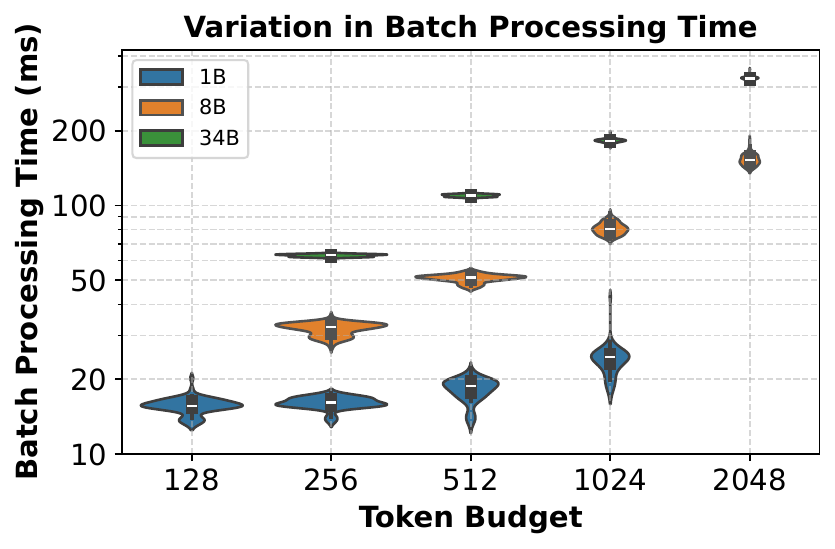} \vspace{-1.2em}
  \caption{Batch processing time remains relatively constant for a given token load (when the LLM is at full token load), and the coefficient of variation becomes even smaller for larger models. The time is measured using the SGLang inference engine under high load with various token loads for Llama series models, with request prefill/decode compositions driven by the ShareGPT dataset~\citep{sharegpt}.}
  \label{fig:batch-time-independent-constituents}
\end{figure}

While machine-learning models can predict batch processing time \citep{agrawal2024vidur}, they are typically hard to analyze and offer limited insight into scheduling-algorithm design.
Leveraging the empirical observation above, we approximate batch processing time by a \textit{piecewise constant} function that depends only on the token load \(b\)\footnote{An earlier version of this manuscript used the approximation \(t_b = c + a\,\max(b-b_0,0)\). The updated model provides a better fit to measured batch times and does not complicate the analysis.}:
\begin{align}
    t_{b} = c + a \cdot \lceil \frac{b}{b_0} \rceil. \label{eq:batch-processing-time}  
\end{align}

\begin{figure}[ht]
  \centering
  \begin{subfigure}{0.49\linewidth}
      \centering
      \includegraphics[width=\linewidth]{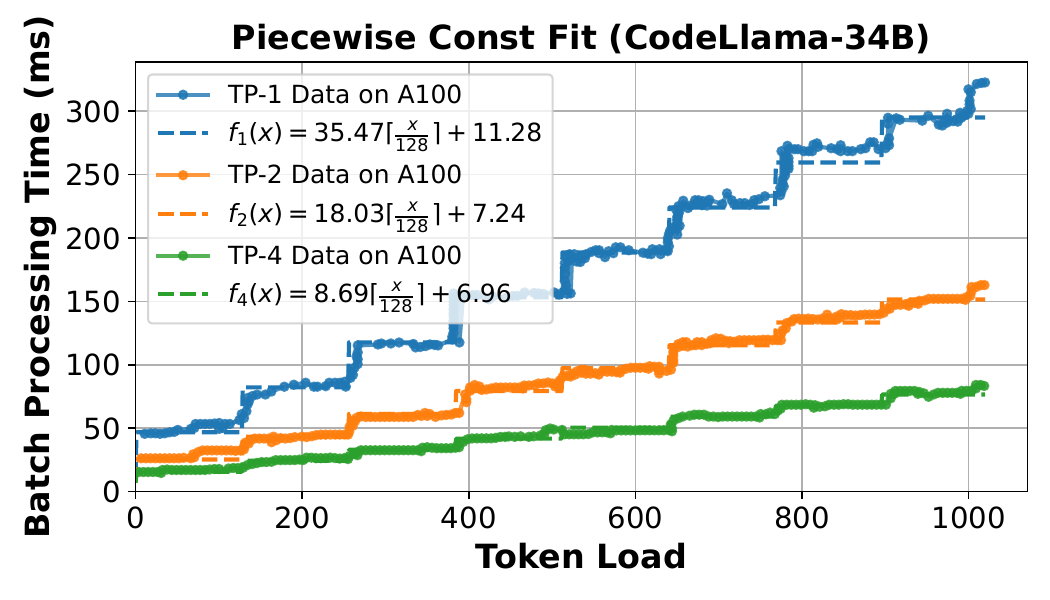}
      \caption{CodeLlama-34B (TP-1, TP-2, TP-4)}
      \label{fig:llama34b}
  \end{subfigure}
  \hfill
  \begin{subfigure}{0.49\linewidth}
      \centering
      \includegraphics[width=\linewidth]{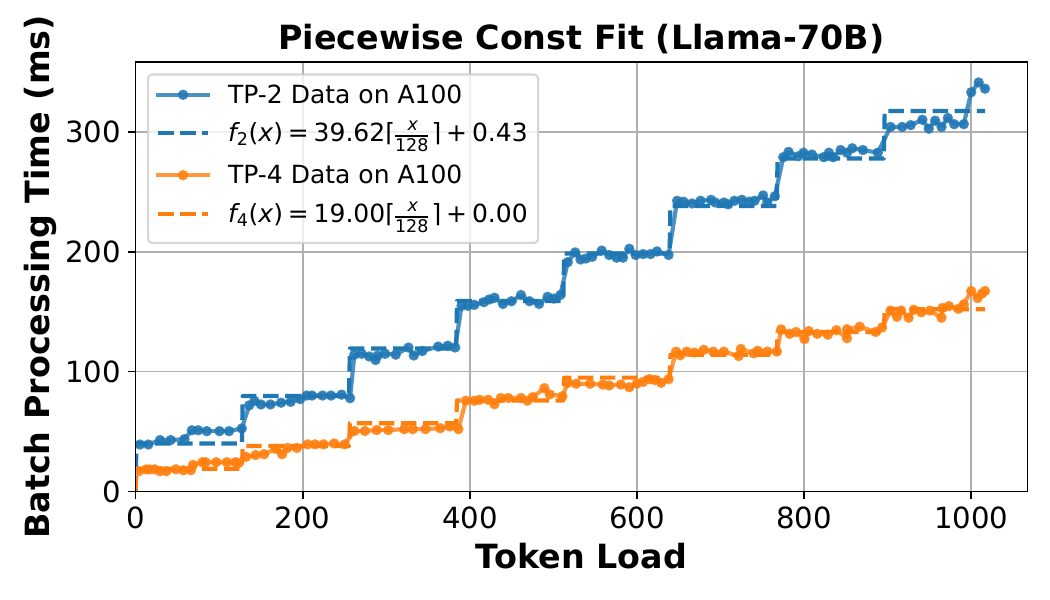}
      \caption{Llama-70B (TP-2, TP-4)}
      \label{fig:llama70b}
  \end{subfigure}
  \caption{Piecewise constant fit of batch processing time for CodeLlama-34B and Llama-70B models under various token loads and tensor-parallel (TP) sizes using the form $t_b = c + a \cdot \lceil b / b_0 \rceil$ with $c \geq 0$. $R^2$ is above 0.97 for all cases.}
  \label{fig:batch-processing-time-linear-form}
\end{figure}

\Cref{fig:batch-processing-time-linear-form} shows that this piecewise constant model closely matches measured batch processing times across a range of settings, with $R^2$ above 0.97 in all cases. The parameters \((c,a,b_0)\) are determined by the serving configuration and treated as fixed. For example, for CodeLlama-34B on a single A100 80\,GB GPU (TP-1), we observe \(c=11.28\)\,ms, \(a=35.47\)\,ms, and \(b_0=128\) tokens; with two GPUs under tensor parallelism (TP-2), the parameters become \(c=7.24\)\,ms and \(a=18.03\)\,ms, and with four GPUs (TP-4), \(c=6.96\)\,ms and \(a=8.69\)\,ms, revealing that the piecewise constant relationship continues to hold when serving across multiple GPUs with tensor parallelism.

The strong agreement between \cref{eq:batch-processing-time} and measured processing times stems from the dominance of linear-layer computations (i.e., feed-forward layers) in typical LLM inference workloads \citep{kamath2024pod,zhu2024nanoflow,ye2025flashinfer}. This approximation is most accurate when request lengths are not extremely long and batches are not extremely large, which covers realistic serving traces (e.g., ShareGPT~\citep{sharegpt}). We emphasize that \cref{eq:batch-processing-time} is not intended to capture the full complexity of LLM batch-based inference; in \cref{sec:discussions}, we extend this formula to account for attention-layer computation (which depends on the number of KV cache tokens) and characterize when the approximation breaks down. Rather, it provides a tractable starting point that already enables the throughput-optimality analysis developed in this paper. We welcome further studies on more expressive batch processing time models---for example, incorporating attention-layer costs through the total number of KV cache tokens as in \cite{ao2025optimizing}, or fully analytical batch processing time models as in \cite{bari2025optimal}---and the deeper scheduling-theoretic questions they may unlock.

For the theoretical analysis, we model batch processing times using \cref{eq:batch-processing-time} under the following assumption. In our experiments, we use measured batch processing times directly.
\begin{assumption}[Batch Processing Time]\label{asm:batch-processing-time}
    Batch processing times follow \cref{eq:batch-processing-time}. Furthermore, we assume that $a, c\geq 0$ and $b_{\max}$ is an integer multiple of $b_0$.
\end{assumption}
These conditions are mild and motivated by practical considerations:
\begin{itemize}
    \item $a, c \geq 0$: The marginal cost $a$ per additional block of $b_0$ tokens and the base cost $c$ per iteration are both nonnegative, which holds in all practical settings.
    \item $b_0 \mid b_{\max}$: The parameter $b_0$ reflects the hardware-aligned block size (e.g., GPU matrix-multiplication tiles), typically a power of two. The token budget $b_{\max}$ is naturally chosen as a multiple of this block size, often also a power of two.
\end{itemize}

\section{Stochastic Modeling and Throughput-Optimal Scheduling Algorithms}
Throughout the paper, we focus on \textit{batch scheduling algorithms}---policies that select the batch constituents at each iteration. Given the LLM inference setup, we now study the following question:
\begin{quote}
\it{What is the maximal throughput of an LLM inference system, and which batch scheduling algorithms can achieve it?}
\end{quote}
Questions of this form are foundational in queueing theory: for complex queueing systems, throughput is often the first performance objective to characterize, after which one can study latency and related metrics. We follow this tradition for LLM inference systems.

\textbf{A new queueing problem.}
From a queueing-theoretic perspective, LLM inference gives rise to a novel two-class batch-service system (see \cref{fig:two-class-queue}). Each request first joins the \emph{prefill queue} and, upon completion, transitions to the \emph{decode queue}. The server processes a batch of tokens drawn from both queues simultaneously, subject to the token budget $b_{\max}$. Two features distinguish this system from classical batch-service queues:
\begin{enumerate}
    \item \textbf{Cross-class batching.} A single batch can mix prefill tokens and decode tokens from different requests, coupling the two queues through a shared capacity constraint.
    \item \textbf{Asymmetric per-job processing.} In the prefill phase, multiple tokens from the same request can be processed in one batch; in the decode phase, each request contributes at most one token per batch. This asymmetry is a direct consequence of the autoregressive nature of decoding.
\end{enumerate}
These features make LLM inference a unique batch-service system and present an interesting new application challenge for the queueing community.

\begin{figure}[t]
\centering
\begin{tikzpicture}[
    >=Stealth,
    queue/.style={draw, minimum width=2.6cm, minimum height=0.7cm, rounded corners=2pt, font=\small},
    server/.style={draw, thick, rounded corners=4pt, fill=gray!6},
    token/.style={draw, fill=#1!25, minimum size=0.35cm, inner sep=0pt},
    arr/.style={->, thick},
    lbl/.style={font=\footnotesize},
    every node/.style={font=\small}
]

\node[queue, fill=blue!10] (pq) at (-4.2, 0.8) {Prefill queue};
\node[queue, fill=orange!10] (dq) at (-4.2,-0.5) {Decode queue};

\draw[arr, purple!70, dashed] (-4.2, 0.4) -- (-4.2,-0.1)
    node[midway, left=2pt, font=\footnotesize] {prefill complete};

\draw[arr] (-6.5,0.8) -- (pq.west) node[midway, above, lbl] {arrivals};

\draw[arr] (-4.2,-0.9) -- (-4.2,-1.5) node[below, lbl] {departure};

\node[server, minimum width=5.8cm, minimum height=3.4cm] (srv) at (1.8,0) {};
\node[font=\small\bfseries] at (1.8, 2.05) {GPU (one batch)};

\draw[gray, densely dotted] (-0.9, 0.0) -- (4.5, 0.0);
\node[font=\scriptsize, gray, right] at (3.6, 0.15) {prefill};
\node[font=\scriptsize, gray, right] at (3.6,-0.18) {decode};

\foreach \x in {0,1,2,3,4} {
    \node[token=blue] at (-0.4+\x*0.5, 1.05) {};
}
\node[lbl, right] at (1.7, 1.05) {req $j_1$\; (5 tokens)};

\foreach \x in {0,1,2} {
    \node[token=blue] at (-0.4+\x*0.5, 0.45) {};
}
\node[lbl, right] at (1.2, 0.45) {req $j_2$\; (3 tokens)};

\node[token=orange] at (-0.4,-0.55) {};
\node[lbl, right] at (-0.05,-0.55) {req $i_1$};

\node[token=orange] at (1.4,-0.55) {};
\node[lbl, right] at (1.75,-0.55) {req $i_2$};

\node[token=orange] at (-0.4,-1.15) {};
\node[lbl, right] at (-0.05,-1.15) {req $i_3$};

\node[token=orange] at (1.4,-1.15) {};
\node[lbl, right] at (1.75,-1.15) {req $i_4$};

\draw[decorate, decoration={brace, amplitude=5pt}]
    (4.85, 1.35) -- (4.85, 0.15)
    node[midway, right=6pt, lbl, align=left] {multiple tokens\\per request};
\draw[decorate, decoration={brace, amplitude=5pt}]
    (4.85,-0.15) -- (4.85,-1.4)
    node[midway, right=6pt, lbl, align=left] {$\le 1$ token\\per request};

\draw[arr, blue!60] (pq.east) -- (-1.1, 0.55);
\draw[arr, orange!60] (dq.east) -- (-1.1,-0.75);

\node[font=\footnotesize] at (1.8,-2.0) {token budget: $\sum\text{tokens} \le b_{\max}$};

\end{tikzpicture}
\caption{LLM inference as a two-class batch-service queueing system. Arriving requests join the prefill queue; upon completing prefill, they transition to the decode queue. The GPU processes a mixed batch from both queues, subject to the token budget $b_{\max}$. Crucially, multiple prefill tokens from the same request can be batched together, whereas each decode request contributes at most one token per batch.}
\label{fig:two-class-queue}
\end{figure}

\subsection{Basic LLM queueing model}\label{sec:markov-chain}
To study this problem rigorously, we introduce a stochastic model of the inference process, which we refer to as the \emph{basic LLM queueing model}. Although the notation is somewhat heavy, it enables precise analysis and provides a systematic, broadly applicable framework for reasoning about the system. We will use the term \emph{basic LLM queueing model} throughout the paper to distinguish this single-server, single-class formulation from the multi-class and multi-server extensions developed in \cref{sec:AI-agent}.

For simplicity, we assume time is discrete, indexed by $n\in \N\equiv \{1,2,\ldots\}$. A time slot can correspond to the physical clock frequency of the system; extending the analysis to continuous time is straightforward.

\textbf{Arrival.} Multiple requests may arrive in the same time slot, and we index them in order of arrival. For $i=1,2,\ldots$, request $i$ has a prefill length $v_p(i)\in\N$ and a decode length $v_d(i)\in\N$. Since every LLM has a finite context window and a maximum output length, the token sizes are bounded: $v_p(i)\le v_p^{\max}$ and $v_d(i)\le v_d^{\max}$ for all $i$, where $v_p^{\max}$ and $v_d^{\max}$ are finite constants. We assume the sequence $\{(v_p(i),v_d(i)) : i\in\N\}$ is i.i.d.\ and define
\[
m_p:=\E[v_p(1)],\qquad m_d:=\E[v_d(1)].
\]
In addition, let $a_n$ denote the number of requests arriving in time slot $n$. We assume $\{a_n:n\in\N\}$ is an exogenous i.i.d.\ sequence, independent of $\{(v_p(i),v_d(i)) : i\in\N\}$, and define the arrival rate
\[
\lambda := \E[a_1].
\]
It follows that the expected prefill-token workload arriving per time slot is \(\lambda m_p\), and the expected decode-token workload arriving per time slot is \(\lambda m_d\).

\textbf{State.} With the arrival setup complete, we introduce a discrete-time Markov chain (DTMC) to describe the system dynamics. We begin by defining the system state.

At each time slot \(n\), let \(\mathcal{Q}_n\) denote the set of requests that have not yet departed. For each request \(i\in\mathcal{Q}_n\), let \(P_i(n)\) and \(D_i(n)\) denote the number of \textbf{unprocessed tokens} in the prefill and decode stages, respectively, at the beginning of slot \(n\). Thus, \(\{(P_i(n), D_i(n)) : i \in \mathcal{Q}_n\}\) captures the status of all unfinished requests.  

Since batch processing may span multiple time slots (which we assume are integer-valued), let \(R(n)\) denote the \textbf{remaining processing time} (in slots) of the current batch. If \(R(n)=0\), a new batch can be formed; otherwise, no decision is required. The system state at the beginning of slot \(n\) is then defined as
\begin{align}
  \label{eq:state}
   X(n) =\Big(R(n), \{(P_i(n), D_i(n)) : i \in \mathcal{Q}_n\}\Big).
\end{align}  
Let \(\mathcal{X}\) denote the set of all possible states \(X(n)\) for \(n\in\N\) (i.e., the DTMC state space).

\textbf{Scheduling algorithms}. Let \( x = (r, \{(p_i, d_i) : i \in \mathcal{Q} \}) \in \mathcal{X} \) be an arbitrary system state, where $r$ is the remaining processing time of the current batch, and $p_i$, $d_i$ are the unprocessed prefill and decode token counts of request $i$. Given \( x \) with \( r = 0 \), a scheduling algorithm selects a batch configuration \(\pi(x) = (\delta_i^p, \delta_i^d)_{i \in \mathcal{Q}}\), where \( \delta_i^p \) and \( \delta_i^d \) denote the number of prefill tokens and decode tokens, respectively, from request \( i \) to be included in the batch. Unlike the $(l_i, r_i)$ representation in \cref{sec:iteration-batch}, which specifies the token range, the $(\delta_i^p, \delta_i^d)$ characterization tracks only the number of tokens allocated to each request, which is more convenient for the stochastic analysis. For \( \pi(x) \) to be feasible, it must satisfy:  
\begin{subequations}\label{eq:feasibility-constraint}
\begin{align}
    & p_{i} > 0 \text{ implies } \delta_{i}^{d} = 0 \label{eq:prefill-pre-decode}\\
    & \delta_{i}^{p}  \leq p_{i} \label{eq:prefill-unprocess} \\ 
    & \delta_{i}^{d} \leq 1  \label{eq:decode-unprocess}\\
    & \sum_{i \in \mathcal{Q}} \Big(\delta_{i}^{d} + \delta_{i}^{p}\Big) = b \leq b_{\max}. \label{eq:batch-constraint}
\end{align}
\end{subequations}
Constraint \cref{eq:prefill-pre-decode} ensures that prefill tokens are completed before decoding begins, while \cref{eq:prefill-unprocess} ensures the number of batched prefill tokens does not exceed the remaining unprocessed ones. \cref{eq:decode-unprocess} enforces sequential decoding (at most one token per request per batch), and \cref{eq:batch-constraint} enforces the total token budget \(b_{\max}\). When \( r > 0 \), we set \( b = 0 \) as batch processing is already ongoing. However, even when \( r = 0 \), an idle slot (i.e., \( b = 0 \)) is permitted. Any policy \(\pi\) satisfying conditions 
\cref{eq:prefill-pre-decode}-\cref{eq:batch-constraint} is referred to as a (batch) scheduling algorithm. Note that the feasibility constraints \cref{eq:feasibility-constraint} correspond to those introduced in \cref{sec:model-setup}, with the token budget $b_{\max}$ enforced here; the batch size constraint $k \le k_{\max}$ is deferred to \cref{sec:extension-to-active-request-constraint}.

Note that when making decisions, the scheduling algorithm $\pi$ can utilize the index $i$ of each unfinished request. This index may encode request-specific characteristics—such as type, content, or arrival order—thereby encompassing a broad range of potential scheduling strategies. 

\textbf{System dynamics.} Let $X(n)=(R(n), \{(P_i(n), D_i(n)) : i\in \mathcal{Q}_n\})$ be the system state at the beginning of time slot $n$. The system evolves according to the following sequence of events:
\begin{enumerate}
\item \textbf{Scheduling:} If $R(n) = 0$, the algorithm selects a batch configuration $\pi(X(n)) = (\delta_i^p, \delta_i^d)_{i \in \mathcal{Q}_n}$ with token load $b = \sum_{i \in \mathcal{Q}_n}(\delta_i^{p} + \delta_i^{d})$.  We define post-scheduling unprocessed token counts $P_{i}'(n) = P_{i}(n) - \delta_{i}^{p}$ and $D_{i}'(n) = D_{i}(n) - \delta_{i}^{d}$. If $R(n) > 0$, the previous batch continues, and we set $P_i'(n) = P_i(n)$, $D_i'(n) = D_i(n)$.

\item \textbf{Processing:} The remaining processing time $R(n+1)$ for the next slot is updated as:
\begin{align}
    \label{eq:r-transition}
    R(n+1) = \begin{cases}
        R(n) - 1  & \text{if } R(n) > 0, \\
        t_{b} - 1 & \text{if } R(n) = 0 \text{ and } t_{b} > 0, \\ 
        0         & \text{if } R(n) = 0 \text{ and } t_{b} = 0,
    \end{cases}
\end{align}
where $t_b$ is the number of slots required to process a batch of token load $b$.

\item \textbf{Departures:} A batch finishes processing at the end of slot $n$ if its remaining time $R(n+1)$ reaches zero (provided the slot was not idle, i.e., $R(n) > 0$ or $t_b > 0$). When a batch completes, requests with no remaining tokens depart, and the set of requests becomes $\mathcal{Q}_{n}' = \mathcal{Q}_n \setminus \{i\in \mathcal{Q}_n: D'_i(n)=0\}$. Otherwise, $\mathcal{Q}_n' = \mathcal{Q}_n$.

\item \textbf{Arrivals:} New requests $\mathcal{A}_n$ arrive. The state at the beginning of slot $n+1$ is $X(n+1)=(R(n+1), \{(P_i(n+1), D_i(n+1)) : i\in \mathcal{Q}_{n+1}\})$, where $\mathcal{Q}_{n+1} = \mathcal{Q}'_n \cup \mathcal{A}_n$. For $i\in \mathcal{Q}_n'$, token counts are $P_i(n+1) = P_i'(n)$ and $D_i(n+1) = D_i'(n)$; for new arrivals $i\in \mathcal{A}_n$, they are $P_i(n+1) = v_p(i)$ and $D_i(n+1) = v_d(i)$.
\end{enumerate}
Given the system dynamics above and the i.i.d.\ assumptions on arrivals, the sequence $\{X(n) : n\in \N\}$ forms a  DTMC in a countable state space.

\subsection{Throughput-optimal scheduling algorithms}

After formalizing the scheduling problem in \cref{sec:markov-chain}, we are now in a position to analyze the system rigorously. 

A scheduling algorithm is said to achieve throughput $\lambda$ if the associated DTMC $\{X(n), n \in \mathbb{N}\}$ is irreducible and positive recurrent when the arrival rate is $\lambda$; see Section~5.7 of \citep{dai2020processing} for a discussion of maximally stable scheduling algorithms. Our main result is that a class of \emph{work-conserving} scheduling algorithms can (almost) achieve the system’s maximal throughput rate.

Specifically, a scheduling algorithm $\pi$ is said to be \textit{work-conserving} if $\pi(x)$ forms a batch of token load $b = b_{\max}$ whenever possible. Namely, $\pi(x)$ satisfies
\begin{align}
\sum_{i\in \mathcal{Q}}\Big( \delta_{i}^{d} + \delta_{i}^{p}\Big) = b_{\max} 
\end{align}
whenever
\begin{align}
\sum_{i\in \mathcal{Q}} \Big( p_{i} +  \boldsymbol{1}(p_{i} =0)\Big)  \ge b_{\max}.
\end{align}
Here, the left-hand side represents the maximum number of tokens that could be included in a batch: each request in the prefill phase ($p_i > 0$) can contribute up to $p_i$ tokens, while each request in the decode phase ($p_i = 0$) can contribute at most $1$ token. Whenever this supply meets or exceeds $b_{\max}$, a work-conserving algorithm must fill the batch to the full token budget, by allowing batches that mix prefill and decode tokens.

Before stating the theorem, we introduce one additional condition on the scheduling algorithm that facilitates the stability analysis via the fluid limit technique~\citep{dai1995positive,stolyar1995stability,dai2020processing}. This technique is also used to prove stability results for networks of LLM servers in \cref{sec:AI-agent}.

\noindent \textbf{$K$-FCFS condition.}
Index requests as $i = 1, 2, \dotsc$ according to their global arrival time. A work-conserving algorithm satisfies the \emph{$K$-FCFS} condition if,
for any request $j$ that consumes at least one decoding token in the batch
(i.e., $\delta_{j}^d > 0$), the following holds:
\[
\forall\, i \le j-K,~\text{if request $i$ is still in the system, then }
\delta_{i}^p + \delta_{i}^d \;\ge\; 1.
\]
In other words, if a request that arrived at least $K$ positions later than request $i$ processes a decoding token, then $i$ must also process at least one token in the same batch. The condition is triggered by \emph{decoding} tokens ($\delta_j^d > 0$) rather than by any token processing; this accommodates prefill-prioritized algorithms that may process a batch of prefill tokens for newly arrived requests before older requests resume decoding (e.g., see \cref{sec:incumbent}). 

This is a mild condition: no practical system would let a request wait indefinitely while serving newer arrivals. Most scheduling policies (e.g., FCFS-based) naturally satisfy this condition for a reasonable $K$. The exact value of $K$ is not important in our analysis, as long as it is finite.

We now state the main theorem.
\begin{theorem}\label{thm:main}
  Under the i.i.d.\ arrival model in \cref{sec:markov-chain} and \cref{asm:batch-processing-time}:

\noindent (a) \textbf{(Instability.)} If the system load satisfies
   \begin{align}\label{eq:load>}
  \lambda (m_p+m_d) > b_{\max} / t_{b_{\max}},
   \end{align}
then the total number of unprocessed tokens diverges under \emph{any} scheduling algorithm:
   \begin{align}
     \label{eq:diverge}
     \Prob\{ \lim_{n\to \infty} \abs{X(n)} =\infty\}=1,
   \end{align}
   where $\abs{X(n)}=\sum_{i\in \mathcal{Q}_n}(P_i(n)+D_i(n))$.

\noindent (b) \textbf{(Stability.)} If the system load satisfies
  \begin{align}\label{eq:load<}
  \lambda (m_p+m_d) < b_{\max} / t_{b_{\max}},
\end{align}
then the DTMC $\{X(n), n\in \N\}$ is positive recurrent (i.e., the system is stable) under any work-conserving $K$-FCFS algorithm.
\end{theorem}

\Cref{thm:main} provides a complete characterization of the stability region for scheduling algorithms subject to the token budget constraint: the critical threshold is $\lambda(m_p+m_d) = b_{\max}/t_{b_{\max}}$, where $b_{\max}/t_{b_{\max}}$ is the maximal token processing rate (see the proof sketch below). The theorem assumes deterministic batch processing times for clarity; the result extends easily to random processing times by replacing $t_b$ with $\E[t_b]$ throughout.

This result has several practical implications:
\begin{enumerate}
    \item \textbf{Guideline for scheduling policy design.} Practitioners should adopt work-conserving scheduling algorithms. In practice, the LLM inference community has been moving from non-work-conserving schedulers (e.g., FasterTransformer, vanilla vLLM) toward work-conserving ones (e.g., Sarathi-Serve, vLLM with chunked prefill) (see \cref{sec:incumbent} for a detailed examination). \Cref{thm:main} provides a rigorous theoretical basis to endorse this trend and to guide future policy design.

    \item \textbf{Throughput benchmark.} The quantity $b_{\max}/t_{b_{\max}}$ represents a theoretical upper bound on the token processing rate. It serves as an ideal-case benchmark against which practitioners can measure and optimize their system's throughput.

    \item \textbf{Foundation for complex systems.} \Cref{thm:main} establishes the baseline for a single LLM server, providing a foundation for deeper investigation. In \cref{sec:AI-agent}, we build on this result to analyze networks of LLM servers handling AI-agent workloads, identifying conditions under which work-conserving algorithms remain throughput-optimal and conditions under which they fail.
\end{enumerate}

Next we present a proof sketch; see \cref{sec:basic} for the full fluid model proof.

\begin{proof}[Proof Sketch.]
\textbf{Part~(a): Instability.}
We show that $b_{\max}/t_{b_{\max}}$ is the maximal token processing rate achievable by any algorithm. Under \cref{eq:batch-processing-time}, the rate $b/t_b = b/(c+a\lceil b/b_0\rceil)$ is maximized at the right endpoint of each step $b = kb_0$, where it equals $kb_0/(c+ak)$. Since $\frac{d}{dk}\frac{kb_0}{c+ak} = \frac{b_0\,c}{(c+ak)^2}>0$, these step-endpoint rates are strictly increasing in $k$, so the overall maximum is $b_{\max}/t_{b_{\max}}$ (under $b_0 \mid b_{\max}$ from \cref{asm:batch-processing-time}).
Hence, at most $b_{\max}/t_{b_{\max}}$ tokens can be processed per unit time under any algorithm. By the strong law of large numbers, the cumulative arriving workload grows at rate $\lambda(m_p+m_d) > b_{\max}/t_{b_{\max}}$, so the total unprocessed tokens $|X(n)| \to \infty$ almost surely. See also the proof of Proposition 5.1 of \citet{Dai1996}.

\textbf{Part~(b): Stability.}
Under the load condition \cref{eq:load<}, we construct a Lyapunov function on the fluid model and show the drift is negative whenever the system is non-empty, so the fluid model drains in finite time. The non-trivial part is establishing that \emph{every} fluid limit of the stochastic system---obtained as a scaling limit along subsequences of initial states---satisfies the work-conserving fluid model equations. This requires proving uniform convergence on compact sets (u.o.c.\ convergence) of the scaled stochastic processes to their fluid analogs, adapted to the two-class batch-service structure of LLM inference. Once established, fluid model stability implies positive recurrence of the DTMC~\citep{dai1995positive,stolyar1995stability,dai2020processing}. The full proof is given in \cref{sec:basic}.

An alternative, self-contained proof of Part~(b) directly constructs a Foster--Lyapunov function on the DTMC (\cref{sec:proof-thm-main}). This approach is shorter and uses only the work-conserving property (without requiring $K$-FCFS), but is less extensible; the fluid limit technique generalizes naturally to the network settings of \cref{sec:AI-agent}.
\end{proof}

\begin{figure}[t]
    \centering
    \includegraphics[width=\linewidth]{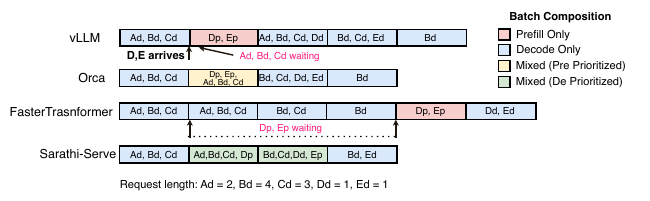}
    \caption{Example workload and where work-conserving criteria are broken. For vLLM, the second batch is not work-conserving because of limited prefill, and decoding tokens from earlier requests are still waiting. For FasterTransformer, the second to fourth batches are not work-conserving, because the prefills are blocked with earlier decodes. }
    \label{fig:sec3-ill}
\end{figure}

\subsection{Extension to parallel servers}
\label{sec:parallel-servers}

The result of \cref{thm:main} extends naturally from a single LLM server to $K$ identical servers operating in parallel behind a load balancer. We consider two common load-balancing algorithms:
\begin{itemize}
    \item \textbf{Random assignment:} each arriving request is routed to a server chosen uniformly at random, so each server receives an effective arrival rate $\lambda_k = \lambda/K$.
    \item \textbf{Join-the-shortest-queue (JSQ):} each arriving request is routed to the server $k$ with the fewest total unfinished requests $\abs{\mathcal{Q}^k}$, with ties broken arbitrarily.
\end{itemize}
The state in \cref{eq:state} needs to be modified as
\begin{align}
  \label{eq:stateK}
   X(n) =\Big(R^k(n), \{(P^k_i(n), D^k_i(n)) : i \in \mathcal{Q}^k_n\},k=1, \ldots, K\Big).
\end{align}  

\begin{proposition}\label{pro:parallel}
  Suppose $K$ identical servers each run a work-conserving $K$-FCFS scheduling algorithm, with requests routed by either random assignment or JSQ. If the aggregate load condition
\begin{align}\label{eq:station_load}
    \lambda (m_p+m_d) < K \cdot b_{\max}/t_{b_{\max}}
\end{align}
holds, then the DTMC $\{X(n), n\ge 0\}$ describing the system is positive recurrent.
\end{proposition}
The proof, which applies the fluid limit technique of \cref{sec:basic}, is given in \cref{sec:proof-pro-parallel}. The argument uses the total system workload across all $K$ servers as a Lyapunov function. For random assignment, each server independently receives a fraction $\lambda/K$ of the arrivals, so each busy server has negative drift by the same single-server argument as \cref{thm:main}. The JSQ case requires a more delicate Lyapunov drift argument that exploits the property that JSQ never sends work to a server that holds more workload in the fluid regime; see \cref{sec:proof-pro-parallel} for details.

\begin{remark}[Other routing metrics]
\label{rem:jsq+}
The result extends well beyond JSQ. It holds for any load-balancing algorithm that routes each arrival to a server minimizing some non-negative scalar measure of server occupancy ---examples include total remaining tokens, total expected remaining workload, or any monotone increasing function of the prefill and decode counts. See \cref{sec:proof-pro-parallel} for details.
\end{remark}

\section{Work-Conservingness of Existing LLM Inference Systems}
\label{sec:incumbent}

In this section, we examine the work-conservingness and stability of widely adopted scheduling algorithms for LLM inference. We show that FasterTransformer and vanilla vLLM are \emph{not} work-conserving---and can therefore be unstable even when the load condition \cref{eq:load<} is satisfied---while Orca and Sarathi-Serve are work-conserving and hence throughput-optimal by \cref{thm:main}. In practice, the LLM inference community has been progressively moving toward work-conserving designs: the latest version of vLLM with chunked prefill enabled is also work-conserving~\citep{vllm2025chunked}.

We now describe each algorithm in turn.

\textbf{FasterTransformer}~\citep{fastertransformer} \textbf{(decode-prioritized, no mixed batching).}
FasterTransformer is a decode-prioritized scheduler without mixed batching. Whenever there are requests in the decoding stage, it batches as many decode tokens as possible (up to the token budget $b_{\max}$) and processes them, leaving requests in the prefill queue untouched. It is \emph{not} work-conserving: prefill tokens wait while the GPU processes decode-only batches.

\textbf{Vanilla vLLM}~\citep{kwon2023efficient} \textbf{(prefill-prioritized, no mixed batching).}
Vanilla vLLM is a prefill-prioritized scheduler without mixed batching. Whenever there are requests in the prefill queue, it processes prefill tokens and ignores any requests in the decoding queue. It is \emph{not} work-conserving: decode tokens remain idle while the GPU processes prefill-only batches.

\textbf{Orca}~\citep{yu2022orca} \textbf{(prefill-prioritized, with mixed batching).}
Orca improves upon vanilla vLLM by allowing decoding requests to be batched together with prefill requests. It first fills in prefill tokens from requests with $p_i > 0$, then packs as many decode tokens as possible until the token budget $b_{\max}$ is reached. Because it mixes both token types in each batch, Orca is work-conserving.

\textbf{Sarathi-Serve}~\citep{agrawal2023sarathi} \textbf{(decode-prioritized, chunked prefill).}
Sarathi-Serve introduces chunked prefill with decode prioritization. When forming a batch, it first fills in as many decode tokens (at most one per request) as possible; when there is additional space, it fills in as many prefill tokens as possible from a minimal number of requests. Sarathi-Serve is work-conserving.

\textbf{Why FasterTransformer and vanilla vLLM fail.}
\Cref{fig:sec3-ill} illustrates the mechanisms. In vanilla vLLM, prefill is prioritized over decode. When incoming prefill requests are consistently short, the scheduler fails to batch decode tokens together, remaining in the memory-bound regime and causing the decode queue to grow without bound. In FasterTransformer, decode is prioritized but mixed batching is absent. When decode requests have token loads below $b_0$ (see \cref{eq:batch-processing-time}), they are processed sequentially without utilizing the GPU's parallel processing capabilities, while incoming prefill requests remain blocked.

Through our analysis, Orca and Sarathi-Serve---the two work-conserving algorithms---should be preferred. Indeed, \cref{fig:instable} confirms this on practical workloads: the work-conserving schedulers achieve higher throughput, while FasterTransformer and vanilla vLLM become unstable even under the same load. Our theoretical analysis provides a rigorous endorsement of this trend and encourages practitioners to design scheduling policies within the class of work-conserving algorithms.

\section{Throughput Optimality for AI-Agent Workloads}
\label{sec:AI-agent}

LLMs are increasingly deployed as autonomous agents that handle complex tasks through sequences of LLM calls, possibly spanning multiple LLM servers~\citep{luo2025autellix}. The execution flow of a single agentic user request can involve planning, tool selection, generation, verification, refinement, and other steps, forming a \emph{workload} that may be a directed acyclic graph (DAG) or contain loops. \Cref{fig:agentic-workflows} illustrates several representative topologies: a sequential chain, a fork-join pattern, and a self-reflection loop. We refer to these collectively as \textbf{AI-agent workloads}.

\begin{figure}[t]
\centering
\resizebox{\linewidth}{!}{%
\begin{tikzpicture}[
    >=Stealth,
    llm/.style={draw, rounded corners=3pt, minimum width=1.2cm, minimum height=0.6cm, fill=blue!10, font=\footnotesize},
    arr/.style={->, thick},
    lbl/.style={font=\scriptsize},
    every node/.style={font=\small}
]

\node[font=\footnotesize\bfseries] at (0, 1.8) {(a) Sequential chain};
\node[llm] (a1) at (-1.8, 0.6) {Retrieve};
\node[llm] (a2) at (0, 0.6) {Summarize};
\node[llm] (a3) at (1.8, 0.6) {Format};
\draw[arr] (-3.0,0.6) -- (a1);
\draw[arr] (a1) -- (a2);
\draw[arr] (a2) -- (a3);
\draw[arr] (a3) -- (3.0,0.6);

\node[font=\footnotesize\bfseries] at (6.5, 1.8) {(b) Fork-join};
\node[llm] (b0) at (4.6, 0.6) {Plan};
\node[llm] (b1) at (6.5, 1.2) {Search};
\node[llm] (b2) at (6.5, 0.0) {Analyze};
\node[llm, minimum width=1.5cm] (b3) at (8.6, 0.6) {Synthesize};
\draw[arr] (3.6,0.6) -- (b0);
\draw[arr] (b0.east) -- (b1.west);
\draw[arr] (b0.east) -- (b2.west);
\draw[arr] (b1.east) -- (b3.west);
\draw[arr] (b2.east) -- (b3.west);
\draw[arr] (b3) -- (9.8,0.6);

\node[font=\footnotesize\bfseries] at (12.3, 1.8) {(c) Self-reflection};
\node[llm] (c1) at (11.2, 0.6) {Generate};
\node[llm] (c2) at (13.4, 0.6) {Evaluate};
\draw[arr] (10.0,0.6) -- (c1);
\draw[arr] (c1) -- (c2);
\draw[arr, dashed, bend right=50] (c2.north) to node[below, lbl] {refine} (c1.north);
\draw[arr] (c2) -- (14.6,0.6);

\end{tikzpicture}%
}
\caption{Representative AI-agent workflow topologies. (a)~A sequential chain of LLM calls (e.g., retrieval-augmented generation). (b)~A fork-join pattern where subtasks are processed in parallel and synchronized. (c)~A self-reflection loop with iterative refinement.}
\label{fig:agentic-workflows}
\end{figure}

From a queueing-theoretic perspective, we model an AI-agent workload as a \emph{multi-class batch-service processing network}. Each node in the workload graph represents a \emph{class} of LLM-level requests---requests that share the same workload logic and statistical characteristics (prefill and decode token sizes). When a request completes processing in one class, it is routed to another class  according to the workload structure, or departs the system. This formulation naturally captures both static DAGs and dynamic workflows, including agentic loops where a request may revisit the same class.

The key distinction from classical stochastic processing networks~\citep{dai2020processing} is twofold. First, requests from different classes at the same server can be \emph{batched together} and processed simultaneously (\cref{fig:multi-class-batch}). Second, each LLM-level request inherits the two-phase (prefill + decode) structure of \cref{fig:two-class-queue}; in a prefill--decode disaggregated setup, the two phases may even be served by different servers.

\begin{figure}[t]
\centering
\begin{tikzpicture}[
    >=Stealth,
    queue/.style={draw, minimum width=2.0cm, minimum height=0.55cm, rounded corners=2pt, font=\footnotesize},
    server/.style={draw, thick, rounded corners=4pt, fill=gray!6},
    token/.style={draw, fill=#1!25, minimum size=0.3cm, inner sep=0pt},
    arr/.style={->, thick},
    lbl/.style={font=\scriptsize},
    every node/.style={font=\small}
]

\node[queue, fill=blue!10] (q1p) at (-4.5, 1.5) {\scriptsize Generate (P)};
\node[queue, fill=blue!20] (q1d) at (-4.5, 0.8) {\scriptsize Generate (D)};
\node[queue, fill=orange!10] (q2p) at (-4.5, -0.2) {\scriptsize Verify (P)};
\node[queue, fill=orange!20] (q2d) at (-4.5, -0.9) {\scriptsize Verify (D)};

\draw[decorate, decoration={brace, amplitude=4pt, mirror}]
    (-5.9, 1.65) -- (-5.9, 0.65) node[midway, left=5pt, font=\scriptsize, align=center] {Class 1};
\draw[decorate, decoration={brace, amplitude=4pt, mirror}]
    (-5.9, -0.05) -- (-5.9, -1.05) node[midway, left=5pt, font=\scriptsize, align=center] {Class 2};

\node[server, minimum width=4.8cm, minimum height=3.0cm] (srv) at (1.2,0.3) {};
\node[font=\small\bfseries] at (1.2, 2.15) {GPU (one batch)};

\draw[gray, densely dotted] (-0.8, 0.3) -- (3.2, 0.3);
\node[font=\scriptsize, gray, right] at (2.6, 0.45) {prefill};
\node[font=\scriptsize, gray, right] at (2.6, 0.12) {decode};

\foreach \x in {0,1,2} {
    \node[token=blue] at (-0.3+\x*0.42, 1.2) {};
}
\node[lbl, right] at (1.0, 1.2) {\scriptsize Class 1};

\foreach \x in {0,1,2,3} {
    \node[token=orange] at (-0.3+\x*0.42, 0.65) {};
}
\node[lbl, right] at (1.4, 0.65) {\scriptsize Class 2};

\node[token=blue] at (-0.3,-0.2) {};
\node[lbl, right] at (0.05,-0.2) {\scriptsize Class 1};

\node[token=orange] at (1.5,-0.2) {};
\node[lbl, right] at (1.85,-0.2) {\scriptsize Class 2};

\node[token=blue] at (-0.3,-0.7) {};
\node[token=orange] at (1.5,-0.7) {};

\draw[arr, blue!60] (q1p.east) -- (-1.0, 1.0);
\draw[arr, blue!60] (q1d.east) -- (-1.0, 0.0);
\draw[arr, orange!60] (q2p.east) -- (-1.0, 0.5);
\draw[arr, orange!60] (q2d.east) -- (-1.0, -0.5);

\node[font=\footnotesize] at (1.2,-1.5) {token budget: $\sum\text{tokens} \le b_{\max}$};

\end{tikzpicture}
\caption{Multi-class batch serving. Two classes of requests (e.g., ``Generate'' and ``Verify'') share the same LLM server. Each class maintains its own prefill (P) and decode (D) buffers. The GPU batches tokens from all classes together, subject to the token budget $b_{\max}$.}
\label{fig:multi-class-batch}
\end{figure}

In this section, we analyze the stability of such networks. We show that work-conserving scheduling remains throughput-optimal when a single LLM server handles multiple request classes (\cref{sec:single-multi-class}), when multiple servers are connected via DAG routing (\cref{sec:dag-network}), and when the network has fork-join synchronization (\cref{sec:fork-join}). However, when the server-level routing graph contains \emph{cycles}, work-conserving scheduling alone is no longer sufficient: inspired by the Rybko--Stolyar network~\citep{rybko1992ergodicity}, we construct an example where a work-conserving policy causes instability despite the load condition being satisfied (\cref{sec:RS-network}). Together, these results suggest that DAG routing structures are preferred in multi-server LLM deployments, as they guarantee throughput optimality under any work-conserving policy; when routing cycles are unavoidable, the scheduling policy must be designed with greater care.

\subsection{A single LLM server with multiple request classes}
\label{sec:single-multi-class}

Consider a single LLM server that handles $J$ classes of requests.  
Each class~$j$ has a prefill
phase and a decode phase with  mean prefill and decode token sizes $(m^j_p, m^j_d)$. All classes share the same LLM server and can be batched together (\cref{fig:multi-class-batch}).
 Upon completing service in class~$i$, a request is routed to class~$j$ with probability $p_{ij}$ and departs the system with probability $1-\sum_{j}p_{ij}$.
Let $P=(p_{ij})$ be the $J\times J$ transion matrix.
 We assume $I-P$ is invertible, which means that each request will eventually leave the system.

This model arises, e.g., from different stages of an agentic workflow routed to the same LLM server.
 Let 
 $\alpha^j$ denote the external arrival rate to class~$j$.
 The \emph{effective} arrival rate $\lambda^j$ to each class is determined by the traffic equations
\begin{align}\label{eq:traffic}
  \lambda^j = \alpha^j + \sum_{i=1}^{J}\lambda^i p_{ij}, \quad j=1,\ldots,J.
\end{align}
 The load condition generalizes naturally:
\begin{align}\label{eq:load-multi-class}
  \sum_{j=1}^{J}\lambda^j(m^j_p+m^j_d) < b_{\max}/t_{b_{\max}}.
\end{align}
For this  multiclass queueing model as well as other queueing models in the rest of this section, we need to define state $X(n)$ at time $n$, similar to the definition in (\ref{eq:state}) for the basic LLM queueing model. 
Define 
\begin{align}
  \label{eq:state-multiclass}
   X(n) =\Big(R(n), \{(P^j_i(n), D^j_i(n)) : i \in \mathcal{Q}^j_n\}, j=1, \ldots, J\Big).
\end{align}  
where superscript $j$ 
indicates a class $j$ component. Under a suitable distributional assumption on inter-request and token size distributions and a scheduling algorithm, $\{X(n), n\ge 0\}$ is a DTMC. (For other queueing models in this section, we will skip the explicit definition of $X(n)$.)

\begin{theorem}\label{thm:multi-class}
Under the load condition \cref{eq:load-multi-class}, any work-conserving $K$-FCFS scheduling algorithm stabilizes the system (i.e., the DTMC is positive recurrent).
\end{theorem}

\begin{proof}[Proof sketch]
The fluid model approach  developed in \cref{sec:basic} for the basic LLM queueing model extends easily to the multiclass setting. We will prove the fluid model in the current setting is stable.
Recall the  fluid model 
for the basic LLM queueing model is defined via \cref{eq:f1}-\cref{eq:f5} in \cref{sec:fm}. In the current multiclass setting, the fluid model is defined analogously via
\begin{align}
  &  Z^j_p(t)=Z^j_p(0)+ \alpha^j t + \sum_{i=1}^J p_{ij} F^i_d(t)- F^j_p(t), \label{eq:mf1} \\
  &  Z^j_d(t)= Z^j_d(0)+  F^j_p(t) - F^j_d(t),\label{eq:mf2} \\
  & B^j_f(0)=0, \qquad 0\le  B^j_f(t )-B^j_f(s) \le (t-s) b_{\max}/t_{b_{\max}}, \quad 0\le  s\le  t,
    \quad f\in\{p, d\}.\label{eq:mf3}\\
  &   F^j_f(t) = \frac{1}{ m^j_f }B^j_f(f), \quad f\in \{p, d\}, \label{eq:mf4} \\
  &   W^j_f(t) = m^j_f Z^j_f(t), \quad f\in \{p, d\}.   \label{eq:mf5}   
\end{align}
for each class $j$ and any time $t\in \R_+\equiv[0, \infty)$.
In addition, under work-conserving scheduling algorithm, the fluid model satisfies
\begin{align}
\sum_j(\dot B^j_p(t) + \dot B^j_d(t))=b_{\max}/t_{b_{\max}} \quad \text{whenever}
\quad \sum_j(Z^j_p(t)+Z^j_d(t))>0
\end{align}
for each regular point $t$.

From \cref{eq:mf1} and \cref{eq:mf2}, 
we have, in vector form, 
\begin{align*}
    &Z_p(t) = Z_p(0) + \alpha t + P' F_d(t) - F_p(t),\\
    &Z_p(t)+Z_d(t) = Z_p(0)+Z_d(0) + \alpha t - (I-P') F_d(t).
\end{align*}
These two equalities lead to 
\begin{align}\label{eq:mz3}
  &(I-P')^{-1} ( Z_p(t)+Z_d(t)) = (I-P')^{-1}(Z_p(0)+Z_d(0)) + \lambda t -  F_d(t), \\
  & Z_p(t)+P'(I-P')^{-1} ( Z_p(t)+Z_d(t)) = Z_p(0)+P'(I-P')^{-1} ( Z_p(0)+Z_d(0)) + \lambda t - F_p(t) \label{eq:mz4}
\end{align} 
where in obtaining \cref{eq:mz4}, we have used the fact $\alpha +P'\lambda=\lambda$.

Let $M_p={\rm diag}(m_p) $, $M_d={\rm diag}(m_d)$,
and $e$ be the row vector of ones.
Define Lyapunov function
\begin{align*}
   f(t)& =e M_d (I-P')^{-1}(Z_p(t)+Z_d(t)) + e M_p 
   \Big(Z_p(t)+P'(I-P')^{-1} ( Z_p(t)+Z_d(t))\Big).
\end{align*}
It is clear that $f(t)>0$ is equivalent to $\sum_j (Z^j_p(t)+Z^j_d(t))>0$.
It follows from \cref{eq:mz3}-\cref{eq:mz4} that
\begin{align*}
    f(t)=f(0)+t \sum_j \lambda_j (m^j_p+m^j_d) - \sum_j (B_p^j(t)+B_d^j(t)).
\end{align*}
 Therefore, 
at each regular point with $f(t)>0$,
\begin{align*}
  \dot f(t) & = \sum_{j=1}^{J}\alpha^j\,\mu^j - \sum_{j=1}^{J}(\dot B^j_p(t)+\dot B^j_d(t)) \\
  & = \sum_{j=1}^{J}\lambda^j(m^j_p+m^j_d) - b_{\max}/t_{b_{\max}} \\
  & = -\delta < 0,
\end{align*}
proving that $f(t)=0$ for $t\ge f(0)/\delta$. Therefore, we have proved  the fluid model is stable. Fluid model stability implies positive recurrence of the DTMC by the same argument as in \cref{sec:basic}.
\end{proof}

This result confirms that work-conserving scheduling remains throughput-optimal for a single server regardless of how many request classes it serves---the total workload argument applies as long as all classes share the same batch.

\subsection{Multi-server DAG networks}
\label{sec:dag-network}

We now consider multiple LLM servers whose routing relationships form a directed acyclic graph (DAG). There are $S$ servers, where server~$s$ handles a set~$\mathcal{J}_s$ of request classes (both prefill and decode for each class are processed on the same server). Upon completing class~$i$ at server~$\sigma(i)$, a request is routed to class~$j$ at server~$\sigma(j)$ with probability~$p_{ij}$, and departs with probability $1-\sum_j p_{ij}$. Routing across servers forms a DAG, namely, servers can be indexed $s=1,\ldots,S$ so that
for each $s'<s$, 
\begin{align}
    p_{i,j}=0 \quad \text{ for each }\quad i\in \mathcal{J}_s,\quad 
    j\in \mathcal{J}_{s'}.
\end{align}
In DAG, there are no directed cycles among distinct servers (though routing between classes on the \emph{same} server is permitted and handled by \cref{thm:multi-class}).

The effective arrival rates $(\lambda_j)$ satisfy the traffic equations analogous to \cref{eq:traffic}, now across all classes on all servers. The per-server load condition is
\begin{align}\label{eq:load-dag}
  \sum_{j\in\mathcal{J}_s}\lambda^j(m^j_p+m^j_d) < c_s, \quad s=1,\ldots,S,
\end{align}
where $c_s = b^s_{\max}/t^s_{b^s_{\max}}$ is the processing capacity of server~$s$.

\begin{theorem}\label{thm:dag}
Under the per-server load condition \cref{eq:load-dag} and work-conserving $K$-FCFS scheduling at each server, the DTMC describing the network is positive recurrent.
\end{theorem}

\begin{proof}[Proof sketch]
We prove the corresponding fluid model is stable, following Remark 8.16 of \cite{dai2020processing}.
Order the servers topologically: $s_1,s_2,\ldots,s_S$ so that inter-server routing only goes from earlier to later servers. We prove by induction that, in the fluid model, each server's Lyapunov function (from \cref{thm:multi-class}) drains to zero in finite time.

\emph{Base case.} Server~$s_1$ receives only external arrivals (no inter-server routing targets it). Applying the Lyapunov argument of \cref{thm:multi-class} to the classes $\mathcal{J}_{s_1}$, the per-server Lyapunov $f^{s_1}(t)$ drains to zero by time $T_1=f^{s_1}(0)/\delta_{s_1}$, where $\delta_{s_1} = c_{s_1} - \sum_{j\in\mathcal{J}_{s_1}}\lambda^j(m^j_p+m^j_d)>0$. For $t\ge T_1$, server~$s_1$ operates in balance with departure rates matching the effective arrival rates.

\emph{Inductive step.} Suppose all upstream servers $s_1,\ldots,s_{i-1}$ have drained by time $T_{i-1}=\max_{j<i}T_j$. For $t\ge T_{i-1}$, each upstream server departs at its nominal effective rate, so server~$s_i$ receives arrivals at the rate $\lambda^j$ for each class $j\in\mathcal{J}_{s_i}$. The same Lyapunov argument gives $\dot f^{s_i}(t)=-\delta_{s_i}<0$ whenever $f^{s_i}(t)>0$. During the transient phase $[0,T_{i-1}]$, upstream servers may produce departures faster than their steady-state rates as they drain initial backlogs, but in the fluid model ($|Z(0)|\le 1$) the total excess arrivals are bounded, so $f^{s_i}(T_{i-1})<\infty$. Therefore $f^{s_i}(t)=0$ for $t\ge T_i = T_{i-1}+f^{s_i}(T_{i-1})/\delta_{s_i}$.

All servers drain by time~$T_S$, proving fluid stability and hence positive recurrence.
\end{proof}

This result shows that work-conserving scheduling remains throughput-optimal across any DAG of LLM servers, as long as each server's load condition is satisfied. The DAG structure ensures no feedback loops between servers, allowing a cascading stability argument.

\subsection{Fork-join networks}
\label{sec:fork-join}

The DAG result (\cref{thm:dag}) uses probabilistic routing, where each completed request routes to one downstream class. Fork-join networks introduce two features not captured by this model: (i)~\emph{forking}, where a request spawns multiple sub-tasks processed in parallel at different servers, and (ii)~\emph{join synchronization}, where all sub-tasks must complete before the request proceeds. This pattern arises naturally when one LLM call triggers parallel tool queries, searches, or sub-agent invocations (\cref{fig:agentic-workflows}(b)).

Consider $k+1$ LLM servers indexed by $q_1,q_2,\ldots,q_{k+1}$. Requests arrive at server~$q_1$ at rate~$\lambda$. Upon completing processing at~$q_1$, each request forks into $k-1$ sub-tasks that are sent to servers $q_2,\ldots,q_k$ (processed in parallel). Once \emph{all} $k-1$ sub-tasks complete, the results merge and the request is processed by server~$q_{k+1}$ before departing. Let $(m^s_p, m^s_d)$ denote the mean prefill and decode token sizes at server~$q_s$, and let $c_s = b^s_{\max}/t^s_{b^s_{\max}}$ denote the processing capacity of server~$q_s$.

\begin{theorem}\label{thm:fork-join}
Assume the load condition holds at each server:
\begin{align}
  \label{eq:fj}
    \lambda(m^s_p+m^s_d) < c_s \quad  \text{ for each }
s\in \{1, \ldots, k+1\}.
\end{align}
Under any $K$-FCFS work-conserving scheduling algorithm at each server, the DTMC describing the system is positive recurrent.
\end{theorem}

The key challenge is the \emph{join} synchronization: a request cannot proceed to~$q_{k+1}$ until all $k-1$ sub-tasks at $q_2,\ldots,q_k$ complete. The $K$-FCFS condition provides a way forward through \emph{bounded overtaking} (\cref{lem:bounded-overtaking} in \cref{sec:dynamics}): at each server in $q_1, q_2,\ldots,q_{k+1}$, the completion order is within a constant~$K'$ of the arrival order. Since all sub-tasks of a given request arrive at $q_2,\ldots,q_k$ in the same order (forked from~$q_1$), the completion orders across fork servers differ by at most~$2K'$. This bounds the total number of 
requests in "join buffers" by $2K'(k-1)$, which vanishes under fluid scaling. Once the join buffers vanish in the fluid limit, the cascading argument of \cref{thm:dag} applies: $q_1$ drains first, then $q_2,\ldots,q_k$ drain, then the join releases at rate~$\lambda$, and finally $q_{k+1}$ drains. The full proof is given in \cref{sec:proof-fork-join}.

\begin{remark}
\Cref{thm:fork-join} readily extends to a multi-server DAG network (\cref{sec:dag-network}) in which individual classes may have fork-join internal structure. From each server's perspective, fork sub-tasks are simply additional classes that arrive, are processed, and depart; the per-server Lyapunov argument of \cref{thm:multi-class} applies, and the cascading argument of \cref{thm:dag} carries through with the join buffers controlled by bounded overtaking.
\end{remark}

\paragraph{Related work on fork-join networks.}
The stability of fork-join queueing networks under FIFO service was established by \citet{konstantopoulos1989fork} and \citet{baccelli1989fork}. Both works proved that the natural load condition suffices for stability when each server operates under strict FIFO; \citet{baccelli1989fork} further derived stochastic ordering bounds on system response times. More recently, \citet{ozkan2016fork} studied optimal scheduling control at a shared server in a fork-join network under heavy traffic, and \citet{gao2025fork} characterized stability regions for fork-join systems with redundancy and heterogeneous servers under FCFS. \Cref{thm:fork-join} extends the stability guarantee to the broader class of $K$-FCFS policies, which permit bounded reordering and are natural in the LLM batch-service setting where multiple requests are processed simultaneously. The key enabling technique is the \emph{bounded overtaking} property (\cref{lem:bounded-overtaking} in \cref{sec:dynamics}), which controls the join buffer sizes under $K$-FCFS; this extension from FIFO to $K$-FCFS may be of independent interest for fork-join networks beyond the LLM context.

\subsection{Work-conserving algorithms can fail in networks: a Rybko--Stolyar example}
\label{sec:RS-network}

While work-conserving scheduling is throughput-optimal for a single server (\cref{thm:main}), DAG networks (\cref{thm:dag}), and fork-join networks (\cref{thm:fork-join}), it is \emph{not} sufficient when the server-level routing graph contains cycles. We demonstrate this through a Rybko--Stolyar (RS) type network~\citep{rybko1992ergodicity}, adapted to the LLM batch-service setting.

Consider two LLM servers indexed by $j \in \{1, 2\}$ (see \cref{fig:agent-illu}). Two types of requests flow through the network:
\begin{itemize}
    \item \textbf{Type A (product search)}: first a short task at server~1, then a long task at server~2.
    \item \textbf{Type B (customer complaint)}: first a short task at server~2, then a long task at server~1.
\end{itemize}
Such workflows arise naturally in multi-agent systems. For instance, in an e-commerce setting, server~1 may run a model specialized for product retrieval (query parsing, RAG, verification) while server~2 runs a different model specialized for communication (emotional tone, compliance). A product search (type~A) starts at server~1 and then routes to server~2; a customer complaint (type~B) flows in the reverse direction. Because the two request types route in opposite directions, the resulting server-level routing graph contains a \emph{cycle}.

\begin{figure}[t]
    \centering
\begin{tikzpicture}[
    >=Stealth,
    arr/.style={->, thick},
    every node/.style={font=\small}
]

\draw[thick, rounded corners=4pt, fill=blue!8] (-1.8,-1.35) rectangle (1.8,1.35);
\draw[thick, rounded corners=4pt, fill=blue!8] (4.7,-1.35) rectangle (8.3,1.35);

\node[font=\small\bfseries] at (0, 0) {LLM Server 1};
\node[font=\small\bfseries] at (6.5, 0) {LLM Server 2};

\draw[fill=orange!25, thick] (-1.45, 0.5) rectangle (-1.05, 0.9);

\draw[fill=green!25, thick] (0.55, -0.9) rectangle (0.95, -0.5);
\draw[fill=green!25, thick] (1.05, -0.9) rectangle (1.45, -0.5);

\draw[fill=green!25, thick] (5.05, 0.5) rectangle (5.45, 0.9);
\draw[fill=green!25, thick] (5.55, 0.5) rectangle (5.95, 0.9);

\draw[fill=orange!25, thick] (7.55, -0.9) rectangle (7.95, -0.5);

\draw[arr] (-3.6, 0.7) -- (-1.8, 0.7);
\node[font=\small, left] at (-3.6, 0.7) {Type A\, (s,s)};

\draw[arr] (9.3, -0.7) -- (8.3, -0.7);
\node[font=\small, right] at (9.3, -0.7) {Type B\, (s,s)};

\draw[arr] (1.8, 0.7) -- (4.7, 0.7);
\node[font=\small, above] at (3.25, 0.7) {(l,s)};

\draw[arr] (4.7, -0.7) -- (1.8, -0.7);
\node[font=\small, below] at (3.25, -0.7) {(l,s)};

\end{tikzpicture}
    \caption{Rybko--Stolyar type network of two LLM servers. (s,s) denotes short prefill and short decode; (l,s) denotes long prefill and short decode.}
    \label{fig:agent-illu}
\end{figure}

We simulate this network with the following parameters. Requests arrive as a Poisson process with equal rates for types~A and B, at total load $\rho = 0.9$. Each server has token budget $b_{\max} = 768$ and constant batch processing time $t_b = 1$. Token sizes are i.i.d.\ with means
\begin{align}
  \label{eq:agent1}
  (m^{(A,1)}_{p}, m^{(A,1)}_d)=(32,32), \quad (m^{(B,1)}_{p}, m^{(B,1)}_d)=(512,32)
\end{align}
for server~1, and
\begin{align}
  \label{eq:agent2}
  (m^{(A,2)}_{p}, m^{(A,2)}_d)=(512,32), \quad (m^{(B,2)}_{p}, m^{(B,2)}_d)=(32,32)
\end{align}
for server~2. Both servers use work-conserving scheduling with mixed batching. Server~1 gives non-preemptive priority to type~B sub-tasks, and server~2 gives non-preemptive priority to type~A sub-tasks. Within each type, decode-priority FCFS is used.

\Cref{fig:agent-comparison} (top row) shows that despite $\rho < 1$, the total number of tokens grows without bound. This demonstrates that \emph{not all work-conserving policies are stable for networks of LLM agents}: the choice of priority order matters.

Reversing the priority assignment---server~1 prioritizes type~A, server~2 prioritizes type~B---restores stability (\cref{fig:agent-comparison}, bottom row). One can prove positive recurrence of this configuration using the fluid limit approach of \cref{sec:basic}. This instability phenomenon is the LLM batch-service analog of the classical Rybko--Stolyar example~\citep{rybko1992ergodicity,dai2020processing}.

\paragraph{Design principle: avoid routing cycles across servers.}
The root cause of instability above is the \emph{cycle} in the server-level routing graph: type~A flows from server~1 to server~2, while type~B flows from server~2 to server~1. By contrast, \cref{thm:dag,thm:fork-join} guarantee that any work-conserving policy is throughput-optimal whenever the server-level routing graph is a DAG (with or without fork-join synchronization). This suggests a practical design principle for multi-server LLM deployments: when assigning agentic workloads to servers, one should strive to arrange the server-level routing as a DAG, avoiding cycles across servers whenever possible. When cycles are unavoidable, the scheduling policy must be chosen with care---not all work-conserving policies remain stable. In the conventional multiclass queueing network setting, dynamic scheduling algorithms such as back-pressure~\citep{tassiulas1990stability} and proportionally fair resource allocation~\citep{kelly1997charging,kelly1998rate}, including HLPPS~\citep{bramson1996convergence}, have been proved to be throughput optimal. See, also Chapters 9 and 10 of \cite{dai2020processing}.  These algorithms can be adapted to the setting of 
networks of LLM servers.  Understanding which algorithms can robustly balance the  throughput and latency trade-off  for general agentic workload  is an important direction for future work.

\begin{figure}[ht]
    \begin{subfigure}{0.49\linewidth}
        \centering
        \includegraphics[width=\linewidth]{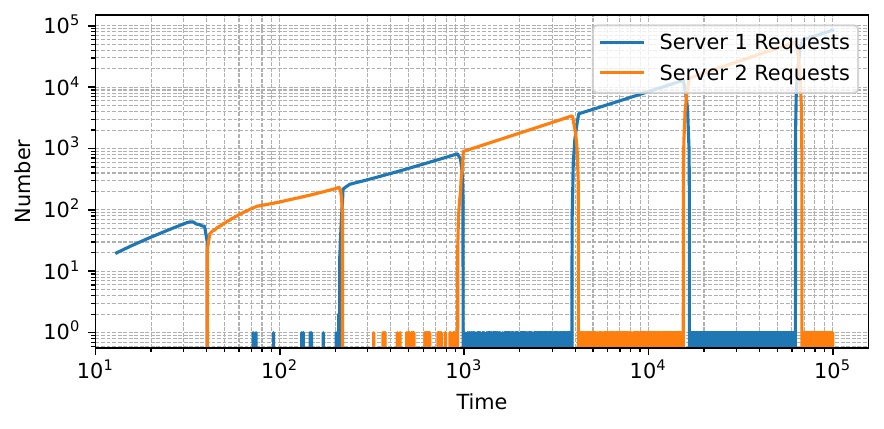}
    \end{subfigure}
    \hfill
    \begin{subfigure}{0.49\linewidth}
        \centering
        \includegraphics[width=\linewidth]{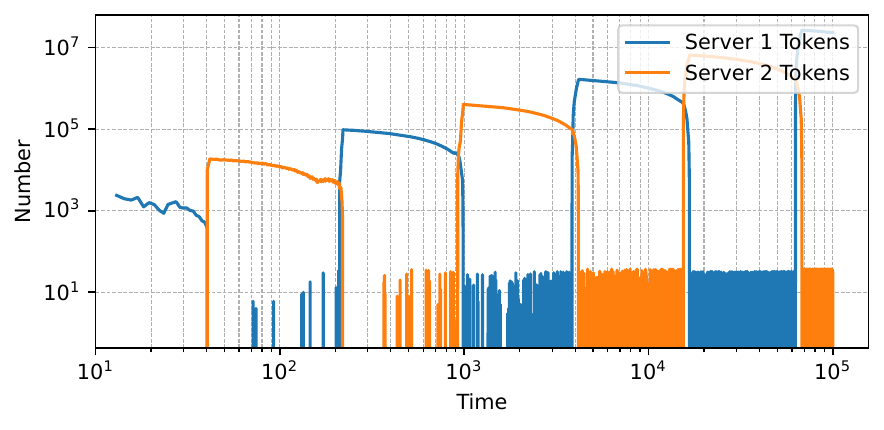}
    \end{subfigure}\vspace{0.5em}
    \centering
    \begin{subfigure}{0.49\linewidth}
        \centering
        \includegraphics[width=\linewidth]{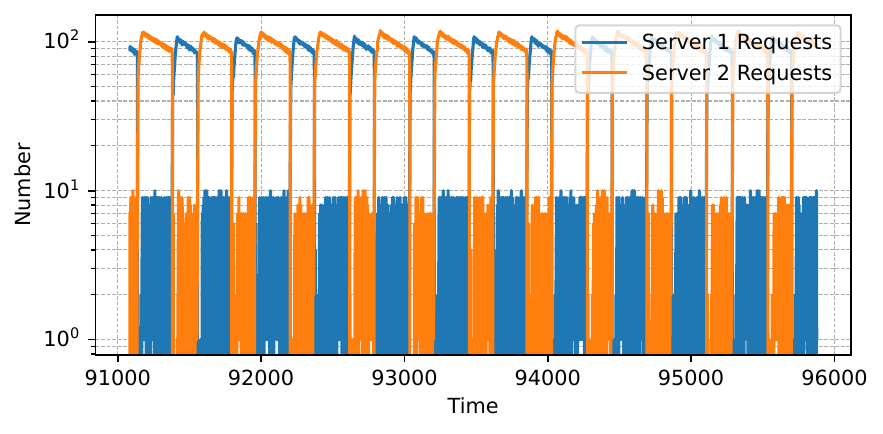}
    \end{subfigure}
    \hfill
    \begin{subfigure}{0.49\linewidth}
        \centering
        \includegraphics[width=\linewidth]{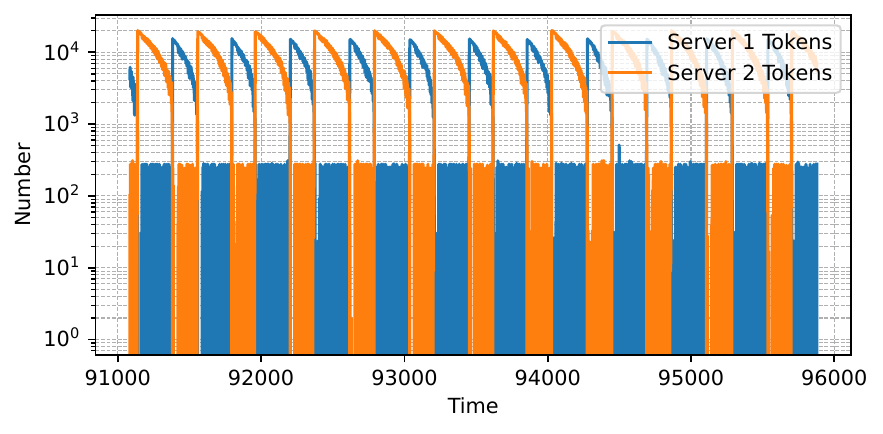}
    \end{subfigure}
    \caption{Rybko--Stolyar network at $\rho = 0.9$. Top row: under the destabilizing priority assignment, total requests (left) and tokens (right) grow without bound. Bottom row: reversing the priority restores stability.}
    \label{fig:agent-comparison}
\end{figure}

\section{Extension to Maximal Batch Size Constraint}\label{sec:extension-to-active-request-constraint}

Throughout this paper, the only constraint on a batch has been the token budget $b \le b_{\max}$. This is the primary bottleneck in practice: the batch processing time $t_b$ is determined by the total number of tokens, and GPU compute is the dominant limiting resource. In this regime, \cref{thm:main} provides a clean scalar threshold for stability, and work-conserving scheduling is throughput-optimal.

In practice, however, a maximal batch size constraint $k \le k_{\max}$ is often imposed to limit GPU memory usage and control tail latency. When $k_{\max}$ is large relative to the typical number of tokens per request, this constraint is rarely binding and can be safely ignored. However, when $k_{\max}$ is small (e.g., for long-context models or memory-constrained deployments), the batch size constraint can fundamentally alter the stability region.

In this section, we incorporate the batch size constraint $k \le k_{\max}$ and show how it complicates the analysis. The stability region is no longer a single scalar threshold but a convex region in the $(\lambda_p, \lambda_d)$ plane, where $\lambda_p := \lambda m_p$ and $\lambda_d := \lambda m_d$ are the prefill and decode token arrival rates, respectively. We believe that work-conserving scheduling remains a sound principle to advocate in general, but we demonstrate that under extreme configurations, certain work-conserving algorithms can fail while some non-work-conserving algorithms succeed. We further experimentally verify this phenomenon and leave a thorough study to future work.

\subsection{Stability region under the batch size constraint}

We now consider both the token budget and the batch size constraint. Each batch is limited to at most $k_{\max}$ requests and at most $b_{\max}$ tokens. For notational convenience in this section, let $k_p$ and $k_d$ denote the number of prefill and decode requests in a batch, and let $x$ and $y$ denote the total prefill and decode tokens, respectively. The constraints are:
\begin{align}
    k_p + k_d &= k \leq k_{\max}, \label{eq:memory_constraint} \\
    x + y &= b \leq b_{\max}. \label{eq:batch_constraint}
\end{align}

To characterize the stability region, we decompose the arrival workload into prefill and decode components $\lambda_p$ and $\lambda_d$. In stationarity, positive recurrence requires the expected processing rate of each token type to match or exceed its arrival rate.

\begin{condition}[Necessary condition for stability]
\label{stable_condition}
Let $u_{x,y}$ denote the long-run fraction of time spent processing a batch with $x$ prefill tokens and $y$ decode tokens. The system is positive recurrent only if there exists a probability vector $u = (u_{x,y})_{x,y \ge 0}$ with $\sum u_{x,y} \le 1$ such that
\begin{align}
    \lambda_p &\le \sum_{\substack{x+y \le b_{\max} \\ y \le k_{\max}-1}} \frac{x}{t_{x+y}}\, u_{x,y}, \label{eq:stable_condition_lambda_p} \\
    \lambda_d &\le \sum_{\substack{x+y \le b_{\max} \\ y \le k_{\max}-1}} \frac{y}{t_{x+y}}\, u_{x,y} + \frac{k_{\max}}{t_{k_{\max}}}\, u_{0,k_{\max}}. \label{eq:stable_condition_lambda_d}
\end{align}
\end{condition}
The right-hand side of \cref{eq:stable_condition_lambda_p} is the long-run average rate at which prefill tokens are processed: a batch with $x$ prefill tokens and processing time $t_{x+y}$ contributes $x/t_{x+y}$ prefill tokens per unit time, weighted by the fraction $u_{x,y}$. For stability, this must be at least $\lambda_p$. Similarly, \cref{eq:stable_condition_lambda_d} requires the decode processing rate to match $\lambda_d$; the additional term $u_{0,k_{\max}}$ accounts for decode-only batches where all $k_{\max}$ slots are occupied by decoding requests, each producing one token per step , with no prefill processing.

While the necessary condition \cref{eq:stable_condition_lambda_p}--\cref{eq:stable_condition_lambda_d} involves an optimization over infinitely many variables $u_{x,y}$, it admits a surprisingly clean characterization under the piecewise constant processing time model (\cref{eq:batch-processing-time}): the stability region reduces to the convex hull of just four vertices.

\begin{theorem} \label{convex_hull}
Assume batch processing times follow \cref{eq:batch-processing-time} with $a > 0$, $b_0 \mid b_{\max}$, and $k_{\max} \le b_0$. The pair $(\lambda_p, \lambda_d) \in \mathbb{R}^2_{\ge 0}$ satisfies the necessary condition \cref{eq:stable_condition_lambda_p}--\cref{eq:stable_condition_lambda_d} if and only if $(\lambda_p, \lambda_d)$ lies in the convex hull of the following four vertices:
\begin{align*}
    A &= \Bigl(\frac{b_{\max}}{t_{b_{\max}}},\, 0\Bigr), &
    B &= \Bigl(\frac{b_{\max}-k_{\max}+1}{t_{b_{\max}}},\, \frac{k_{\max}-1}{t_{b_{\max}}}\Bigr), \\
    C &= \Bigl(\frac{b_0-k_{\max}+1}{t_{b_0}},\, \frac{k_{\max}-1}{t_{b_0}}\Bigr), &
    D &= \Bigl(0,\, \frac{k_{\max}}{t_{k_{\max}}}\Bigr).
\end{align*}
Here, the inequality $(\lambda_p, \lambda_d) \le (a, b)$ is interpreted componentwise.
\end{theorem}

\Cref{fig:convexhull} illustrates the stability region. Each vertex corresponds to a distinct operating regime:
\begin{enumerate}
    \item[\textbf{(A)}] The batch consists entirely of prefill tokens, filling the token budget $b_{\max}$.
    \item[\textbf{(B)}] The batch mixes $k_{\max}-1$ decode tokens with $b_{\max}-k_{\max}+1$ prefill tokens at the maximal token budget.
    \item[\textbf{(C)}] The batch mixes $k_{\max}-1$ decode tokens with $b_0 - k_{\max} + 1$ prefill tokens at the smaller token load $b_0$.
    \item[\textbf{(D)}] The batch consists solely of $k_{\max}$ decode tokens, constrained by the batch size limit.
\end{enumerate}

Vertex $C$ reveals a counterintuitive phenomenon: it is sometimes optimal to form batches with token load $b_0$ rather than $b_{\max}$. This occurs because using the full token budget $b_{\max}$ with many decode requests forces the batch processing time to a higher step of $t_b$, reducing the per-token processing rate. By operating at token load $b_0$, the system achieves a more favorable rate for certain arrival configurations.

\begin{figure}[t]
\centering
\begin{tikzpicture}[scale=2.6, every node/.style={font=\footnotesize}]
  \draw[->] (0,0) -- (2.1,0) node[below] {$\lambda_p$};
  \draw[->] (0,0) -- (0,2.1) node[left] {$\lambda_d$};

  \coordinate (A) at (1.5,0);
  \coordinate (B) at (1.4,0.2);
  \coordinate (C) at (0.3,1.43);
  \coordinate (D) at (0,1.5);
  \coordinate (O) at (0,0);

  \draw[thick, fill=blue!10] (O.center) -- (A) -- (B) -- (C) -- (D) -- cycle;

  \foreach \p in {A,B,C,D} \fill (\p) circle (0.6pt);

  \node[below, font=\scriptsize] at (A) {$A\;\bigl(\tfrac{b_{\max}}{t_{b_{\max}}},\, 0\bigr)$};
  \node[above right=1pt, font=\scriptsize] at (B) {$B\;\bigl(\tfrac{b_{\max}-k_{\max}+1}{t_{b_{\max}}},\, \tfrac{k_{\max}-1}{t_{b_{\max}}}\bigr)$};
  \node[above right=2pt, font=\scriptsize] at (C) {$C\;\bigl(\tfrac{b_0-k_{\max}+1}{t_{b_0}},\, \tfrac{k_{\max}-1}{t_{b_0}}\bigr)$};
  \node[left=4pt, font=\scriptsize] at (D) {$D\;\bigl(0,\, \tfrac{k_{\max}}{t_{k_{\max}}}\bigr)$};

\end{tikzpicture}
\caption{Stability region under both the token budget $b_{\max}$ and batch size constraint $k_{\max}$. The achievable $(\lambda_p, \lambda_d)$ pairs lie within the convex hull $O$--$A$--$B$--$C$--$D$.}
\label{fig:convexhull}
\end{figure}

\subsection{Work-conserving algorithms can fail}

The existence of vertex $C$ has a striking consequence: work-conserving algorithms---which always fill batches to the full token budget $b_{\max}$---may fail to stabilize the system at arrival rates near the segment $\overline{CD}$.

We demonstrate this with a concrete experiment. We use CodeLlama-34B with TP1, whose batch processing time is modeled by $t_b = 11.28 + 35.47 \times \lceil b/128 \rceil$ (see \cref{fig:batch-processing-time-linear-form}), giving $b_0 = 128$. We set $k_{\max} = 100$ and choose arrival parameters corresponding to vertex $C$: $(\lambda_p, \lambda_d) = (29/46.75,\, 99/46.75)$, with a fixed inter-arrival time of 467.5\,ms, 290 prefill tokens, and 990 decode tokens per request.

\Cref{fig:scheduler_comparison} shows the results. Under a token budget of $b_{\max} = 128$, Sarathi-Serve is the only algorithm that maintains stability. It constructs batches of exactly 29 prefill tokens and 99 decode tokens when there are 99 outstanding requests, precisely matching the operating point $C$. In contrast:
\begin{itemize}
    \item FasterTransformer and vanilla vLLM fail because they do not support mixed batching, preventing them from balancing prefill and decode processing rates.
    \item Orca, despite supporting mixed batching, is prefill-prioritized: when a new request arrives, it constructs consecutive batches dominated by prefill tokens, causing the decode processing rate to fall below vertex $C$.
\end{itemize}
\Cref{fig:sarathi_comparison} further shows that even Sarathi-Serve fails when the token budget is increased to $b_{\max} = 1024$. In this case, the algorithm fills batches to 1024 tokens, which accelerates prefill processing but causes the GPU to accumulate $k_{\max} = 100$ outstanding decode requests. Subsequent batches then consist entirely of 100 decode tokens at an inefficient rate, disrupting the balance.

\begin{figure}[ht]
  \centering
  \begin{subfigure}[t]{0.49\textwidth}
    \centering
    \includegraphics[width=\linewidth,height=0.60\textwidth]%
      {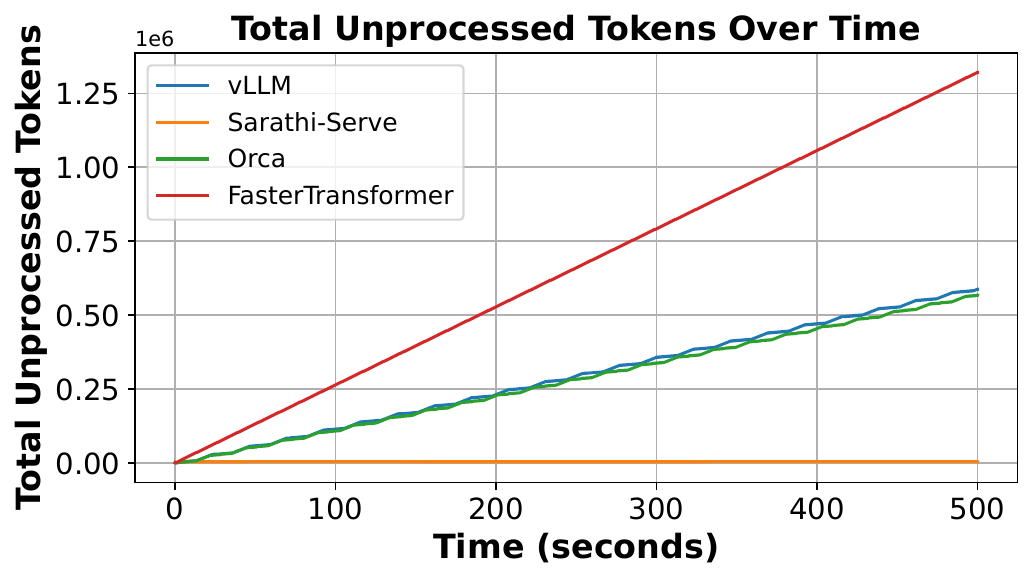}
    \caption{All schedulers with $b_{\max} = 128$.}\label{fig:scheduler_comparison}
  \end{subfigure}\hfill
  \begin{subfigure}[t]{0.49\textwidth}
    \centering
    \includegraphics[width=\linewidth,height=0.60\textwidth]%
      {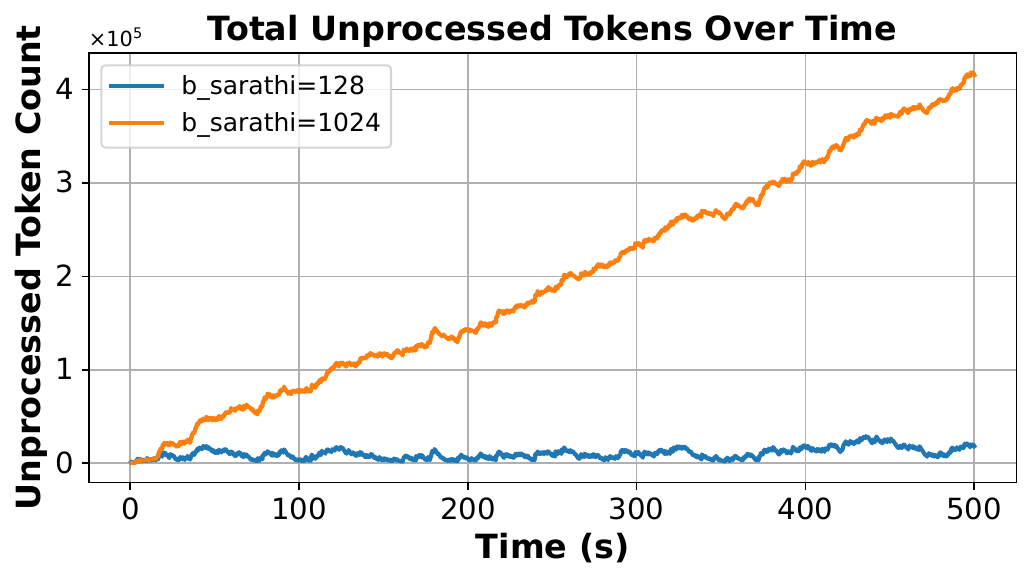}
    \caption{Sarathi-Serve with $b_{\max} \in \{128, 1024\}$.}\label{fig:sarathi_comparison}
  \end{subfigure}
  \caption{Unprocessed tokens over time at the operating point $C$. (a)~Only Sarathi-128 maintains stability. (b)~Sarathi-1024 fails despite being work-conserving, because filling to $b_{\max} = 1024$ disrupts the prefill--decode balance.}
  \label{fig:kmax_experiments}
\end{figure}

\subsection{Discussion}

The batch size constraint $k_{\max}$ introduces a fundamental asymmetry between prefill and decode tokens that the token budget alone does not capture. Because each decode request contributes exactly one token per batch but occupies one slot in the batch size limit, decode-heavy workloads are disproportionately affected by $k_{\max}$.

The stability region in \cref{convex_hull} provides a necessary condition for positive recurrence. An important open question is whether every point in this convex hull is achievable by some scheduling algorithm, particularly when the distributions of prefill and decode lengths vary. Designing an adaptive algorithm that guarantees stability for all arrival configurations within the convex hull---or proving that no such algorithm exists---remains an interesting direction for future work.

\section{Future Directions and Conclusion}

We have developed a queueing-theoretic framework for LLM inference scheduling. Our main contributions are: (i)~a complete characterization of the stability region for a single LLM server under the token budget constraint, showing that work-conserving $K$-FCFS algorithms are throughput-optimal (\cref{thm:main}); (ii)~an extension to parallel servers (\cref{pro:parallel}) and to a single server with multiple request classes (\cref{thm:multi-class}); (iii)~stability results for multi-server DAG networks (\cref{thm:dag}) and fork-join networks (\cref{thm:fork-join}) under $K$-FCFS; (iv)~a Rybko--Stolyar example showing that work-conserving scheduling can fail in certain network topologies; and (v)~an analysis of the batch size constraint, revealing a richer stability region where work-conserving algorithms may not suffice. We conclude with preliminary results on latency and several open directions.

\textbf{Latency optimization.}
This paper focuses on throughput and stability. However, in practice, the choice of scheduling algorithm also affects latency, and the best policy depends on the workload and latency metric of interest. As a preliminary investigation, \cref{tab:latency-code} reports end-to-end (E2E) latency, time to first token (TTFT), and time between tokens (TBT) for the four scheduling algorithms on CodeLlama-34B under production coding traces at 1.2 queries per second (QPS). Sarathi-Serve achieves the best E2E and prefill latency across all percentiles, while FasterTransformer has the lowest TBT due to its decode-only batching---highlighting a trade-off between prefill responsiveness and decode smoothness.

\Cref{fig:ablation-study-conversation-trace} further shows how the token budget within Sarathi-Serve affects latency under conversation traces. A moderate budget (e.g., 512 tokens) yields the lowest median E2E latency, a larger budget (e.g., 1024 tokens) improves prefill latency, and an excessively small budget (e.g., 128 tokens) incurs significant overhead from repeated chunked prefill iterations. The key takeaway is that the optimal token budget depends on the distribution of prefill lengths and service level objectives (SLOs). A comprehensive study of latency optimization is an important direction for future work.

\begin{table}[!htp]
    \centering
\caption{Latency comparison of the four scheduling algorithms on CodeLlama-34B (single A100, 1.2 QPS, production coding traces).}\label{tab:latency-code}
\scriptsize
\begin{tabular}{lrrrrr}\toprule
E2E (ms) &Median &P90 &P95 &P99 \\
FasterTransformer &45.38 &116.81 &123.98 &139.65 \\
Orca &7.1 &16.13 &18.6 &30.59 \\
vLLM &6.22 &14.62 &20.74 &40.75 \\
Sarathi-Serve &\textbf{3.78} &\textbf{8.8} &\textbf{12.29} &\textbf{22.36} \\ \bottomrule
& & & & \\
TTFT (ms) &Median &P90 &P95 &P99 \\
FasterTransformer &44.61 &114.53 &123.63 &138.87 \\
Orca &4.55 &13.99 &14.83 &15.9 \\
vLLM &2.52 &7.07 &8.58 &10.09 \\
Sarathi-Serve &\textbf{2.02} &\textbf{3.87} &\textbf{5.81} &\textbf{7.47} \\ \bottomrule
& & & & \\
TBT (ms) &Median &P90 &P95 &P99 \\
FasterTransformer &\textbf{0.04} &\textbf{0.04} &\textbf{0.04} &\textbf{0.05} \\
Orca &0.08 &0.19 &0.24 &0.38 \\
vLLM &0.17 &0.43 &0.47 &0.65 \\
Sarathi-Serve &0.13 &0.17 &0.17 &0.17 \\
\bottomrule
\end{tabular}
\end{table}

\begin{figure}
    \centering
    \begin{subfigure}{0.49\linewidth}
        \centering
        \includegraphics[width=\linewidth]{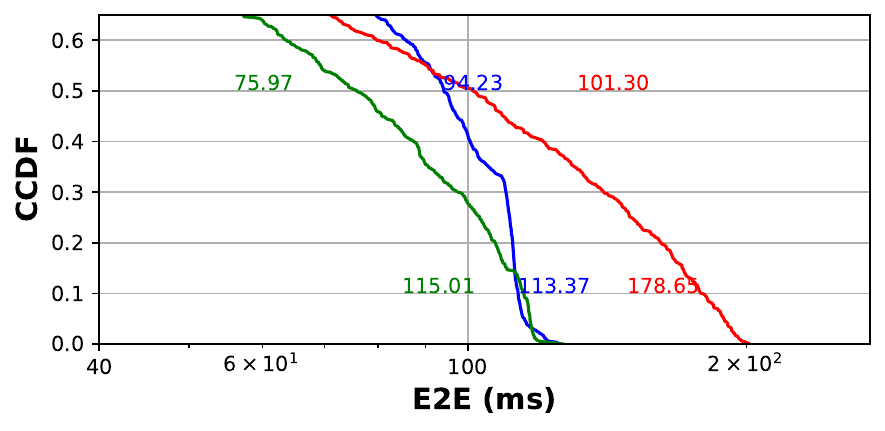}
    \end{subfigure}
    \hfill
    \begin{subfigure}{0.49\linewidth}
        \centering
        \includegraphics[width=\linewidth]{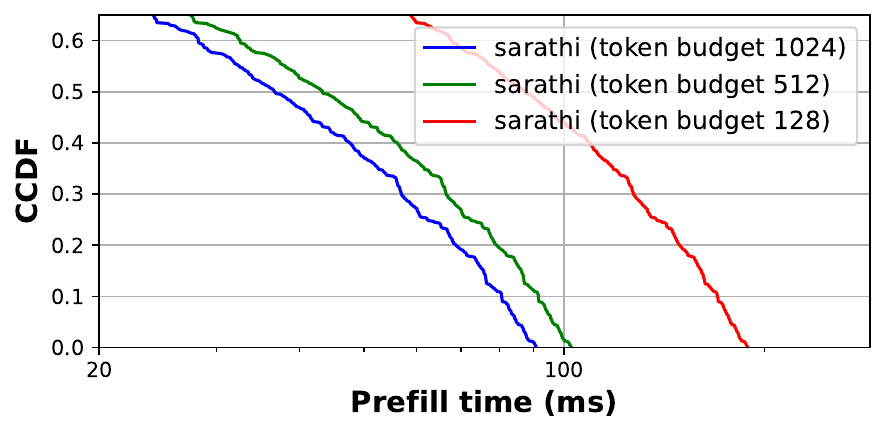}
    \end{subfigure} 
    \caption{E2E latency (left) and TTFT (right) for different token budgets of Sarathi-Serve on CodeLlama-34B under production conversation traces.}
    \label{fig:ablation-study-conversation-trace}
\end{figure}

\textbf{Other directions.}
Beyond latency, several important directions remain.
\begin{itemize}
    \item \textbf{Full treatment of the batch size constraint.} The stability region under the joint token budget and batch size constraint (\cref{sec:extension-to-active-request-constraint}) is only partially characterized. Designing an adaptive algorithm that achieves throughput optimality for all arrival configurations within the convex hull remains open.
    \item \textbf{Richer batch processing time models.} Our analysis uses the piecewise constant model $t_b = c + a\lceil b/b_0\rceil$, which captures the dominant linear-layer cost but omits attention-layer effects. Extending the stability analysis to more expressive models---for example, incorporating the total number of KV cache tokens as in \cite{ao2025optimizing}, or the fully analytical batch processing time characterization developed in \cite{bari2025optimal}---is an important direction. In particular, understanding how these richer models interact with multi-server AI-agent topologies (\cref{sec:AI-agent}) may reveal new scheduling trade-offs not captured by our current framework.
    \item \textbf{Tail-latency objectives.} Optimal scheduling under tail-latency measures (e.g., tail TBT or TTFT) under various load regimes warrants further study~\citep{yu2024strongly,zhang2025tail}.
    \item \textbf{System-level optimization.} Joint optimization of model autoscaling, resource allocation across multiple models, KV cache policies, and load balancing may yield substantial improvements, particularly under bursty or skewed workloads. Scheduling under multi-tenancy---where best-effort and latency-critical requests share the same servers---presents additional challenges.
\end{itemize}
We hope this work encourages collaborative efforts between queueing theorists and system practitioners to develop models that accurately capture LLM inference dynamics and inform the design of next-generation scheduling algorithms.

\bibliography{main}

@misc{zhang2025tail,
  title={Tail-Optimized Caching for {LLM} Inference},
  author={Zhang, Wenxin and Li, Yueying and Moallemi, Ciamac C. and Peng, Tianyi},
  year={2025},
  eprint={2510.15152},
  archivePrefix={arXiv},
  primaryClass={cs.LG},
}

@article{konstantopoulos1989fork,
  title={Stationarity and stability of fork-join networks},
  author={Konstantopoulos, Panagiotis and Walrand, Jean},
  journal={Journal of Applied Probability},
  volume={26},
  number={3},
  pages={604--614},
  year={1989},
  publisher={Applied Probability Trust},
}

@article{baccelli1989fork,
  title={The fork-join queue and related systems with synchronization constraints: Stochastic ordering and computable bounds},
  author={Baccelli, Fran{\c{c}}ois and Makowski, Armand M. and Shwartz, Adam},
  journal={Advances in Applied Probability},
  volume={21},
  number={3},
  pages={629--660},
  year={1989},
  publisher={Applied Probability Trust},
}

@article{ozkan2016fork,
  title={On the control of fork-join networks},
  author={{\"{O}}zkan, Erhun and Ward, Amy R.},
  journal={The Annals of Applied Probability},
  volume={30},
  number={4},
  pages={1730--1785},
  year={2020},
  publisher={Institute of Mathematical Statistics},
}

@article{gao2025fork,
  title={Stability, design and control of fork-join systems with redundancy and heterogeneous servers},
  author={Gao, Chutong and Iravani, Seyed and Perry, Ohad},
  journal={Working paper},
  year={2025},
}

@misc{vllmv1,
  author       = {vLLM Project},
  title        = {{vLLM V1}},
  year         = {2025},
  url          = {https://blog.vllm.ai/2025/01/27/v1-alpha-release.html},
  note         = {Accessed: 2025-06-01},
}

@misc{sglang2025,
  author       = {SGLang},
  title        = {{SGLang}},
  year         = {2025},
  url          = {https://github.com/sgl-project/sglang},
  note         = {Accessed: 2025-06-01},
}

@article{bari2025optimal,
  title={Optimal Scheduling Algorithms for {LLM} Inference: Theory and Practice},
  author={Bari, Agrim and Hegde, Parikshit and de Veciana, Gustavo},
  journal={arXiv preprint arXiv:2508.01002},
  year={2025},
}

@misc{openai_tokenizer,
  author       = {OpenAI},
  title        = {Tokenizer --- OpenAI API},
  howpublished = {\url{https://platform.openai.com/tokenizer}},
  note         = {Accessed: 2025-11-16},
}

@misc{hashimoto2025cs336,
  title        = {CS336: Language Modeling from Scratch, Spring 2025},
  author       = {Hashimoto, Tatsunori and Liang, Percy and Band, Neil and R{\o}d, Marcel and Kuditipudi, Rohith},
  year         = {2025},
  howpublished = {\url{https://stanford-cs336.github.io/spring2025/}},
  note         = {Accessed: 2025-11-14}
}

@article{agrawal2024vidur,
  title={Vidur: A large-scale simulation framework for llm inference},
  author={Agrawal, Amey and Kedia, Nitin and Mohan, Jayashree and Panwar, Ashish and Kwatra, Nipun and Gulavani, Bhargav and Ramjee, Ramachandran and Tumanov, Alexey},
  journal={Proceedings of Machine Learning and Systems},
  volume={6},
  pages={351--366},
  year={2024}
}

@article{yu2024strongly,
  title={Strongly tail-optimal scheduling in the light-tailed M/G/1},
  author={Yu, George and Scully, Ziv},
  journal={Proceedings of the ACM on Measurement and Analysis of Computing Systems},
  volume={8},
  number={2},
  pages={1--33},
  year={2024},
  publisher={ACM New York, NY, USA}
}

@misc{vllm2025chunked,
  author       = {vLLM Project},
  title        = {Chunked Prefill --- Optimization and Tuning},
  year         = {2025},
  note         = {Accessed: 2025-04-23},
  url          = {https://docs.vllm.ai/en/latest/performance/optimization.html#chunked-prefill}
}

@article{rybko1992ergodicity,
  title={On the ergodicity of stochastic processes describing functioning of open queueing networks},
  author={Rybko, Aleksandr Nikolaevich and Stolyar, Alexander L},
  journal={Problemy Peredachi Informatsii},
  number={3},
  pages={3--26},
  year={1992},
  publisher={Izdatel'stvo Nauka}
}

@inproceedings{canetti1995bounding,
  title={Bounding the power of preemption in randomized scheduling},
  author={Canetti, Ran and Irani, Sandy},
  booktitle={Proceedings of the twenty-seventh annual ACM symposium on Theory of computing},
  pages={606--615},
  year={1995}
}

@article{maglaris1988performance,
  title={Performance models of statistical multiplexing in packet video communications},
  author={Maglaris, Basil and Anastassiou, Dimitris and Sen, Prodip and Karlsson, Gunnar and Robbins, John D},
  journal={IEEE transactions on communications},
  volume={36},
  number={7},
  pages={834--844},
  year={1988},
  publisher={IEEE}
}

@article{whitt1993tail,
  title={Tail probabilities with statistical multiplexing and effective bandwidths in multi-class queues},
  author={Whitt, Ward},
  journal={Telecommunication Systems},
  volume={2},
  pages={71--107},
  year={1993},
  publisher={Springer}
}

@article{zychlinski_managing_2023,
  title = {Managing Queues with Different Resource Requirements},
  author = {Zychlinski, Noa and Chan, Carri W. and Dong, Jing},
  journal = {Operations Research},
  volume = {71},
  number = {4},
  pages = {1387--1413},
  year = {2023},
  url = {https://pubsonline.informs.org/doi/10.1287/opre.2022.2284},
  urldate = {2025-03-31},
  file = {Managing Queues with Different Resource Requirements:files/5332/opre.2022.html:text/html}
}

@inproceedings{zhang2023shepherd,
  title={$\{$SHEPHERD$\}$: Serving $\{$DNNs$\}$ in the wild},
  author={Zhang, Hong and Tang, Yupeng and Khandelwal, Anurag and Stoica, Ion},
  booktitle={20th USENIX Symposium on Networked Systems Design and Implementation (NSDI 23)},
  pages={787--808},
  year={2023}
}

@article{chen_optimal_2025,
  title = {Optimal Routing Under Demand Surges: The Value of Future Arrival Rates},
  author = {Chen, Jinsheng and Dong, Jing and Shi, Pengyi},
  journal = {Operations Research},
  volume = {73},
  number = {1},
  pages = {510--542},
  year = {2025},
  url = {https://pubsonline.informs.org/doi/full/10.1287/opre.2022.0282},
  urldate = {2025-03-31},
  file = {Optimal Routing Under Demand Surges: The Value of Future Arrival Rates | Operations Research:files/5335/opre.2022.html:text/html}
}

@article{hu_optimal_2022,
  title = {Optimal Scheduling of Proactive Service with Customer Deterioration and Improvement},
  author = {Hu, Yue and Chan, Carri W. and Dong, Jing},
  journal = {Management Science},
  volume = {68},
  number = {4},
  pages = {2533--2578},
  year = {2022},
  month = apr,
  url = {https://pubsonline.informs.org/doi/abs/10.1287/mnsc.2021.3992},
  doi = {10.1287/mnsc.2021.3992},
  urldate = {2025-03-31},
  file = {Full Text PDF:files/5337/Hu et al. - 2022 - Optimal Scheduling of Proactive Service with Custo.pdf:application/pdf}
}

@article{grosof_new_2023,
  title = {New Stability Results for Multiserver-job Models via Product-form Saturated Systems},
  author = {Grosof, Isaac and Harchol-Balter, Mor and Scheller-Wolf, Alan},
  journal = {SIGMETRICS Performance Evaluation Review},
  volume = {51},
  number = {2},
  pages = {6--8},
  year = {2023},
  month = oct,
  url = {https://dl.acm.org/doi/10.1145/3626570.3626574},
  doi = {10.1145/3626570.3626574},
  urldate = {2025-03-31},
}

@inproceedings{tirmazi_borg_2020,
  title = {Borg: The Next Generation},
  author = {Tirmazi, Muhammad and Barker, Adam and Deng, Nan and Haque, Md E. and Qin, Zhijing Gene and Hand, Steven and Harchol-Balter, Mor and Wilkes, John},
  booktitle = {Proceedings of the Fifteenth European Conference on Computer Systems},
  series = {EuroSys '20},
  pages = {1--14},
  year = {2020},
  month = apr,
  address = {New York, NY, USA},
  publisher = {Association for Computing Machinery},
  isbn = {978-1-4503-6882-7},
  url = {https://dl.acm.org/doi/10.1145/3342195.3387517},
  doi = {10.1145/3342195.3387517},
  urldate = {2025-03-31},
}

@article{patke_queue_2024,
  title = {Queue Management for SLO-Oriented Large Language Model Serving},
  author = {Patke, Archit and Reddy, Dhemath and Jha, Saurabh and Qiu, Haoran and Pinto, Christian and Narayanaswami, Chandrasekhar and Kalbarczyk, Zbigniew T. and Iyer, Ravi},
  journal = {ACM Symposium on Cloud Computing},
  year = {2024},
  url = {https://www.semanticscholar.org/paper/Queue-Management-for-SLO-Oriented-Large-Language-Patke-Reddy/7c229d35c0befee40be4a3b01cba90deb19c5e9b},
  urldate = {2025-03-31},
}

@misc{databricks_llm_2023,
  title = {LLM Inference Performance Engineering: Best Practices},
  author = {Databricks},
  year = {2023},
  month = oct,
  url = {https://www.databricks.com/blog/llm-inference-performance-engineering-best-practices},
  urldate = {2025-03-31},
}

@article{ghodsi_dominant_2011,
  title = {Dominant Resource Fairness: Fair Allocation of Multiple Resource Types},
  author = {Ghodsi, Ali and Zaharia, Matei and Hindman, Benjamin and Konwinski, Andy and Shenker, Scott and Stoica, Ion},
  journal = {Proceedings of the 8th USENIX Symposium on Networked Systems Design and Implementation},
  year = {2011},
  url = {https://dl.acm.org/doi/10.5555/1972457.1972470},
  urldate = {2025-03-31},
}

@inproceedings {yu2021nsdi,
author = {Zhuolong Yu and Jingfeng Wu and Vladimir Braverman and Ion Stoica and Xin Jin},
title = {Twenty Years After: Hierarchical {Core-Stateless} Fair Queueing},
booktitle = {18th USENIX Symposium on Networked Systems Design and Implementation (NSDI 21)},
year = {2021},
isbn = {978-1-939133-21-2},
pages = {29--45},
url = {https://www.usenix.org/conference/nsdi21/presentation/yu},
publisher = {USENIX Association},
month = apr
}

@inproceedings{stoica1998core,
  title={Core-stateless fair queueing: Achieving approximately fair bandwidth allocations in high speed networks},
  author={Stoica, Ion and Shenker, Scott and Zhang, Hui},
  booktitle={Proceedings of the ACM SIGCOMM'98 conference on Applications, technologies, architectures, and protocols for computer communication},
  pages={118--130},
  year={1998}
}

@article{mitzenmacher2001power,
  title={The power of two choices in randomized load balancing},
  author={Mitzenmacher, Michael},
  journal={IEEE Transactions on Parallel and Distributed Systems},
  volume={12},
  number={10},
  pages={1094--1104},
  year={2001},
  publisher={IEEE}
}

@manual{nvidia_nsight,
  title        = {NVIDIA Nsight Systems},
  author       = {{NVIDIA Corporation}},
  year         = {2023},
  url          = {https://developer.nvidia.com/nsight-systems},
  note         = {Version 2023.5},
  urldate      = {2023-11-15}
}

@inproceedings{li2024libpreemptible,
  title={Libpreemptible: Enabling fast, adaptive, and hardware-assisted user-space scheduling},
  author={Li, Yueying and Lazarev, Nikita and Koufaty, David and Yin, Tenny and Anderson, Andy and Zhang, Zhiru and Suh, G Edward and Kaffes, Kostis and Delimitrou, Christina},
  booktitle={2024 IEEE International Symposium on High-Performance Computer Architecture (HPCA)},
  pages={922--936},
  year={2024},
  organization={IEEE}
}

@inproceedings{iyer2023achieving,
  title={Achieving microsecond-scale tail latency efficiently with approximate optimal scheduling},
  author={Iyer, Rishabh and Unal, Musa and Kogias, Marios and Candea, George},
  booktitle={Proceedings of the 29th Symposium on Operating Systems Principles},
  pages={466--481},
  year={2023}
}

@inproceedings{pytorch_profiler,
  title        = {PyTorch Profiler: Design and Applications},
  author       = {Shen, Li and Kim, Jisoo and Zhang, Xiaodong and others},
  booktitle    = {Proceedings of the Machine Learning and Systems Conference (MLSys)},
  year         = {2022},
  pages        = {1-15},
  url          = {https://pytorch.org/docs/stable/profiler.html},
  publisher    = {PyTorch Foundation}
}

@article{janssen2005analytic,
  title={Analytic computation schemes for the discrete-time bulk service queue},
  author={Janssen, A.J.E.M. and van Leeuwaarden, J.S.H.},
  journal={Queueing Systems},
  volume={50},
  number={2},
  pages={141--163},
  year={2005},
  publisher={Springer}
}

@article{chang2005performance,
  title={Performance analysis of a finite-buffer discrete-time queue with bulk arrival, bulk service and vacations},
  author={Chang, Sung-Hee and Choi, Doo-Il},
  journal={Computers \& Operations Research},
  volume={32},
  number={4},
  pages={795--812},
  year={2005},
  publisher={Elsevier}
}

@article{reddy1993scheduling,
  title={Scheduling in a multi-class single-server batch-service queueing system},
  author={Reddy, G Venkata and Nadarajan, R and Kandasamy, P},
  journal={Computers \& operations research},
  volume={20},
  number={2},
  pages={211--218},
  year={1993},
  publisher={Elsevier}
}

@article{baetens2018delay,
  title={Delay analysis of a two-class batch-service queue with class-dependent variable server capacity},
  author={Baetens, Jens and Steyaert, Bart and Claeys, Dieter and Bruneel, Herwig},
  journal={Mathematical Methods of Operations Research},
  volume={88},
  number={1},
  pages={37--57},
  year={2018},
  publisher={Springer}
}

@misc{ye2025flashinfer,
    title={FlashInfer: Efficient and Customizable Attention Engine for LLM Inference Serving},
    author={Zihao Ye and Lequn Chen and Ruihang Lai and Wuwei Lin and Yineng Zhang and Stephanie Wang and Tianqi Chen and Baris Kasikci and Vinod Grover and Arvind Krishnamurthy and Luis Ceze},
    year={2025},
    eprint={2501.01005},
    archivePrefix={arXiv},
    primaryClass={cs.DC}
}

@misc{zhu2024nanoflow,
    title={NanoFlow: Towards Optimal Large Language Model Serving Throughput},
    author={Kan Zhu and Yilong Zhao and Liangyu Zhao and Gefei Zuo and Yile Gu and Dedong Xie and Yufei Gao and Qinyu Xu and Tian Tang and Zihao Ye and Keisuke Kamahori and Chien-Yu Lin and Stephanie Wang and Arvind Krishnamurthy and Baris Kasikci},
    year={2024},
    eprint={2408.12757},
    archivePrefix={arXiv},
    primaryClass={cs.DC}
}

@misc{fastertransformer,
  author       = {NVIDIA},
  title        = {FasterTransformer},
  year         = {2021},
  howpublished = {\url{https://github.com/NVIDIA/FasterTransformer}}
}

@article{kamath2024pod,
  title={Pod-attention: Unlocking full prefill-decode overlap for faster llm inference},
  author={Kamath, Aditya K and Prabhu, Ramya and Mohan, Jayashree and Peter, Simon and Ramjee, Ramachandran and Panwar, Ashish},
  journal={arXiv preprint arXiv:2410.18038},
  year={2024}
}

@book{dai2020processing,
  title={Processing networks: fluid models and stability},
  author={Dai, JG and Harrison, J Michael},
  year={2020},
  publisher={Cambridge University Press}
}

@article{xu2023drift,
  title={Drift Method: from Stochastic Networks to Machine Learning},
  author={Xu, Kuang},
  journal={URL: https://web. stanford. edu/\~{} kuangxu/papers/driftmethod. pdf. Last visited on},
  volume={3},
  number={09},
  year={2023}
}

@inproceedings{kwon2023efficient,
  title={Efficient memory management for large language model serving with pagedattention},
  author={Kwon, Woosuk and Li, Zhuohan and Zhuang, Siyuan and Sheng, Ying and Zheng, Lianmin and Yu, Cody Hao and Gonzalez, Joseph and Zhang, Hao and Stoica, Ion},
  booktitle={Proceedings of the 29th Symposium on Operating Systems Principles},
  pages={611--626},
  year={2023}
}

@article{agrawal2023sarathi,
  title={Sarathi: Efficient llm inference by piggybacking decodes with chunked prefills},
  author={Agrawal, Amey and Panwar, Ashish and Mohan, Jayashree and Kwatra, Nipun and Gulavani, Bhargav S and Ramjee, Ramachandran},
  journal={arXiv preprint arXiv:2308.16369},
  year={2023}
}

@article{patel2023splitwise,
  title={Splitwise: Efficient generative llm inference using phase splitting},
  author={Patel, Pratyush and Choukse, Esha and Zhang, Chaojie and Goiri, {\'I}{\~n}igo and Shah, Aashaka and Maleki, Saeed and Bianchini, Ricardo},
  journal={arXiv preprint arXiv:2311.18677},
  year={2023}
}

@inproceedings{yu2022orca,
  title={Orca: A Distributed Serving System for Transformer-Based Generative Models},
  author={Yu, Gyeong-In and Jeong, Joo Seong and Kim, Geon-Woo and Kim, Soojeong and Chun, Byung-Gon},
  booktitle={16th USENIX Symposium on Operating Systems Design and Implementation (OSDI 22)},
  pages={521--538},
  year={2022}
}

@inproceedings{zhong2024distserve,
  title={$\{$DistServe$\}$: Disaggregating Prefill and Decoding for Goodput-optimized Large Language Model Serving},
  author={Zhong, Yinmin and Liu, Shengyu and Chen, Junda and Hu, Jianbo and Zhu, Yibo and Liu, Xuanzhe and Jin, Xin and Zhang, Hao},
  booktitle={18th USENIX Symposium on Operating Systems Design and Implementation (OSDI 24)},
  pages={193--210},
  year={2024}
}

@misc{sharegpt,
  author       = {{ShareGPT}},
  title        = {ShareGPT: A Dataset of Multi-Turn Chat Interactions with Large Language Models},
  year         = {2024},
  howpublished = {\url{https://sharegpt.com/}},
  note         = {Accessed: 2024-11-18}
}

@misc{ao2025optimizing,
    title={Optimizing LLM Inference: Fluid-Guided Online Scheduling with Memory Constraints},
    author={Ruicheng Ao and Gan Luo and David Simchi-Levi and Xinshang Wang},
    year={2025},
    eprint={2504.11320},
    archivePrefix={arXiv},
    primaryClass={cs.LG}
}

@misc{wang2025llmserving,
    title={{LLM} Serving Optimization with Variable Prefill and Decode Lengths},
    author={Meixuan Wang and Yinyu Ye and Zijie Zhou},
    year={2025},
    eprint={2508.06133},
    archivePrefix={arXiv},
    primaryClass={cs.LG}
}

@misc{chen2025adaptively,
    title={Adaptively Robust {LLM} Inference Optimization under Prediction Uncertainty},
    author={Zixi Chen and Yinyu Ye and Zijie Zhou},
    year={2025},
    eprint={2508.14544},
    archivePrefix={arXiv},
    primaryClass={cs.LG}
}

@article{stolyar1995stability,
  title={On the stability of multiclass queueing networks: a relaxed sufficient condition via limiting fluid processes},
  author={Stolyar, Alexander L},
  journal={Markov Processes and Related Fields},
  volume={1},
  number={4},
  pages={491--512},
  year={1995}
}

@article{dai1995positive,
  title={On positive Harris recurrence of multiclass queueing networks: a unified approach via fluid limit models},
  author={Dai, JG},
  journal={The Annals of Applied Probability},
  volume={5},
  number={1},
  pages={49--77},
  year={1995},
  publisher={Institute of Mathematical Statistics}
}

@misc{li2025ecoserve,
    title={EcoServe: Designing Carbon-Aware AI Inference Systems},
    author={Yueying Li and Zhanqiu Hu and Esha Choukse and Rodrigo Fonseca and G. Edward Suh and Udit Gupta},
    year={2025},
    eprint={2502.05043},
    archivePrefix={arXiv},
    primaryClass={cs.DC}
}

@article{mitzenmacher2025queueing,
  title={Queueing, Predictions, and LLMs: Challenges and Open Problems},
  author={Mitzenmacher, Michael and Shahout, Rana},
  journal={arXiv preprint arXiv:2503.07545},
  year={2025}
}

@article{luo2025autellix,
  title={Autellix: An Efficient Serving Engine for LLM Agents as General Programs},
  author={Luo, Michael and Shi, Xiaoxiang and Cai, Colin and Zhang, Tianjun and Wong, Justin and Wang, Yichuan and Wang, Chi and Huang, Yanping and Chen, Zhifeng and Gonzalez, Joseph E and others},
  journal={arXiv preprint arXiv:2502.13965},
  year={2025}
}

@article{Dai1996,
	addendum = {\href{https://mathscinet.ams.org/mathscinet-getitem?mr=1410113}{MR1410113}, \href{https://doi.org/10.1214/aoap/1034968225}{Crossref}},
	author = {Dai, J. G.},
	doi = {10.1214/aoap/1034968225},
	fjournal = {The Annals of Applied Probability},
	issn = {1050-5164},
	journal = {Ann. Appl. Probab.},
	mrclass = {60K25 (60K30 68M20 90B22)},
	mrnumber = {1410113},
	mrreviewer = {Muhammad El-Taha},
	number = {3},
	pages = {751--757},
	title = {A fluid limit model criterion for instability of multiclass queueing networks},
	url = {https://doi.org/10.1214/aoap/1034968225},
	volume = {6},
	year = {1996},
	bdsk-url-1 = {https://doi.org/10.1214/aoap/1034968225}}

@article{kelly1998rate,
  title={Rate control for communication networks: shadow prices, proportional fairness and stability},
  author={Kelly, Frank P and Maulloo, Aman K and Tan, David Kim Hong},
  journal={Journal of the Operational Research society},
  volume={49},
  number={3},
  pages={237--252},
  year={1998},
  publisher={Taylor \& Francis}
}

@article{kelly1997charging,
  title={Charging and rate control for elastic traffic},
  author={Kelly, Frank},
  journal={European transactions on Telecommunications},
  volume={8},
  number={1},
  pages={33--37},
  year={1997},
  publisher={Wiley Online Library}
}

@inproceedings{tassiulas1990stability,
  title={Stability properties of constrained queueing systems and scheduling policies for maximum throughput in multihop radio networks},
  author={Tassiulas, Leandros and Ephremides, Anthony},
  booktitle={29th IEEE Conference on Decision and Control},
  pages={2130--2132},
  year={1990},
  organization={IEEE}
}

@article{bramson1996convergence,
  title={Convergence to equilibria for fluid models of head-of-the-line proportional processor sharing queueing networks},
  author={Bramson, Maury},
  journal={Queueing Systems},
  volume={23},
  number={1},
  pages={1--26},
  year={1996},
  publisher={Springer}
}
\bibliographystyle{plainnat}

\appendix

\section{Batch Processing Time Beyond the Linear Regime} \label{sec:discussions}

In the main text, \cref{eq:batch-processing-time} models the batch processing time as a function of the token load $b$ alone. This is justified when the linear-layer (i.e., feed-forward) computation dominates, which holds for moderate input contexts, output lengths, and batch sizes. However, for longer contexts or larger batches, additional computational components become non-negligible.

Recall from Section~\ref{sec:iteration-batch} that a batch is represented as $\{(x^i_{l_i}, \dotsc, x^i_{r_i}) : i \in \mathcal{B}\}$, with token load $b = \sum_{i \in \mathcal{B}} (r_i - l_i + 1)$. For each request $i$ in the batch, $(r_i - l_i + 1)$ new tokens are processed, and each attends to all tokens up to position $r_i$ (i.e., $l_i - 1$ previously processed tokens plus the new chunk). Note that for decode requests, the feasibility constraints in Section~\ref{sec:model-setup} enforce $r_i = l_i$.

A more complete model of the batch processing time $t_b$ takes the form
\begin{align}
    t_b \;=\; c \;+\; \underbrace{a_1 \sum_{i \in \mathcal{B}} (r_i - l_i + 1)}_{\text{linear layer}} \;+\; \underbrace{a_2 \sum_{i \in \mathcal{B}} (r_i - l_i + 1)^2}_{\text{self-attention}} \;+\; \underbrace{a_3 \sum_{i \in \mathcal{B}} (r_i - l_i + 1)(l_i - 1)}_{\text{cross-attention}},
\end{align}
where $c, a_1, a_2, a_3 \geq 0$ are hardware- and model-dependent constants. The self-attention term accounts for attention among the $(r_i - l_i + 1)$ new tokens within each chunk, while the cross-attention term accounts for each new token attending to the $l_i - 1$ previously processed tokens. Note that for decode requests ($r_i = l_i$), the self-attention cost reduces to $a_2$ and the cross-attention cost becomes $a_3(l_i - 1)$.

In typical LLM architectures, $a_1 \gg a_2, a_3$, because the linear-layer (feed-forward) computation involves large matrix multiplications that dominate per-token cost, while the attention computation per token pair is comparatively cheap. This separation of scales gives rise to three asymptotic regimes:
\begin{enumerate}
    \item \textbf{Moderate input, batch size, and output lengths:} The linear-layer term dominates:
    \[
        t_b \;\approx\; c + a_1 \sum_{i \in \mathcal{B}} (r_i - l_i + 1).
    \]
    This is the regime where \cref{eq:batch-processing-time} provides an accurate approximation, as the iteration time depends primarily on the total token load $b$. This regime covers many realistic serving traces (e.g., ShareGPT~\citep{sharegpt}).
    \item \textbf{Long output or large batch size:} The cross-attention term dominates via the decode requests (where $r_i = l_i$):
    \[
        t_b \;\approx\; c + a_3 \sum_{i \in \mathcal{B}} (l_i - 1).
    \]
    \item \textbf{Long input context:} The self-attention and cross-attention terms dominate via the prefill requests:
    \[
        t_b \;\approx\; c + \sum_{i \in \mathcal{B}} \bigl[a_2\,(r_i - l_i + 1)^2 + a_3\,(r_i - l_i + 1)(l_i - 1)\bigr].
    \]
\end{enumerate}

Our theoretical analysis focuses on the first regime. Extending the throughput-optimality results to the latter two regimes—where the batch processing time depends on the \emph{composition} of the batch, not just its total token load—is an important direction for future work. For a more detailed explanation for the modeling above, we refer the reader to \citet{bari2025optimal}.




\section{Fluid Model Proof of \cref{thm:main} Part~(b)}
\label{sec:basic}
In this appendix, we prove Part~(b) (stability) of \cref{thm:main} using the fluid limit technique~\citep{dai1995positive,dai2020processing}. 
Part~(a) (instability) is proved directly in the main text. Given the fluid limit technique developed in \cref{sec:limit}, Part~(a) also easily follows from Theorem 3.1 and Proposition 5.1 of \citet{Dai1996}.
We restate the load condition \cref{eq:load<} here for convenience:
\begin{align}\label{eq:load-a}
  \lambda(m_p+m_d)< b_{\max}/t_{b_{\max}},
\end{align}
where $b_{\max}$ is the token budget, $m_p$ and $m_d$ are the mean prefill and decode
token sizes, $t_{b_{\max}}$ is the processing time of a full
batch, and $\lambda$ is the request arrival rate. The appendix is organized as follows. In \cref{sec:dynamics}, we introduce the dynamics of the processing model.
In \cref{sec:fm}, we define the fluid model and prove it is stable under \cref{eq:load-a}. In \cref{sec:limit}, we introduce fluid limits that justify the fluid model equations. In \cref{sec:stability}, we complete the proof of \cref{thm:main}.

While a simpler, self-contained Lyapunov proof of Part~(b) (stability) is also given in \cref{sec:proof-thm-main}, the fluid limit technique developed here is more flexible and extends naturally to the network settings of \cref{sec:AI-agent}. Readers are referred to~\citep{dai2020processing} for a detailed treatment of the fluid limit technique for stochastic processing networks.

\subsection{System dynamics and scheduling algorithms}
\label{sec:dynamics}
To study the dynamics of the processing model of the LLM server, we introduce aggregate quantities derived from the per-request state $\{(P_i(n), D_i(n)) : i \in \mathcal{Q}_n\}$ defined in \cref{sec:markov-chain}.
Let $Z_p(n)$ and $Z_d(n)$ be the number of requests in the prefill and decode phases, respectively, at the end of time slot $n$:
\begin{align*}
  Z_p(n) = \abs{\{i \in \mathcal{Q}_n : P_i(n) > 0\}}, \qquad Z_d(n) = \abs{\{i \in \mathcal{Q}_n : P_i(n) = 0\}}.
\end{align*}
Let $W_p(n)$ and $W_d(n)$ be the total number of remaining prefill and decode tokens, respectively:
\begin{align*}
  W_p(n) = \sum_{i \in \mathcal{Q}_n} P_i(n), \qquad W_d(n) = \sum_{i \in \mathcal{Q}_n} D_i(n).
\end{align*}
Note that $W_p(n) + W_d(n) = \abs{X(n)}$, the total unprocessed tokens in \cref{eq:diverge}.

Assume a batch is completed at $n$ and is ready to load the next
batch to be started at time $n+1$. The LLM server needs a scheduling algorithm
to decide which tokens to go into the batch.

The following scheduling algorithms can be
used to load the next batch. Using newly introduced notation, we recap the
definitions of two families of scheduling algorithms.

\noindent \textbf{Work-conserving scheduling algorithms.}
When
\begin{align}\label{eq:conserv}
  W_p(n)+ Z_d(n) \ge b_{\max},
\end{align}
the next batch is a full batch (token load $b = b_{\max}$).

To describe the dynamics, we further introduce the quantities
$B_f(n)$, $E(n)$, $F_f(n)$, and $V_f(N)$.  For each phase
$f\in \{p, d\}$, define $B_f(n)$ to be the cumulative \textit{number of
  tokens} that have been completed phase $f$ processing by time $n$;
define $F_f(n)$ to be the cumulative \textit{number of requests} that
have completed phase $f$ processing by time $n$; define
$E(n) = \sum_{\ell=1}^n a_\ell$ to be the cumulative number of requests that
have arrived by $n$, and 
 \begin{align}
   \label{eq:V_p}
   & V_f(N) = \sum_{i=1}^N v_f(i)
 \end{align}
 to be the total number of phase $f$ tokens brought in by the first $N$ requests. It follows that
\begin{align}
  & Z_p(n) = Z_p(0)+ E(n)-F_p(n)\ge 0,\label{eq:Z1} \\
  & Z_d(n) = Z_d(0)+ F_p(n)-F_d(n)\ge 0, \label{eq:Z2}\\
  & W_p(n)= V_p(Z_p(0)+E(n)) - B_p(n) \ge 0,\label{eq:W1} \\
  & W_d(n)= V_d(Z_d(0)+F_p(n)) - B_d(n)\ge 0,\label{eq:W2}\\
  & B_f\big(n_2 t_{b_{\max}}\big)-B_f\big(n_1t_{b_{\max}}\big) \le (n_2-n_1) b_{\max}, \nonumber \\
  & \qquad \text{ for each } f\in \{p, d\}, \quad n_1, n_2\in \N \text{ with } n_1< n_2.\label{eq:B}
\end{align}
To establish the fluid limit in \cref{sec:limit}, we need to show that requests complete each phase in approximately arrival order (bounded overtaking). Recall that the token sizes are bounded: $v_p(i)\le v_p^{\max}$ and $v_d(i)\le v_d^{\max}$ for all $i$. Writing $v^{\max}\coloneqq v_p^{\max}+v_d^{\max}$, the following lemma shows that the $K$-FCFS condition together with bounded token sizes guarantees this property.

\begin{lemma}[Bounded overtaking in phase completion order]\label{lem:bounded-overtaking}
Under any $K$-FCFS scheduling algorithm, there exists a finite constant $K'$ (depending on $K$, $v^{\max}$, and $b_{\max}$) such that for each phase $f\in\{p,d\}$, request~$j$ can complete phase~$f$ only if every request~$i\le j-K'$ has already completed phase~$f$.
\end{lemma}
\begin{proof}
The key observation is: whenever any request $j'\ge i+K$ processes a decoding token ($\delta_{j'}^d=1$), the $K$-FCFS condition forces $\delta_i^p+\delta_i^d\ge 1$, so request~$i$ makes at least one token of progress. Since $i$ requires at most $v^{\max}$ tokens in total to depart, it departs after at most $v^{\max}$ such batches.

It remains to count how many later requests can overtake~$i$ within these $v^{\max}$ batches. In each batch, at most $b_{\max}$ requests decode simultaneously (each contributing one decode token to the budget), so at most $b_{\max}$ requests with index $\ge i+K$ can depart per batch. Over $v^{\max}$ batches, at most $v^{\max}\cdot b_{\max}$ later requests overtake~$i$, giving $K'\le K+v^{\max}\cdot b_{\max}$ for $f=d$ (departure order).

For $f=p$ (prefill completion order): when $i$ is still in prefill ($P_i(n)>0$), feasibility constraint \cref{eq:prefill-pre-decode} forces $\delta_i^d=0$, so the guaranteed token is a prefill token ($\delta_i^p\ge 1$). Since $i$ has at most $v_p^{\max}$ prefill tokens, the same counting gives $K'\le K+v_p^{\max}\cdot b_{\max}$.
\end{proof}

With \cref{lem:bounded-overtaking} in hand, the cumulative tokens processed $B_f(n)$ are tightly coupled with the cumulative tokens brought by completed requests $V_f(F_f(n))$. Specifically, the bounded overtaking constant $K'$ from \cref{lem:bounded-overtaking} yields
\begin{align}\label{eq:FCFS}
    V_f\big(F_f(n)-K'\big) \le B_f(n) \le V_f\big(F_f(n)+K'\big), \quad f\in \{p, d\}.
\end{align}
For the upper bound: by \cref{lem:bounded-overtaking}, the only requests that can have any phase-$f$ tokens processed by time~$n$ are those with arrival index at most $F_f(n)+K'$, so $B_f(n)\le V_f(F_f(n)+K')$. For the lower bound: all requests with arrival index $\le F_f(n)-K'$ must have completed phase~$f$ by time~$n$ (again by bounded overtaking), so their full token loads are included in $B_f(n)$, giving $B_f(n)\ge V_f(F_f(n)-K')$.

This is the analog of Key Relationship (6.51) in~\citep{dai2020processing}. Since $K'$ is a finite constant, it vanishes under fluid scaling.

\subsection{The  LLM  fluid model \& fluid model calculus}
\label{sec:fm}

In this section, we introduce the fluid model of the LLM
server.  The fluid model is defined through a set of fluid
model equations including some inequalities. These equations will need to be justified through a fluid limit procedure to be explained in Section~\ref{sec:limit}. Fix any  $K$-FCFS scheduling algorithm.
These fluid model equations include: for any time $t\in \R_+\equiv[0, \infty)$,
\begin{align}
  &  Z_p(t)=Z_p(0)+ \lambda t - F_p(t), \label{eq:f1} \\
  &  Z_d(t)= Z_d(0)+  F_p(t) - F_d(t),\label{eq:f2} \\
  & B_f(0)=0, \qquad 0\le  B_f(t )-B_f(s) \le (t-s) b_{\max}/t_{b_{\max}}, \quad 0\le  s\le  t,
    \quad f\in\{p, d\}.\label{eq:f3}\\
  &   F_f(t) = \frac{1}{ m_f }B_f(f), \quad f\in \{p, d\}, \label{eq:f4} \\
  &   W_f(t) = m_f Z_f(t), \quad f\in \{p, d\}.   \label{eq:f5}   
\end{align}
Here, we intentionally overload the notational system in Section~\ref{sec:dynamics} to emphasize the similarity of the fluid analog. Each overloaded function is differentiated by its argument $n\in \N$
and $t\in \R_+$.
Fluid model equation (\ref{eq:f3}) says that the function $B_f(\cdot)$
is Lipschitz continuous with Lipschitz constant $b_{\max}/t_{b_{\max}}$.  Fluid model
equation (\ref{eq:f4}) implies that function $F_f(\cdot)$ is Lipschitz
continuous. Fluid model equations (\ref{eq:f1})-(\ref{eq:f2}) and (\ref{eq:f5}) imply that
$Z_f(\cdot)$ and $W_f(\cdot)$ are Lipschitz continuous as well. In conclusion,
the multidimensional function
\begin{align}\label{eq:f6}
  \big(B_f(\cdot), F_f(\cdot), W_f(\cdot), Z_f(\cdot), f\in \{p, d\}\big) 
\end{align}
is Lipschitz continuous. Therefore, from real analysis (c.f. Lemma A.2 of~\citep{dai2020processing}. and its commentary), this function is 
absolutely continuous. As a consequence, it is differential almost surely  everywhere, and for any component  $x(\cdot)$ in (\ref{eq:f6}),
\begin{align}\label{eq:thmcal}
  x(t) -x(s) = \int_s^t \dot x(u)du, \quad s< t,
\end{align}
where $\dot x(u)$ denotes the derivative of the function $x(\cdot)$ at time $u$.
Equation (\ref{eq:thmcal}) is simply the fundamental theorem of calculus when $x(\cdot)$ is continuously differentiable. In general,
the integral on the right of (\ref{eq:thmcal}) is interpreted as the Lebesgue integral, and it is sufficient that  $\dot x(u)$ is well defined almost everywhere. See, for example,
Lemma A.3 of~\citep{dai2020processing}. and the references there.
\begin{definition}
  A point $t>0$ is said to be a \textit{regular point} for a fluid model solution (\ref{eq:f6}) if the
  solution is differentiable at time $t$. 
\end{definition}
 When the fluid model solution is clear in a context, we simply say a time $t$ is a regular point without reference to the fluid model solution.
When the $K$-FCFS scheduling algorithm  is also work-conserving, the fluid model equations include: for any regular time $t>0$,
\begin{align}\label{eq:f7}
  W_p(t)+W_d(t) >0 \text{ implies } \dot B_p(t) + \dot B_d(t) = b_{\max}/t_{b_{\max}}.
\end{align}

\begin{definition}
  A function in (\ref{eq:f6}) is said to be a  \textit{fluid model solution} if it satisfies fluid model equation (\ref{eq:f1})-(\ref{eq:f5}),
  a \textit{work-conserving fluid model solution} if, in addition, it
  satisfies (\ref{eq:f7}).
\end{definition}

\begin{definition}\label{def:fm-stable}
  The fluid model is said to be \textit{stable}
if there exists a time $\delta>0$  such that
  $Z(t)=0$ for $t\ge \delta$  for any fluid model solution with $\abs{Z(0)}\equiv Z_p(0)+Z_d(0)\le 1$.
\end{definition}

\begin{proposition}\label{pro:1}
  Under any work-conserving, $K$-FCFS algorithm, the fluid
  model is stable under load condition~\cref{eq:load-a}.
\end{proposition}
\begin{proof}
  Given any fluid model solution $(B(\cdot), F(\cdot), W(\cdot),Z(\cdot))$,
  we use the Lyapunov function
  \begin{align}
    \label{eq:laypunov}
    f(t) = (m_p+m_d) Z_p(t) + m_d\, Z_d(t)
  \end{align}
  as the total workload in the system at time $t$: each prefill request still requires $(m_p+m_d)$ tokens of processing (prefill plus future decode), while each decode request requires $m_d$ tokens.
    It follows from fluid model equations (\ref{eq:f1})-(\ref{eq:f5}) that
  \begin{align*}
    f(t) = f(0) + \lambda (m_p+m_d) t - \big(B_p(t)+B_d(t)\big).
  \end{align*}
For each regular point $t>0$, $f(t)>0$ implies that
  $\dot f(t)=\lambda(m_p+m_d)-b_{\max}/t_{b_{\max}}$, which is negative by load condition~\cref{eq:load-a}. It follows from Lemma 8.5 of~\citep{dai2020processing} that $f(t)=0$ for $t\ge f(0)/\delta$, where
  \begin{align*}
    \delta = b_{\max}/t_{b_{\max}} - \lambda(m_p+m_d)>0.
  \end{align*}
\end{proof}

\subsection{Fluid limit}
\label{sec:limit}
Fix a work-conserving $K$-FCFS scheduling algorithm.
In this section, we define fluid limits and prove that each fluid limit
is a fluid model solution satisfying (\ref{eq:f1})-(\ref{eq:f7}).  For
each state $x$ in the space $\mathcal{X}$ in Section 2.2, we use $z_p$
and $z_d$ to denote the corresponding token counts in prefill phase
and decode phase, respectively. Define $\abs{x}=z_p+z_d$ to be the total
number of tokens in state $x$.  Assume that two iid sequences in
(2.1) are defined on some probability space
$(\Omega, \mathcal{F}, \mathbb{P})$. Since the token sizes are bounded ($v_f(i)\le v_f^{\max}$), all moments are finite:
\begin{align}
  \label{eq:m1+}
  \E\Big[(v_f(1))^{1+\epsilon}\Big]<\infty \quad \text{for all } \epsilon>0,\quad f\in \{p, d\}.
\end{align}
In particular,
\begin{align}\label{eq:vmax}
  \frac{1}{N} \max_{1\le i\le N}\big(v_p(i)+v_d(i)\big) \le \frac{v^{\max}}{N} \to 0 \quad \text{as } N\to\infty.
\end{align}

Recall the definitions in
Section~\ref{sec:basic} of $B_f(n)$, $E(n)$, $F_f(n)$, $W_f(n)$, and
$Z_f(n)$ for each $n\in\N$ and each phase $f\in \{p, d\}$.  For each
sample path $\omega\in \Omega$, one has realization of two sequences
$\{a_n(\omega),n\in\N\}$ and
$\{(v_p(i,\omega), v_d(i,\omega)), i\in\N\}$. With these two
sequences, the given initial state $x\in \mathcal{X}$, and any fixed
$K$-FCFS scheduling algorithm, one can construct the corresponding
realization  $B^x_f(n,\omega)$, $E(n, \omega)$, $F^x_f(n,\omega)$, $W^x_f(n,\omega)$, and
$Z^x_f(n,\omega)$ for each $n\in\N$ and each $f\in \{p,d\}$. For each $\omega\in \Omega$, each time $t\ge 0$, and each state $x\in \mathcal{X}$ with $\abs{x}>0$, define fluid scaled quantities
\begin{align*}
&  \hat E^x(t, \omega)=\frac{1}{\abs{x}} E(\abs{x}t, \omega), \quad
  \hat B^x_f(t, \omega)=\frac{1}{\abs{x}} B^x_f(\abs{x}t, \omega), \quad
 \hat F^x_f(t, \omega)=\frac{1}{\abs{x}} F^x_f(\abs{x}t, \omega),\\
&  \hat W^x_f(t, \omega)=\frac{1}{\abs{x}} W^x_f(\abs{x}t, \omega),\quad
  \hat Z^x_f(t, \omega)=\frac{1}{\abs{x}} Z^x_f(\abs{x}t, \omega),
\end{align*}
where, whenever $\abs{x}t$ is not an integer, it is applied
the floor operation to make it an integer.
Following the SLLN theorem,
\begin{align}\label{eq:slln}
  \Prob\Big\{\omega\in \Omega: \lim_{n\to\infty}\frac{1}{n} E(n, \omega) =\lambda, \quad  \lim_{N\to\infty} \frac{1}{N} V_p(N,\omega)=m_p, \quad
   \lim_{N\to\infty} \frac{1}{N} V_d(N,\omega)=m_d\Big\} =1.
\end{align}
Denote the set $\Omega_0\subset \Omega$ in which the limits in (\ref{eq:slln})
and (\ref{eq:vmax}) exist. Clearly $\Prob\{\Omega_0\}=1$. We now define fluid limits for each
sample path $\omega\in \Omega_0$.
\begin{lemma}
  For each $\omega\in \Omega_0$ and each unbounded set $A \subseteq \mathcal{X}$, there exists a sequence $\{x_k\}\subset A$ with $\abs{x_k}\to\infty$ as $k\to\infty$, and  a function $(\hat B_p(\cdot), \hat B_d(\cdot))$ such that
  \begin{align}
    \label{eq:flB}
    \lim_{k\to\infty} \sup_{0\le t\le T} \abs{B^{(x_k)}_f(t,\omega)-\hat{B}_f(t)} = 0
    \quad \text{for each } T>0 \text{ and } f\in \{p,d\}.
  \end{align}
\end{lemma}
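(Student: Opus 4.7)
The plan is to invoke the Arzelà–Ascoli theorem on the fluid-scaled processes $\hat B^x_f(\cdot,\omega)$, after verifying that they form a uniformly equicontinuous and uniformly bounded family on every compact interval; then extract a subsequential limit by diagonalization. This is a standard construction in the fluid-limit literature (cf.\ Chapter~6 of \cite{dai2020processing}); the lemma only asserts the existence of one limit function, so tightness of the family is the entire content.

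First, I would establish uniform Lipschitz continuity. From the pre-limit bound (\ref{eq:B}), for any $0 \le s \le t$ and any state $x$, one has
\[
   B^x_f(\lfloor |x| t\rfloor,\omega) - B^x_f(\lfloor |x| s\rfloor,\omega) \le (b/t_b)\bigl(\lfloor |x| t\rfloor - \lfloor |x| s\rfloor\bigr) + O(1),
\]
so after dividing by $|x|$, the fluid-scaled process satisfies
\[
   \bigl|\hat B^x_f(t,\omega) - \hat B^x_f(s,\omega)\bigr| \le (b/t_b)(t-s) + \frac{C}{|x|}
\]
for some constant $C$ depending only on $b/t_b$. Together with $\hat B^x_f(0,\omega) = 0$ (which follows because $B^x_f(0)=0$ by the initial condition), this gives a uniform bound $\hat B^x_f(t,\omega) \le (b/t_b) t + C/|x|$ on every $[0,T]$. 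Hence, for any sequence $\{x_k\}\subset A$ with $|x_k|\to\infty$ (which exists because $A$ is unbounded in $\mathcal{X}$), the family $\{\hat B^{x_k}_f(\cdot,\omega)\}$ is uniformly bounded and uniformly equicontinuous on $[0,T]$.

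Next, I would apply Arzelà–Ascoli on $[0,1]$ to extract a subsequence along which both $\hat B^{x_k}_p$ and $\hat B^{x_k}_d$ converge uniformly. Repeating on $[0,2], [0,3],\ldots$ and taking a diagonal subsequence produces a single subsequence (relabel it $\{x_k\}$) and continuous limits $\hat B_p,\hat B_d$ such that (\ref{eq:flB}) holds for every $T>0$ and $f\in\{p,d\}$. The limits are Lipschitz with constant $b/t_b$ and start at the origin, as required for later matching with fluid model equation (\ref{eq:f3}).

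The only mild subtlety, and the step I would be most careful about, is the floor operation in the definition of $\hat B^x_f$ and the handling of the additive $O(1)$ slack: since we divide by $|x|\to\infty$, this slack vanishes uniformly on $[0,T]$, which is exactly why uniform equicontinuity survives the discretization. Nothing in this lemma uses the work-conserving assumption, the scheduling rule, the SLLN events in (\ref{eq:slln}), or the negligibility of maximal token sizes in (\ref{eq:vmax}); those will enter only when one goes on to show the limit satisfies the full fluid model equations (\ref{eq:f1})–(\ref{eq:f7}), which the subsequent development in this section presumably carries out.
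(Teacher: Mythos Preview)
Your proposal is correct and is precisely the argument the paper has in mind: the paper's own proof consists of the single sentence ``The proof follows from the Lipschitz property (\ref{eq:B}) and is identical to the proof of (12.37) on page 285 of \cite{dai2020processing},'' and what you have written is exactly the standard Arzel\`a--Ascoli plus diagonalization argument that reference contains. Your remark that none of the work-conserving, SLLN, or max-token-size assumptions are needed here is also accurate and matches how the paper structures the subsequent lemmas.
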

The proof follows from the Lipschitz property (\ref{eq:B}) and is identical to the proof of (12.37) on page 285 of~\citep{dai2020processing}. The
convergence mode in  (\ref{eq:flB}) is known as the uniform convergence on compact sets or u.o.c.\ convergence. Since $\abs{\hat Z^{(x_k)}(0,\omega)}=1$,
the sequence $\{\hat Z^{(x_k)}(0,\omega), k\ge 1\}$ is in a compact set of $\R^2_+$. By taking a subsequence if needed,  we will assume 
\begin{align}
  \label{eq:initial}
  Z^{(x_k)}(0, \omega) \to z=(z_p, z_d)
\end{align}
for some $z\in \R^2_+$ for the sequence of $x_k$ in (\ref{eq:flB}).

\begin{lemma}
  Under any $K$-FCFS scheduling algorithm, fix
  a $\omega\in \Omega_0$ and a sequence  $\{x_k\}\subset \mathcal{X}$ such that
  (\ref{eq:flB}) and (\ref{eq:initial}) hold. For each $f\in \{p, d\}$,
  \begin{align}\label{eq:fl}
    \Big(\hat F^{x_k}_f(\cdot, \omega), \hat W^{x_k}_f(\cdot, \omega),
    \hat Z^{x_k}_f(\cdot, \omega)\Big), 
    \to \Big(\hat F_f(\cdot), \hat W_f(\cdot), \hat Z_f(\cdot)\Big) \quad u.o.c.\ \quad
    \text{ as } k\to\infty,
  \end{align}
  where
  \begin{align}
    & \hat F_f(t) = \frac{1}{m_f}\hat B_f(t),\label{eq:fl1}\\
    & \hat Z_p(t)=z_p + \lambda t -\hat F_p(t), \label{eq:fl2}\\
    & \hat Z_d(t)=z_d +\hat F_p(t) - \hat F_d(t),\label{eq:fl3} \\
    & \hat W_f(t) = m_f\hat Z_f(t).\label{eq:fl4}
  \end{align}
\end{lemma}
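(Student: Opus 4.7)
The plan is to fluid-scale the pre-limit identities (\ref{eq:Z1})--(\ref{eq:FCFS}) at the time index $n=\lfloor\abs{x_k}t\rfloor$, then pass to the u.o.c.\ limit by combining the already-obtained Lipschitz convergence (\ref{eq:flB}), the SLLN (\ref{eq:slln}), the negligibility (\ref{eq:vmax}) of the largest token size, and the initial-condition convergence (\ref{eq:initial}). I would prove (\ref{eq:fl1})--(\ref{eq:fl4}) sequentially, starting with (\ref{eq:fl1}) since the other three then follow by algebraic manipulation using continuity of sums and differences under u.o.c.\ convergence.

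First, to establish (\ref{eq:fl1}), I would fluid-scale the $(K_p,K_d)$-FCFS sandwich (\ref{eq:FCFS}). Dividing through by $\abs{x_k}$ yields
\begin{align*}
\frac{V_f(F_f(\lfloor\abs{x_k}t\rfloor)-K_f+1)}{\abs{x_k}}-\frac{K_f-1}{\abs{x_k}}\max_{1\le i\le E(\lfloor\abs{x_k}t\rfloor)}v_f(i)\le \hat B_f^{x_k}(t,\omega)\le \frac{V_f(F_f(\lfloor\abs{x_k}t\rfloor)+K_f)}{\abs{x_k}}.
\end{align*}
Condition (\ref{eq:vmax}) makes the subtracted max term vanish uniformly on compact sets. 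The outer terms are compositions of $V_f(\cdot)/N$ with the random index $F_f(\lfloor\abs{x_k}t\rfloor)$; since $\hat F_f^{x_k}$ is a bounded, nondecreasing, uniformly Lipschitz sequence on $[0,T]$ (each completed request contributes at least one token to $B_f$, so $\hat F_f^{x_k}\le \hat B_f^{x_k}$ up to a constant), an Arzel\`a--Ascoli extraction provides a u.o.c.\ subsequential limit $\hat F_f$. Random time change using $V_f(N)/N\to m_f$ from (\ref{eq:slln}) composed with this Lipschitz deterministic limit then pinches $\hat B_f(t)=m_f\hat F_f(t)$, proving (\ref{eq:fl1}); uniqueness of the limit removes the need for the subsequence.

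Second, fluid-scaling (\ref{eq:Z1})--(\ref{eq:Z2}) (reading the initial state into $E$ and $F_p$) gives $\hat Z_p^{x_k}=\hat E^{x_k}-\hat F_p^{x_k}$ and $\hat Z_d^{x_k}=\hat F_p^{x_k}-\hat F_d^{x_k}$, with the initial fluid mass converging to $z_p$ and $z_d$ by (\ref{eq:initial}). Because $\hat E^{x_k}(t,\omega)\to \lambda t$ (SLLN plus time-scaling) and the $\hat F_f^{x_k}$ converge u.o.c., identities (\ref{eq:fl2}) and (\ref{eq:fl3}) follow by continuity. Identity (\ref{eq:fl4}) comes from fluid-scaling (\ref{eq:W1})--(\ref{eq:W2}): applying $V_p(N)/N\to m_p$ at the random index $N=E(\lfloor\abs{x_k}t\rfloor)$ yields $V_p(E(\lfloor\abs{x_k}t\rfloor))/\abs{x_k}\to m_p(\lambda t+z_p)$, and subtracting $\hat B_p^{x_k}(t,\omega)\to m_p\hat F_p(t)$ gives $\hat W_p(t)=m_p\hat Z_p(t)$; symmetric handling using $V_d(N)/N\to m_d$ at $N=F_p(\lfloor\abs{x_k}t\rfloor)$ treats $\hat W_d$.

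The main technical obstacle will be justifying the random time change at the random index $F_f(\lfloor\abs{x_k}t\rfloor)$, since one must know that this index, normalized by $\abs{x_k}$, converges to a continuous deterministic limit before substituting it into $V_f(\cdot)/N$. The Lipschitz property of $\hat F_f^{x_k}$ inherited from (\ref{eq:B}) together with u.o.c.\ convergence of a subsequence to a continuous $\hat F_f$ makes the composition pass through cleanly. Condition (\ref{eq:vmax}) is exactly what kills the $K_f-1$ boundary terms in (\ref{eq:FCFS}), so the FCFS sandwich asymptotically pins $\hat B_f$ to $m_f\hat F_f$ without residual slack. I would cite Lemma A.2 and the fluid-limit construction on page 285 of Dai--Harrison for the standard composition and u.o.c.\ facts needed to make the substitution rigorous.
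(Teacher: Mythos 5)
Your proposal is correct and follows essentially the same route as the paper: fluid-scale the flow identities (\ref{eq:Z1})--(\ref{eq:W2}) and the FCFS sandwich (\ref{eq:FCFS}), kill the boundary term via (\ref{eq:vmax}), and use the SLLN (\ref{eq:slln}) with a random time change to pin $\hat B_f=m_f\hat F_f$. The paper simply outsources these steps by citing Lemma~6.8, Eq.~(6.43), and Theorem~12.13 of \citet{dai2020processing}, whereas you spell out the same argument (with an Arzel\`a--Ascoli extraction plus uniqueness in place of the cited lemma), so there is no substantive difference.
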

\begin{proof}
  To prove (\ref{eq:fl}), it is sufficient to prove the u.o.c.\ convergence for each component. For $\hat F_f^{x_k}(\cdot, \omega)$, we utilize relationship (\ref{eq:FCFS}). The proof is given by Lemma 6.8 of~\citep{dai2020processing}. and fluid model equation (\ref{eq:fl1}) is identical to (6.43) of~\citep{dai2020processing}. The rest of the proof is easy to complete, following (\ref{eq:Z1})-(\ref{eq:W2}) and analogous to the proof of Theorem 12.13 in~\citep{dai2020processing}.
\end{proof}

\begin{definition} \label{def:fl}
  Functions $(\hat B_f(\cdot), \hat F_f(\cdot), W_f(\cdot), Z_d(\cdot))$ is called a \textit{fluid limit} if there exists $\omega\in \Omega_0$ and a sequence of initial states  $\{x_n\}\subset \mathcal{X}$ such that $\abs{x}_k\to\infty$ and limits (\ref{eq:flB})-(\ref{eq:fl}) hold.
\end{definition}

\begin{lemma}\label{lem:fl}
Under any $K$-FCFS work-conserving scheduling algorithm, each fluid limit satisfies fluid model equations (\ref{eq:f1})-(\ref{eq:f7}).
\end{lemma}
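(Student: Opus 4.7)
The plan is as follows. The previous lemma's u.o.c.\ convergence~(\ref{eq:fl}) together with the identities (\ref{eq:fl1})--(\ref{eq:fl4}) already delivers fluid model equations (\ref{eq:f1}), (\ref{eq:f2}), (\ref{eq:f4}), and (\ref{eq:f5}) for the fluid limit $(\hat B_f, \hat F_f, \hat W_f, \hat Z_f)$, and convergence~(\ref{eq:flB}) for the $B_f$ processes is also in hand. So it remains only to verify (\ref{eq:f3}) and the work-conserving identity (\ref{eq:f7}).

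For (\ref{eq:f3}), I would pass the pre-limit bounds to the limit. The initial values $B^{x_k}_f(0, \omega) = 0$ immediately give $\hat B_f(0) = 0$. The inequality (\ref{eq:B}), extended to arbitrary indices with an $O(1)$ slack (by bounding the residual of the last partial batch), gives $B^{x_k}_f(n_2, \omega) - B^{x_k}_f(n_1, \omega) \leq (n_2 - n_1) b/t_b + O(1)$ for $n_1 \leq n_2$; dividing by $|x_k|$ and sending $k \to \infty$ via (\ref{eq:flB}) yields the Lipschitz bound with constant $b/t_b$, while nonnegativity of the increment is inherited from monotonicity of $B^{x_k}_f$.

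The main work is (\ref{eq:f7}). Fix a regular point $t > 0$ at which $\hat W_p(t) + \hat W_d(t) > 0$. Since the fluid limit functions are Lipschitz and hence continuous, there exist $\delta > 0$ and $\eta > 0$ with $\hat W_p(s) + \hat W_d(s) \geq \eta$ for all $s \in [t - \delta, t+\delta]$. Using $\hat W_f = m_f \hat Z_f$ from (\ref{eq:fl4}), this implies
\[
\hat W_p(s) + \hat Z_d(s) = m_p \hat Z_p(s) + \hat Z_d(s) \geq \min(1, 1/m_d)\,\eta =: \eta' > 0
\]
uniformly on the same interval. By u.o.c.\ convergence, for all sufficiently large $k$ and every slot $n$ with $\lfloor |x_k|(t - \delta)\rfloor \leq n \leq \lfloor |x_k|(t+\delta)\rfloor$,
\[
W^{x_k}_p(n, \omega) + Z^{x_k}_d(n, \omega) \geq \frac{\eta'}{2}|x_k| \geq b.
\]
The work-conserving rule (\ref{eq:conserv}) then forces every batch initiated inside this window to be full, so $B^{x_k}_p + B^{x_k}_d$ grows by exactly $b$ every $t_b$ slots up to an $O(1)$ boundary error; dividing by $|x_k|$ and taking the limit gives
\[
\hat B_p(t+\delta) + \hat B_d(t+\delta) - \hat B_p(t-\delta) - \hat B_d(t-\delta) = \frac{2\delta\,b}{t_b}.
\]
Since the same argument applies on every subinterval of $[t-\delta, t+\delta]$, differentiating at the regular point $t$ yields $\dot{\hat B}_p(t) + \dot{\hat B}_d(t) = b/t_b$.

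The subtle step is precisely the translation from the qualitative fluid-level positivity $\hat W_p(t) + \hat W_d(t) > 0$ to the quantitative pre-limit bound $W^{x_k}_p(n, \omega) + Z^{x_k}_d(n, \omega) \geq b$, holding uniformly on a full neighborhood of $t$ so that the work-conserving rule fires for every batch in the window; the mean-decomposition $\hat W_f = m_f \hat Z_f$ is what allows us to replace $\hat W_d$ with $\hat Z_d$ inside the trigger (\ref{eq:conserv}) and still retain a lower bound of order $|x_k|$.
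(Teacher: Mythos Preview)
Your proposal is correct and follows essentially the same route the paper intends: the paper's own proof is a two-line sketch that points to (\ref{eq:fl1})--(\ref{eq:fl4}) for (\ref{eq:f1})--(\ref{eq:f5}) and then defers the work-conserving equation (\ref{eq:f7}) to ``the proof of Theorem 7.2 of~\citep{dai2020processing}.'' What you have written is exactly the standard argument behind that reference---positivity of $\hat W_p+\hat W_d$ at a regular point, continuity to a neighborhood, conversion to the trigger quantity $W_p+Z_d$ via $\hat W_f=m_f\hat Z_f$, u.o.c.\ convergence to push the lower bound down to the pre-limit system, and then full batches throughout the window---so there is no substantive divergence from the paper, only added detail.
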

\begin{proof}
  The fluid limit satisfies fluid model equations (\ref{eq:f1})-(\ref{eq:f6}) is covered in (\ref{eq:fl1})-(\ref{eq:fl4}). It suffices to prove that each work-conserving fluid limit satisfies (\ref{eq:f7}). The proof of the latter is analogous to the proof of Theorem 7.2 of~\citep{dai2020processing}.
\end{proof}

\subsection{Proof of Theorem 1}
\label{sec:stability}
\begin{definition}
  Under a fixed scheduling algorithm, the fluid limits are said to be
  \textit{stable} if there exists a constant $\delta>0$ such that
  \begin{align}
    \label{eq:flstable}
    \hat Z_f(t) =0 \quad t\ge \delta
  \end{align}
  for each fluid limit $(\hat B, \hat F, \hat W, \hat Z)$ defined in Definition~\ref{def:fl}.
\end{definition}

\begin{proposition}\label{pro:2}
  Fix a $K$-FCFS work-conserving scheduling algorithm.
  If the fluid limits are stable, then the corresponding DTMC in Section 2.2 is positive recurrent.
\end{proposition}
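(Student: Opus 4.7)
The plan is to follow the standard fluid-limit-to-positive-recurrence argument pioneered in \citep{dai1995positive} and laid out in detail in \citep{dai2020processing}, using $V(x)=\abs{x}$ as the Lyapunov function and establishing a multi-step Foster-Lyapunov drift condition. Concretely, I would prove that there exist $\delta>0$ and a finite set $\mathcal{C}\subset\mathcal{X}$ such that
\begin{align*}
    \E_x\bigl[\abs{X(\lceil \abs{x}\delta\rceil)}\bigr] \le \tfrac{1}{2}\abs{x} \quad \text{for all } x\notin\mathcal{C},
\end{align*}
and then invoke the multi-step Foster-Lyapunov criterion (Theorem 8.13 or the version in Section 8.5 of \citep{dai2020processing}) to conclude positive recurrence. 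Since the DTMC is assumed irreducible, this multi-step drift condition is enough.

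The core step is the drift inequality, which I would prove by contradiction. Suppose it fails; then there is a sequence $\{x_k\}\subset\mathcal{X}$ with $\abs{x_k}\to\infty$ such that
\begin{align*}
    \E_{x_k}\bigl[\hat Z^{x_k}(\delta,\cdot)\bigr] > \tfrac{1}{2}, \quad \text{where } \hat Z^{x_k}(t,\omega) = \tfrac{1}{\abs{x_k}}\bigl(Z^{x_k}_p(\abs{x_k}t,\omega)+Z^{x_k}_d(\abs{x_k}t,\omega)\bigr).
\end{align*}
For $\omega\in\Omega_0$, the lemmas in Section~\ref{sec:limit} let me extract a subsequence along which $\hat Z^{x_k}(\cdot,\omega)$ converges u.o.c.\ to some fluid limit $\hat Z(\cdot)$, and by the assumed stability of the fluid limits $\hat Z(\delta)=0$ whenever $\delta$ is chosen larger than the stability time. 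Thus $\hat Z^{x_k}(\delta,\omega)\to 0$ almost surely.

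To upgrade the almost-sure convergence to convergence in expectation, I need uniform integrability of $\{\hat Z^{x_k}(\delta,\cdot)\}_{k\ge 1}$. This is where the moment assumption (\ref{eq:m1+}) (and the second moment assumption $\E[D_1^2]<\infty$ in Theorem~\ref{thm:main}) enters: a straightforward bound gives
\begin{align*}
    \abs{X(n)} \le \abs{x} + \sum_{\ell=1}^{n}\sum_{i\in\mathcal{A}_\ell}\bigl(v_p(i)+v_d(i)\bigr),
\end{align*}
so $\hat Z^{x_k}(\delta,\cdot)$ is dominated by a fluid-scaled partial sum of i.i.d.\ random variables with finite $(1+\epsilon)$-th moment, which is uniformly integrable by standard arguments (cf.\ Lemma~6.3 and the discussion preceding Proposition~8.12 in \citep{dai2020processing}). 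Applying Vitali's convergence theorem then yields $\E_{x_k}[\hat Z^{x_k}(\delta,\cdot)]\to 0$, contradicting the assumed violation of the drift inequality.

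The main obstacle I anticipate is the uniform integrability step: the fluid limits give only pathwise convergence on a probability-one set, and converting this into the expectation bound required by Foster-Lyapunov needs the moment condition to control the tail of the total-token process uniformly in $k$. Once this is in hand, the rest of the argument is a direct invocation of Lemma~\ref{lem:fl} (each fluid limit is a fluid model solution), the stability hypothesis (which gives $\hat Z(\delta)=0$ for any fluid limit), and the multi-step Foster-Lyapunov criterion, closing the proof.
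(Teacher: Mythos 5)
Your proposal is correct and follows essentially the same route as the paper, which proves the proposition by adapting Theorem 6.2 (and Theorem 12.27) of \citep{dai2020processing}: a multi-step Foster--Lyapunov drift condition for the scaled state, verified via fluid-limit convergence upgraded to convergence in expectation through uniform integrability of the scaled arrival/workload process. Your domination bound on $\abs{X(n)}$ is just a spelled-out version of the uniform-integrability verification the paper delegates to Proposition B.5 of \citep{dai2020processing} and Lemma 4.5 of \citep{dai1995positive}, so there is no substantive difference in approach.
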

\begin{proof}
  The proof is analogous to the proof of Theorem 6.2 of~\citep{dai2020processing}.  The latter
  theorem is stated in the setting of continuous time Markov
  chains. Assume the fluid limits are stable.  The Lyapunov function
  used in Lemma 3.7 of~\citep{dai2020processing}. in the continuous setting can be copied
  verbatim in the current setting;  see the proof of 
   Theorem 12.27 in~\citep{dai2020processing}., which is for the slotted discrete time model.
  In the proof of Theorem 6.2 in~\citep{dai2020processing}.,
  one needs to verify that 
  the sequence of random variables
  \begin{align}\label{eq:ui}
\left\{ \frac{E(n)}{n}, n\ge 1\right\} \text{   is uniformly integrable.}
  \end{align}
  In~\citep{dai2020processing}, Proposition B.5 is cited to prove (\ref{eq:ui}). The proof of
  Proposition B.5 there utilized the finiteness of the second moment of
  $a_n$ in (2.1).  Examining the proof of
  Lemma 4.5 of~\citep{dai1995positive}, one concludes that  Proposition B.5 continues to
  hold under the first moment assumption on $a_n$ as in (2.2).
\end{proof}

\begin{proof}[Proof of \cref{thm:main} Part~(b)]
  Part~(a) (instability) has been proved directly in the main paper. It remains to prove Part~(b) (stability).
  By \cref{lem:fl}, each fluid limit
  $(\hat B, \hat F, \hat W, \hat Z)$ satisfies $\abs{Z(0)}=1$ and
  fluid model equations (\ref{eq:f1})-(\ref{eq:f7}).
  The fluid model has been shown stable in \cref{pro:1}
  under the load condition~\cref{eq:load-a}. Therefore the fluid limits are stable. By \cref{pro:2}, the DTMC is positive recurrent.
\end{proof}

\subsection{A Sketch of the Proof of Proposition~\ref{pro:parallel}}\label{sec:proof-pro-parallel}

  \begin{proof}
  The fluid limit technique for proving Theorem~1 continues to apply
  here. Let $(Z^k_p(t), Z_d^k(t))$ be the fluid request level at
  server $k$.  Let $A_k(t)$ be the cumulative amount of fluid request
  to server $k$ by time $t$. By SLLN, 
  \begin{align}
      \sum_{k} A_k(t) = \lambda t, \quad t\ge 0.
  \end{align}
  See, e.g, the proof of (11.13) of \cite{dai2020processing}.
  Define the (total) workload for server $k$
  \begin{align*}
    f^k(t) = (m_p+m_d) Z^k_p(t)+ m_d\, Z^k_d(t).
  \end{align*}
  It follows that 
  \begin{align*}
      f^k(t)=f^k(0) + A_k(t)(m^k_p+m^k_d)-(B^k_p(t)+B^k_d(t)) \quad t\ge 0.
  \end{align*}
  and for each regular point $t$,
  \begin{align}\label{eq:conserve}
      f^k(t)>0 \quad \text{implies}\quad \dot B^k_p(t)+\dot B^k_d(t)=b_{\max}/t_{b_{\max}}.
  \end{align}
Furthermore, it can be proved by SLLN that 
under the random assignment load-balancing algorithm,  
\begin{align}\label{eq:ra}
  A^k(t)=(\lambda/K) t, \quad t\ge 0,  \quad k=1, \ldots, K, 
\end{align}
and 
under join-the-shortest-queue (JSQ), at each
  regular point $t$, 
\begin{align}\label{eq:jsq}
 \dot A_k(t)=0 \quad \text{whenever} \quad   Z^k_p(t)+Z^k_d(t) > \min_{k'\neq k}(Z^{k'}_p(t)+Z^{k'}_d(t)).
\end{align}
Define the total system workload
  \begin{align*}
    F(t)=\sum_{k=1}^K f^k(t)
  \end{align*}
  to be the Lyapunov function.
We claim that there exists $\delta>0$ such that  at each regular point $t>0$ with $F(t)>0$, 
\begin{align} \label{eq:jsqdrift}
    \dot F(t)\le -\delta.
\end{align}
\cref{eq:jsqdrift} implies that $F(t)=0$ for $t\ge F(0)/\delta$
and the fluid model is stable.

 The rest of this proof is devoted to prove \cref{eq:jsqdrift}. 
  Let $\mathcal{M}(t)$ denote the set  of  servers with positive workload, i.e., those with $f^k(t)>0$.
  Since $t$ is a regular point, $\dot f^k(t)=0$ for each $k\not\in \mathcal{M}(t)$. Thus,
\begin{align*}
    \dot F(t)=\sum_{k\in \mathcal{M}(t)} \dot f^k(t)
  \end{align*}
  
  \noindent\textit{Proof \cref{pro:parallel} under random assignment algorithm}\;
  For each $k\in \mathcal{M}(t)$, it follows from \cref{eq:ra} and \cref{eq:conserve} that 
  \begin{align*}
    \dot f^k(t)= (m_p+m_d)\lambda/K- b_{\max}/t_{b_{\max}} <0.
  \end{align*}
  Since $F(t)>0$, the set $\mathcal{M}(t)$ is nonempty. Therefore, \cref{eq:jsqdrift} holds with 
  $\delta=b_{\max}/t_{b_{\max}} -(m_p+m_d)\lambda/K$.

 \noindent\textit{Proof \cref{pro:parallel} under JSQ algorithm}\; 
 If $\mathcal{M}(t)=\{1, \ldots, K\}$,
\begin{align*}
    \dot F(t) = (m_p+m_d)\lambda - K\cdot b_{\max}/t_{b_{\max}}.
  \end{align*}
If $\mathcal{M}(t) \neq \{1, \ldots, K\}$, there exists server $\ell$ whose
workload is zero. Therefore, we have $Z_p^\ell(t)=Z_d^\ell(t)=0$. For each 
server $k\in \mathcal{M}(t)$,  $Z_p^k(t)+Z_d^k(t)>0$. It follows from \cref{eq:jsq} that $\dot A_k(t)=0$ for each $k\in \mathcal{M}(t)$, which together with \cref{eq:conserve} implies 
\begin{align*}
    \dot F(t) = - \abs{\mathcal{M}(t)}b_{\max}/t_{b_{\max}}
    \le - b_{\max}/t_{b_{\max}}.
\end{align*}
Therefore, we have proved \cref{eq:jsqdrift} with 
\begin{align*}
    \delta = \min\Big( b_{\max}/t_{b_{\max}},  K\cdot b_{\max}/t_{b_{\max}}- (m_p+m_d)\lambda\Big).
\end{align*}
  \end{proof}

\section{Direct Lyapunov Proof of \Cref{thm:main} Part~(b)}\label{sec:proof-thm-main}

This appendix provides an alternative, self-contained proof of Part~(b) (stability) of \cref{thm:main} via the Foster--Lyapunov criterion, without invoking the fluid limit machinery. Unlike the fluid limit proof in \cref{sec:basic}, this argument uses only the work-conserving property and does not require the $K$-FCFS condition. However, it does not extend to the network settings of \cref{sec:AI-agent}.

\begin{proof}
    Assume $b_{\max} - \lambda (m_p+m_d) = \epsilon'.$ Let $\epsilon = \min(\epsilon', L\lambda)$, where $L := \E[v_d(1)^2]$. We construct the following Lyapunov function:
    \begin{align*}
        f(X) = \sum_{i \in [Q(X)]} \left(p_{i}' + d_{i}' + \frac{\epsilon (1(p_{i}'>0)+d_{i}')^2}{4L\lambda}\right).
    \end{align*}
    Let $\pi(X) = (\delta_{i}^{p}, \delta_{i}^{d})_{i\in Q(X)}$. Note that $1(p_{i}'>0)+d_{i}'$ will decrease by 1 only when $\delta_{i}^{p}=p_{i}'$ or $\delta_{i}^{d}=1$. Thus the decrease of the Lyapunov function is
    \begin{align*}
    	\Delta(X) = \sum_{i \in [Q(X)]} \left((\delta_{i}^{p} + \delta_{i}^{d}) + \frac{\epsilon (2(d_{i}^{'}+1(p_i'>0))-1)}{4L\lambda} 1(\delta_{i}^{p}=p_{i}' \text{ or } \delta_{i}^{d}=1) \right).
    \end{align*}
   Note that for the new arrivals, we have $\E[p_l+d_l] = m_p+m_d$ and $\E[(d_l+1)^2] \leq 2\E[d_l^2] \leq 2L$. Hence, the one-step change of the Lyapunov function is bounded by
    \begin{align*}
    \E[f(X(n+1))] - f(X(n)) \leq  \lambda (m_p+m_d) + \frac{2\epsilon L\lambda }{4L\lambda } - \Delta(X(n)).
\end{align*}
    We aim to show that once $f(X(n)) \geq B$ for a large enough $B$, we have
    \begin{align}
    \E[f(X(n+1))] - f(X(n)) \leq \lambda (m_p+m_d) + \frac{\epsilon}{2} - \Delta(X(n)) \leq - \frac{\epsilon}{2}. \label{eq:Lyapunov}
    \end{align}

If so, we can invoke the classical Foster--Lyapunov criterion (Theorem 4.3.1 in \citep{xu2023drift}) to conclude that $X(n)$ is positive recurrent.

To show \cref{eq:Lyapunov} holds, note that if $\sum_{i \in [Q(X)]} p_{i}' +  \sum_{i \in [Q(X)]} \mathbf{1}(p_{i}'=0) \geq b_{\max}$, then
$$
\Delta(X(n)) \geq  \sum_{i \in [Q(X)]} (\delta_{i}^{p} + \delta_{i}^{d}) = b_{\max}
$$
by the work-conserving policy. Thus \cref{eq:Lyapunov} holds:
\begin{align*}
 \E[f(X(n+1))] - f(X(n)) \leq \lambda (m_p+m_d) + \frac{\epsilon}{2} - b_{\max} \leq  \frac{\epsilon}{2}  - \epsilon' \leq -\frac{\epsilon}{2}.
\end{align*}

Now consider $\sum_{i \in [Q(X)]} p_{i}' + \sum_{i \in [Q(X)]} \mathbf{1}(p_{i}'=0) < b_{\max}$, and $f(X(n)) \geq B > b_{\max}$. In this setting, every request $i$ in the queue will have either $\delta_{i}^{p}=p_{i}'$ or $\delta_{i}^{d}=1$; otherwise the work-conserving assumption is violated.

The key intuition is: when $f(X(n))$ is large enough, there exists a request with sufficiently large decoding length $\tilde{d}$. This request will be served and the decrease $\Delta(X)$ will be lower bounded by $\frac{\epsilon(2\tilde{d}-1)}{4L\lambda} \geq b_{\max}$, which suffices for \cref{eq:Lyapunov}.

In particular, let $B := 2b_{\max}\left(\frac{4L\lambda}{\epsilon}b_{\max} \right)^2 + b_{\max}$ and $\tilde{d} := \max_{i \in [Q(X)]} d_i'.$ We have
\begin{align*}
f(X(n)) \geq B
&\implies \sum_{i \in [Q(X)]}  \left(p_{i}' + d_{i}' + \frac{\epsilon (d_i'+1)^2}{4L\lambda}\right) \geq B\\
&\overset{(i)}{\implies}  \sum_{i \in [Q(X)]}  d_{i}' + \frac{\epsilon (d_i')^2}{L\lambda} \geq B - b_{\max}\\
&\overset{(ii)}{\implies} \tilde{d} + \frac{\epsilon \tilde{d}^2}{L\lambda} \geq \frac{(B-b_{\max})}{b_{\max}}\\
&\overset{(iii)}{\implies} 2\tilde{d}^2 \geq \frac{(B-b_{\max})}{b_{\max}}\\
&\implies \tilde{d} \geq \frac{4L\lambda}{\epsilon} b_{\max}
\end{align*}
where (i) uses $\sum_{i \in [Q(X)]} p_{i}' < b_{\max}$ and $d_i'+1 \leq 2d_{i}'$ for $d_i'\geq 1$; (ii) uses $Q(X) < b_{\max}$ (since each request contributes at least one token) and the pigeonhole principle; (iii) uses $d_{i}' \leq (d_{i}')^2$ and $\epsilon \leq L\lambda$.

Next, $\tilde{d}$ provides a lower bound for $\Delta(X)$ because all requests satisfy $\delta_{i}^{p}=p_{i}'$ or $\delta_{i}^{d}=1$:
\begin{align*}
\Delta(X(n)) \geq   \frac{\epsilon (2\tilde{d}-1)}{4L\lambda} \geq \frac{\epsilon \tilde{d}}{4L\lambda}.
\end{align*}

Then when $f(X(n)) \geq B$, we have
\begin{align*}
  \Delta(X(n)) \geq  \frac{\epsilon \tilde{d}}{4L\lambda} \geq   \frac{\epsilon}{4L\lambda} \frac{4L\lambda}{\epsilon} b_{\max} = b_{\max}.
\end{align*}
Thus \cref{eq:Lyapunov} holds and the proof is complete.
\end{proof}

\section{Proof of \Cref{thm:fork-join} (Fork-Join Network)}
\label{sec:proof-fork-join}

This appendix proves \cref{thm:fork-join}. The proof builds on the fluid limit machinery of \cref{sec:basic} and applies the \emph{bounded overtaking} property of $K$-FCFS scheduling (\cref{lem:bounded-overtaking}) to control the join buffer sizes.

\subsection{Network model and notation}

Consider $k+1$ LLM servers $q_1,\ldots,q_{k+1}$ arranged in a fork-join topology: requests arrive at server~$q_1$, fork to servers~$q_2,\ldots,q_k$ in parallel, and rejoin at server~$q_{k+1}$ after all sub-tasks complete.

\begin{description}
  \item[$E(n)$] Cumulative external arrivals by time $n$.
  \item[$D_s(n)$] Cumulative departures from server~$q_s$ by time~$n$, for $s \in \{1,\ldots,k+1\}$.
  \item[$J_i(n)$] Join buffer for server~$q_i$: the number of requests whose sub-task at $q_i$ has completed but that are still waiting for other sub-tasks, for $i\in\{2,\ldots,k\}$.
  \item[$R(n)$] Cumulative releases from the join by time~$n$ (= cumulative arrivals to $q_{k+1}$).
\end{description}

The flow-balance equations are:
\begin{align}
Q_1(n) &= Q_1(0) + E(n) - D_1(n), \label{eq:fj-q1-gen}\\
Q_s(n) &= Q_s(0) + D_1(n) - D_s(n), \quad s \in \{2,\ldots,k\}, \label{eq:fj-qs-gen}\\
J_i(n) &= J_i(0) + D_i(n) - R(n), \quad i \in \{2,\ldots,k\}, \label{eq:fj-ji-gen}\\
Q_{k+1}(n) &= Q_{k+1}(0) + R(n) - D_{k+1}(n). \label{eq:fj-qk1-gen}
\end{align}

\subsection{Bounded overtaking and join buffer bound}

\begin{lemma}[Join buffer bound]\label{lem:join-bound-gen}
Under $K$-FCFS scheduling at each fork server $q_2,\ldots,q_k$, the total join buffer is uniformly bounded:
\[
\sum_{i=2}^{k} J_i(n) \le 2K'(k-1) \quad \text{for all } n\ge 0,
\]
where $K'$ is the bounded overtaking constant from \cref{lem:bounded-overtaking}.
\end{lemma}

\begin{proof}
Let $C_i(r)$ denote the position of request~$r$ in the completion order at fork server~$q_i$. By \cref{lem:bounded-overtaking}, $|C_i(r)-r|\le K'$ for each $i$ and~$r$, since all sub-tasks of a given request arrive at the fork servers in the same order (all forked from~$q_1$ in arrival order).

For any two fork servers~$q_i$ and~$q_\ell$, the triangle inequality gives $|C_i(r)-C_\ell(r)|\le 2K'$. Request~$r$ is released from the join when the last fork server completes its sub-task, i.e., when $\max_{i} C_i(r)$ completions have occurred. At the time request~$r$'s sub-task at~$q_i$ completes, at most $2K'$ additional completions at any other fork server~$q_\ell$ are needed before $r$ is released.

To bound the total join buffer: at any time~$n$, each ``waiting'' request~$r$ in $J_i(n)$ has completed at $q_i$ but is waiting for some $q_\ell$. The number of such requests is bounded by the gap $|C_i(r)-C_\ell(r)|\le 2K'$ between completion orders. Summing over all $k-1$ fork servers gives the bound $2K'(k-1)$.
\end{proof}

\subsection{Fluid limits}

Define fluid-scaled processes for initial state~$x$ with $|x|=\sum_{s=1}^{k+1}Q_s(0)+\sum_{i=2}^k J_i(0)$:
\[
\hat E^x(t) = \frac{E(|x|t)}{|x|}, \quad
\hat D_s^x(t) = \frac{D_s(|x|t)}{|x|}, \quad
\hat Q_s^x(t) = \frac{Q_s(|x|t)}{|x|}, \quad
\hat J_i^x(t) = \frac{J_i(|x|t)}{|x|}, \quad
\hat R^x(t) = \frac{R(|x|t)}{|x|}.
\]

\begin{lemma}[Fluid limit properties]\label{lem:fj-fluid-gen}
With probability one, for each unbounded sequence of initial states, there exists a subsequence along which all scaled processes converge uniformly on compact sets to Lipschitz-continuous limit functions
\begin{align*}
    (\hat E, \hat D, \hat Q, \hat J, \hat R)
\end{align*}
with
\begin{align}
\hat E(t) & =\lambda t \\
\hat J_i(t) &= 0, \quad i=2,\ldots,k, \label{eq:fj-fl-vanish}\\
\hat R(t) &= \hat D_i(t) \quad \text{for each } i=2,\ldots,k. \label{eq:fj-fl-rate}
\end{align}
The fluid model equations for each individual server follow from the single-server analysis in \cref{sec:basic}.
\end{lemma}

\begin{proof}
By \cref{lem:join-bound-gen}, $\sum_i J_i(n)\le 2K'(k-1)$ for all~$n$. Dividing by $|x|\to\infty$ gives $\sum_i \hat J_i(t)=0$ in the limit. Since $\hat J_i(t)\ge 0$, each $\hat J_i(t)=0$. Combining with the scaled flow-balance equation $\hat J_i(t) = \hat J_i(0) + \hat D_i(t) - \hat R(t)$ and $\hat J_i(0)=0$ yields \cref{eq:fj-fl-rate}.
\end{proof}

\subsection{Fluid model stability}

\begin{proof}[Proof of \cref{thm:fork-join}]
We show fluid model stability, which implies positive recurrence by the argument in \cref{sec:stability}. Define the per-server Lyapunov function $f^s(t) = (m^s_p+m^s_d)\,Z^s_p(t) + m^s_d\,Z^s_d(t)$ for server~$q_s$, where $Z^s_p$ and $Z^s_d$ are the fluid-level prefill and decode request counts. Write $\delta_s = c_s - \lambda(m^s_p+m^s_d) > 0$ for the slack at server~$q_s$.

\textbf{Step 1: Server $q_1$ drains.}
Server~$q_1$ receives only external arrivals at rate~$\lambda$. When $f^1(t)>0$, work conservation gives $\dot f^1(t) = \lambda(m^1_p+m^1_d) - c_1 = -\delta_1 < 0$. By Lemma~8.5 of~\citep{dai2020processing}, $f^1(t)=0$ for $t\ge T_1 = f^1(0)/\delta_1$.

\textbf{Step 2: Fork servers $q_2,\ldots,q_k$ drain.}
For $t\ge T_1$, server~$q_1$ is in balance: departures occur at rate~$\lambda$, so each fork server receives arrivals at rate~$\lambda$. Before $T_1$, the fork servers may accumulate work from $q_1$'s initial drain, but $f^s(T_1)$ is bounded for each~$s$ (in the fluid model, $|Z(0)|\le 1$ and total departures from $q_1$ over $[0,T_1]$ are bounded by $c_1 T_1$). When $f^s(t)>0$, $\dot f^s(t) = \lambda(m^s_p+m^s_d) - c_s = -\delta_s < 0$. Therefore $f^s(t)=0$ for $t\ge T^s_2 = T_1 + f^s(T_1)/\delta_s$.

\textbf{Step 3: Join releases at rate $\lambda$.}
Let $T_2 = \max_{s\in\{2,\ldots,k\}} T^s_2$. For $t\ge T_2$, all fork servers are in balance with departure rate~$\lambda$. By \cref{eq:fj-fl-vanish}, the join buffers vanish in the fluid limit, so $\hat R(t) = \hat D_i(t)$ for each~$i$. Therefore the release rate $\dot R(t) = \lambda$ for $t\ge T_2$.

\textbf{Step 4: Server $q_{k+1}$ drains.}
For $t\ge T_2$, server~$q_{k+1}$ receives arrivals at rate~$\lambda$. When $f^{k+1}(t)>0$, $\dot f^{k+1}(t) = \lambda(m^{k+1}_p+m^{k+1}_d) - c_{k+1} = -\delta_{k+1} < 0$. Therefore $f^{k+1}(t)=0$ for $t\ge T_3 = T_2 + f^{k+1}(T_2)/\delta_{k+1}$.

All fluid queues drain by time~$T_3$, proving fluid stability. Positive recurrence follows by the argument in \cref{sec:stability}.
\end{proof}

\section{Proof of \Cref{convex_hull} (Convex Hull)}
\label{sec:proof-convex-hull}

\begin{proof}[Proof of \cref{convex_hull}]
We prove the theorem in three steps: first establish a key monotonicity lemma, then verify the four vertices, and finally show that every feasible point lies in the convex hull.

\begin{lemma} \label{lem:monotonicity}
Let \( t_s = c + a \cdot \lceil s/b_0 \rceil \) with \( a > 0 \), \( c \ge 0 \), and \( b_0 > 0 \). Then for any integers \( x \ge 0 \), \( y \ge 0 \) with \( x + b_0 \le b_{\max} \),
\[
\frac{x + b_0}{t_{x + y + b_0}} \ge \frac{x}{t_{x + y}}.
\]
\end{lemma}

\begin{proof}
Let \( T = x + y \). Since \( \lceil (T + b_0)/b_0 \rceil = \lceil T/b_0 \rceil + 1 \), we have \( t_{T + b_0} = t_T + a \). Cross-multiplying, the desired inequality is equivalent to
\[
(x + b_0)\, t_T \ge x\, (t_T + a) \quad \Longleftrightarrow \quad b_0\, t_T \ge x\, a.
\]
Since \( \lceil T/b_0 \rceil \ge \lceil x/b_0 \rceil \ge x/b_0 \), we obtain
\[
b_0\, t_T = b_0\, c + a\, b_0 \lceil T/b_0 \rceil \ge b_0\, c + a\, x \ge a\, x,
\]
where the last inequality uses \( c \ge 0 \). This completes the proof.
\end{proof}

We now verify each vertex of the convex hull.

\paragraph{Vertex $A$.} The point \( A = \bigl( b_{\max}/t_{b_{\max}},\, 0 \bigr) \) corresponds to allocating all tokens to prefill. By \cref{lem:monotonicity} (with \( y = 0 \)), the ratio \( x/t_x \) satisfies \( (x + b_0)/t_{x+b_0} \ge x/t_x \) for all \( x \le b_{\max} - b_0 \). Applying this inductively from \( x = 0 \) in steps of \( b_0 \), and using \( b_0 \mid b_{\max} \), we conclude that \( b_{\max}/t_{b_{\max}} \) is the maximum achievable prefill rate. Hence \( A \) is a vertex.

\paragraph{Vertex $D$.} The point \( D = \bigl( 0,\, k_{\max}/t_{k_{\max}} \bigr) \) corresponds to decode-only batches using all \( k_{\max} \) slots. Since \( k_{\max} \le b_0 \), we have \( \lceil y/b_0 \rceil = 1 \) for all \( 1 \le y \le k_{\max} \), so \( t_y = c + a \) is constant in this range. Thus \( y/t_y = y/(c + a) \) is increasing in \( y \), and the decode rate is maximized at \( y = k_{\max} \). Hence \( D \) is a vertex.

\paragraph{Vertices $B$ and $C$.} These represent mixed batches with \( y = k_{\max} - 1 \) decode tokens at different prefill loads:
\begin{align*}
    B &= \Bigl(\frac{b_{\max} - k_{\max} + 1}{t_{b_{\max}}},\, \frac{k_{\max} - 1}{t_{b_{\max}}}\Bigr), &
    C &= \Bigl(\frac{b_0 - k_{\max} + 1}{t_{b_0}},\, \frac{k_{\max} - 1}{t_{b_0}}\Bigr).
\end{align*}

\paragraph{Every feasible point lies in the convex hull.}
Consider any feasible operating point \( (x/t_{x+y},\, y/t_{x+y}) \) with \( x + y \le b_{\max} \) and \( y \le k_{\max} - 1 \). Let \( m \) be the unique integer such that \( (m-1)b_0 < x + y \le m b_0 \le b_{\max} \). Then \( t_{x+y} = t_{m b_0} = c + am \), and since \( x \le m b_0 - y \),
\[
\frac{x}{t_{x+y}} \le \frac{m b_0 - y}{t_{m b_0}}, \qquad \frac{y}{t_{x+y}} \le \frac{y}{t_{m b_0}}.
\]
Thus the point is dominated (componentwise) by
\[
P_m = \Bigl( \frac{m b_0 - y}{t_{m b_0}},\, \frac{y}{t_{m b_0}} \Bigr).
\]
It suffices to show \( P_m \in \mathrm{conv}(O, A, B, C, D) \) for all valid \( m \) and \( 0 \le y \le k_{\max} - 1 \).

\textbf{Step 1.} For each fixed \( m \), define
\[
A_m = \Bigl( \frac{m b_0}{t_{m b_0}},\, 0 \Bigr), \qquad Q_m = \Bigl( \frac{m b_0 - k_{\max} + 1}{t_{m b_0}},\, \frac{k_{\max} - 1}{t_{m b_0}} \Bigr).
\]
Then \( P_m \) is a convex combination of \( A_m \) and \( Q_m \):
\[
P_m = \Bigl(1 - \frac{y}{k_{\max} - 1}\Bigr) A_m + \frac{y}{k_{\max} - 1}\, Q_m.
\]
By \cref{lem:monotonicity}, the sequence \( m b_0 / t_{m b_0} \) is nondecreasing in \( m \), so \( A_m \) lies on the segment \( \overline{OA} \) and hence in the convex hull. It remains to show \( Q_m \) lies in the convex hull.

\textbf{Step 2.} We show that \( Q_m \) lies on or below the line segment \( \overline{BC} \) for all \( 1 \le m \le b_{\max}/b_0 \). Note that \( Q_1 = C \) and \( Q_{b_{\max}/b_0} = B \), so the boundary cases hold with equality. For \( 1 < m < b_{\max}/b_0 \), let \( q = b_{\max}/b_0 \). The \( x \)-coordinates of \( B \), \( C \), and \( Q_m \) are
\[
x_B = \frac{q b_0 - k_{\max} + 1}{c + aq}, \quad x_C = \frac{b_0 - k_{\max} + 1}{c + a}, \quad x_{Q_m} = \frac{m b_0 - k_{\max} + 1}{c + am}.
\]
By \cref{lem:monotonicity}, \( x_{Q_m} \) is increasing in \( m \), so \( x_C \le x_{Q_m} \le x_B \). Thus there exists a unique \( \alpha \in [0,1] \) with \( x_{Q_m} = \alpha\, x_B + (1 - \alpha)\, x_C \).

The \( y \)-coordinate of \( Q_m \) is \( (k_{\max} - 1)/(c + am) \), while the corresponding point on \( \overline{BC} \) has \( y \)-coordinate
\[
y_{\mathrm{line}} = (k_{\max} - 1) \Bigl( \frac{\alpha}{c + aq} + \frac{1 - \alpha}{c + a} \Bigr).
\]
Define \( f(m) = 1/(c + am) \). Since \( a > 0 \), we have \( f''(m) = 2a^2/(c + am)^3 > 0 \), so \( f \) is strictly convex. By Jensen's inequality,
\[
f(m) < \alpha\, f(q) + (1 - \alpha)\, f(1)
\]
for \( 1 < m < q \), which gives
\[
\frac{k_{\max} - 1}{c + am} < (k_{\max} - 1) \Bigl( \frac{\alpha}{c + aq} + \frac{1 - \alpha}{c + a} \Bigr).
\]
Hence the \( y \)-coordinate of \( Q_m \) is strictly below \( y_{\mathrm{line}} \), confirming \( Q_m \) lies strictly below \( \overline{BC} \) and therefore inside \( \mathrm{conv}(O, A, B, C, D) \).

\textbf{Step 3: Decode-only batches.} Finally, consider decode-only batches with \( x = 0 \) and \( y = k_{\max} \). The operating point is \( (0, k_{\max}/t_{k_{\max}}) = D \), which is a vertex of the convex hull.

\textbf{Conclusion.} Every feasible point \( (x/t_{x+y},\, y/t_{x+y}) \) with \( x + y \le b_{\max} \) and \( y \le k_{\max} \) is dominated by a point in \( \mathrm{conv}(O, A, B, C, D) \). Conversely, each vertex is achievable by an appropriate batch composition, so the convex hull is tight.
\end{proof}

\end{document}